\xpatchcmd{\proof}{.}{\proofpunctuation}{}{}
\xpatchcmd{\proof}{\itshape}{\prooffont}{}{}
\newtheorem{theorem}{Theorem}
\newtheorem{corollary}{Corollary}
\newtheorem{lemma}{Lemma}
\newtheorem{assumption}{Assumption}
\newcommand{\proofpunctuation}{:}
\newcommand{\prooffont}{\bfseries}
\DeclareMathOperator*{\argmax}{argmax}
\newcommand{\prob}[1]{\mathbb{P} \left( #1 \right)}
\newcommand{\defeq}{\mathrel{\mathop:}=}
\newcommand{\indicator}[1]{\mathbbm{1} \left\{ #1 \right\}}
\newcommand{\pquantile}[2]{\theta_{#1} \left( #2 \right)}
\newcommand{\ceiling}[1]{\left \lceil #1 \right \rceil}
\newcommand{\flooring}[1]{\left \lfloor #1 \right \rfloor }
\newcommand{\bigparentheses}[1] {\left( #1 \right)}
\newcommand{\medthta}[1]{\theta_{\frac{1}{2}} \left( #1 \right)}
\newcommand{\samplesetcom}[3]{ \{ #1 \} _{#2}^{#3}}
\newcommand{\expectation}[1]{\mathbb{E}\left[ #1 \right]}
\begin{document}
 \pdfinfo{
/Title (Robust Stochastic Bandit Algorithms under Probabilistic Unbounded Adversarial Attack)
/Author (Ziwei Guan, Kaiyi Ji, Donald J. Bucci Jr., Timothy Y. Hu,Joseph Palombo, Michael Liston, Yingbin Liang)
} 

\setcounter{secnumdepth}{2} 
\setlength\titlebox{2.5in} 
\title{Robust Stochastic Bandit Algorithms under Probabilistic \\Unbounded Adversarial Attack}

\author{
	Ziwei Guan\textsuperscript{\rm 1}, Kaiyi Ji\textsuperscript{\rm 1},  Donald J. Bucci Jr. \textsuperscript{\rm 2}, Timothy Y. Hu \textsuperscript{\rm 2},\\
	{\bf \Large  Joseph Palombo\textsuperscript{\rm 2}, Michael Liston\textsuperscript{\rm 2},Yingbin Liang\textsuperscript{\rm 1} }\\
\textsuperscript{\rm1} The Ohio State University, ECE Department\\
2015 Neil Ave, Columbus, OH 43210\\
\textsuperscript{\rm2} 	Lockheed Martin Advanced Technology Laboratories\\
Cherry Hill, NJ, 08002, USA\\
}

\maketitle

\begin{abstract}
The multi-armed bandit formalism has been extensively studied under various attack models, in which an adversary can modify the reward revealed to the player. Previous studies focused on scenarios where the attack value either is bounded at each round or has a vanishing probability of occurrence. These models do not capture powerful adversaries that can catastrophically perturb the revealed reward. This paper investigates the attack model where an adversary attacks with a certain probability at each round, and its attack value can be arbitrary and unbounded if it attacks. Furthermore, the attack value does not necessarily follow a statistical distribution. We propose a novel sample median-based and exploration-aided UCB algorithm (called med-E-UCB) and a median-based $\epsilon$-greedy algorithm (called med-$\epsilon$-greedy). Both of these algorithms are provably robust to the aforementioned attack model. More specifically we show that both algorithms achieve $\mathcal{O}(\log T)$ pseudo-regret (i.e., the optimal regret without attacks). We also provide a high probability guarantee of $\mathcal{O}(\log T)$ regret with respect to random rewards and random occurrence of attacks. These bounds are achieved under arbitrary and unbounded reward perturbation as long as the attack probability does not exceed a certain constant threshold. We provide multiple synthetic simulations of the proposed algorithms to verify these claims and showcase the inability of existing techniques to achieve sublinear regret. We also provide experimental results of the algorithm operating in a cognitive radio setting using multiple software-defined radios.
\end{abstract}

\section{Introduction}
Stochastic multi-armed bandit models capture the scenarios where a player devises a strategy in order to access the optimal arm as often as possible. Such models have been used in a broad range of applications including news article recommendation~\cite{li2010contextual}, online advertising~\cite{pandey2007bandits}, medical treatment allocation~\cite{kuleshov2014algorithms}, and adaptive packet routing~\cite{awerbuch2004adaptive}. As security concerns have a critical impact in these applications, stochastic multi-armed bandit models under adversarial attacks have attracted  extensive attention. A variety of attack models have been studied under the multi-armed bandit formalism. Below we briefly summarize major models that are relevant to our study.

\begin{list}{-}{\topsep=0.ex \leftmargin=3mm \rightmargin=0.in \itemsep =-0.035in}
\item The adversarial multi-armed bandit model, in which an adversary is allowed to attack in each round with each attack subject to a bounded value. \cite{auer2002nonstochastic} proposed a robust EXP3 algorithm as a defense algorithm and \cite{audibert2009minimax,stoltz2005incomplete,bubeck2012regret} further provided tighter bounds. \cite{jun2018adversarial,liu2019data} showed that a small total attack cost of $\mathcal{O}(\log T)$ makes UCB and $\epsilon$-greedy algorithms fail with regret $\mathcal{O}(T)$.

\item The budget-bounded attack model, in which an adversary has a total budget of attack value but can choose to attack only over some time instances. The aim of defense is to achieve a regret that gradually transits between adversarial (the above always attack) and stochastic (never attack) models. \cite{lykouris2018stochastic} provided a variety of such robust algorithms and \cite{DBLP:journals/corr/abs-1902-08647} further developed an algorithm that improved the regret in \cite{lykouris2018stochastic}. 

\item The fractional attack model, in which the total rounds that an adversary attacks is limited either by probability that the adversary can attack or by the ratio of attacked rounds to total rounds. The attack value at each round is also subject to a bounded value. \cite{kapoor2019corruption} proposed a robust RUCB-MAB algorithm, which uses sample median to replace sample mean in UCB algorithm. \cite{seldin2014one} proposed an EXP3-based algorithm which can achieve optimality in both stochastic and adversarial cases.

\item The heavy-tail outlier model, in which the observed reward can have heavy-tail values, whose distribution has bounded first moment and unbounded second moment. \cite{bubeck2013bandits} proposed a robust Cantoni UCB algorithm that can defend against such heavy-tail outliers.
\end{list}

We observe that all of the above adversarial models assume that the attack value (i.e., the adversarial cost) either is bounded or has vanishing probability of occurrence. In this paper, we study an adversarial attack model where the attack value can be {\em arbitrary} and {\em unbounded}. To elaborate further, {\em an arbitrary attack value} allows flexible and adaptive attack strategies, which may not follow a probabilistic distribution. {\em Unboundedness} allows arbitrarily large attack values to occur with constant probability. Under such an attack model, it is impossible to defend if the adversary can attack at each round. Thus, we assume that the adversary attack with a fixed probability $\rho$ at each round (as justified in \cite{kapoor2019corruption,altschuler2018best}). 

Such an attack model turns out to be quite challenging due to arbitrary and unbounded attack values. As we demonstrate in Section \ref{sec:exp}, the existing popular (robust) algorithms fail to defend, even when attack values are not substantially large all the time. These algorithms include (a) vanilla UCB and $\epsilon$-greedy, which are mean-based and clearly are vulnerable under arbitrarily large attack; (b) EXP3, designed to succeed typically under bounded attack values; (c) Cantoni UCB for heavy-tail bandit \cite{bubeck2013bandits}, which requires large valued outliers to occur with asymptotically small probability; (d) RUCB-MAB \cite{kapoor2019corruption}, also designed to succeed typically under bounded attack values.

The contribution of this paper lies in proposing two novel robust median-based bandit algorithms to defend from arbitrary and unbounded attack values, and furthermore developing sharp bounds of their regret performance. 	
\subsection{Our Contributions}
We summarize our contributions as follows. 
\begin{list}{$\bullet$}{\topsep=0.1ex \leftmargin=0.15in \rightmargin=0.in \itemsep =-0.02in}
\item We propose a novel {\em \bf median-based exploration-aided UCB algorithm (med-E-UCB)} by incorporating a diminishing number of periodic exploration rounds. In contrast to RUCB-MAB  in \cite{kapoor2019corruption} (which directly replaces sample mean in vanilla UCB by sample median), our med-E-UCB adds a small amount of exploration, which turns out to be critical for maintaining logarithmic regret. We further propose a {\bf median-based $\epsilon$-greedy algorithm (med-$\epsilon$-greedy)}, for which logarithmic regret is achieved without additional exploration.  
\item For both med-E-UCB and med-$\epsilon$-greedy, we show that, even under arbitrary and unbounded attacks, they achieve an optimal $\mathcal{O}(\log T)$ pseudo-regret bound for no attack scenarios, as long as the attack probability $\rho$ does not exceed a certain constant threshold. This allows the number of attacks to scale linearly. We also provide a high-probability analysis with respect to both the randomness of reward samples and the randomness of attacks, so that $\mathcal{O} ( \log T )$ regret is guaranteed under almost all trajectory.

\item We develop a new analysis mechanism to deal with the technical challenge of incorporating the sample median into the analysis of bandit problems.  Direct use of the existing concentration bounds on the sample median does not provide a guarantee of $\mathcal{O} ( \log T )$ regret for our algorithms. In fact, it turns out to be nontrivial to maintain the concentration of sample median (which requires sufficient exploration of each arm) and at the same time best control the exploration to keep the scaling of the regret at the $\mathcal{O}(\log T)$ level. Such an analysis provides insight for us to design exploration procedure in med-E-UCB.

\item  We provide the synthetic demonstrations and experimental radio results from a realistic cognitive radio setting that demonstrate the robustness of the proposed algorithms and verify their $\mathcal{O}(\log T)$ regret under large valued attacks. We demonstrate in both sets of experiments that existing algorithms fail to achieve logarithmic regret under the proposed attack model.
\end{list}

\noindent All the technical proofs of the theorems in the paper can be found in the full version of this work posted on arXiv.
	
\subsection{Related Works}	
	\textbf{Stochastic vs adversarial multi-armed bandit.} Under an adversarial bandit model where attacks occur at each round, \cite{jun2018adversarial,liu2019data} showed that UCB and $\epsilon$-greedy fail with regret of $\mathcal{O}(T)$ while sustaining only a small total attack cost of $\mathcal{O}(\log T)$. Then, \cite{bubeck2012best,seldin2014one,auer2016algorithm,seldin2017improved,zimmert2018optimal,DBLP:journals/corr/abs-1902-08647} designed robust algorithms that achieve $\mathcal{O}(\sqrt{T})$ regret for an adversarial bandit and $\mathcal{O}(\log T)$ for stochastic bandit without attack. Furthermore, \cite{lykouris2018stochastic,DBLP:journals/corr/abs-1902-08647} provided robust algorithms and characterized the regret under the model where the fraction of attacks ranging from always-attacking to never-attacking. \cite{kapoor2019corruption} proposed a median-based UCB algorithm, and derived the same type of regret for a similar but probabilistic model, where the adversary attacks at each round with a certain probability. Other similar intermediate models were also studied in \cite{zimmert2018optimal}, where either the ratio of attacked rounds to total rounds or the value of attacks are constrained. All these studies assume a bounded range for the attack value at each round, whereas our study allows arbitrary and unbounded attack values.
	
\cite{bubeck2013bandits} proposed Cantoni UCB algorithm for a heavy-tail bandit, and showed that the algorithm can tolerate outlier samples. Though their heavy-tail distributions allow outliers to occur with fairly high probability as compared to sub-Gaussian distributions, our adversarial model is much more catastrophic. It allows the attack distribution to have an unbounded mean, whereas the heavy-tail distribution still requires a finite mean and certain higher order moments. For example, under our attack model, an adversary can attack only the optimal arm with probability $\rho$ by subtracting a sufficiently large constant $c$, so that the optimal arm no longer has the largest sample mean. Consequently, Cantoni UCB fails with the regret increasing linearly with $T$. The experiment in Section \ref{sec:exp} also shows that Cantoni UCB fails under our attack model. 

\textbf{Median-based robust algorithms.} The sample median is well known to be more robust than the sample mean in statistics~\cite{tyler2008robust,zhang2016provable}. Hence, the sample median has been used in a variety of contexts to design robust algorithms in multi-armed bandit problems \cite{altschuler2018best}, parameter recovery in phase retrieval \cite{zhang2016median}, and regression \cite{pmlr-v75-klivans18a}. In this paper, the analysis methods are novel and we provide high probability guarantees of $\mathcal{O}(\log T)$ regret (rather than just in average).  We also note that the RUCB-MAB \cite{kapoor2019corruption} directly replaces the sample mean by the sample median in UCB, which as shown in \Cref{sec:exp} fails to defend against arbitrary and unbounded attack.

	\section{Problem Formulation}\label{section: notation}

  Consider a $K$-armed bandit, where each arm $i$ exhibits a stationary reward distribution with a cumulative probability distribution function (CDF) $F_i$ and mean $\mu_i$. Denote the arm with the maximum mean reward as $i^*$, and assume that it is unique. Let $\mu^* = \mu_{i^*}$ represent the maximum mean reward and $\Delta_i = \mu^* - \mu_i$ for all $i\neq i^*$. Throughout the paper, we assume that $\mu_i$ for $i =1,... ,K$ are fixed constants and do not scale with the number of pulls $T$.
	
	At each round $t$, the player can pull any arm, $i\in \{1,..., K\}$. Then the bandit generates a reward $\tilde{X}_{i,t}$ according to the distribution $F_i$.
	
	There exists an adversary, who decides to attack with probability $0<\rho<1$, independently of the history of arm pulls by the player. If the adversary attacks, it adds an attack value $\eta_{t}$ to the reward so that the player observes a perturbed reward $X_{i,t} = \tilde{X}_{i,t} +\eta_{t}$; If the adversary does not attack, the player observes a clean reward $X_{i,t} = \tilde{X}_{i,t}$. That is,
	\begin{align}
	X_{i,t} = \begin{cases} \tilde{X}_{i,t} +\eta_{t}, \quad & \text{with probability } \rho; \\
	\tilde{X}_{i,t}, \quad & \text{with probability } 1-\rho. \end{cases} \label{eq:bandit_prob}
	\end{align}
	
	We emphasize that in our attack model, with a constant probability $\rho>0$, the attack value $\eta_t$ can be arbitrarily large. Furthermore, the realizations of $\eta_t$ do not follow any statistical model, which is much more catastrophic than the typical heavy-tail distributions\cite{bubeck2013bandits}. Since attack values can be arbitrary, our attack model allows the adversary to {\em adaptively} design its attack strategy based on reward history and distributions, as long as it attacks with probability $\rho$.
	
	For a bandit algorithm, we define the {\em pseudo-regret} as
	\begin{align}
	\bar{R}_T=\mu^*T-\expectation{\sum_{t=1}^T \mu_{I_t}},
	\end{align}
	where $I_t$ denotes the index of the arm pulled at time $t$, and the expectation is over both stochastic rewards as well as the random occurrence of attacks. It measures how the reward obtained by the algorithm deviates from that received by the optimal strategy in expectation. Furthermore, in practical adversarial scenarios, it is of great interest to characterize the regret in  reward trajectory. Thus, we define the following stronger notion of the {\em regret}
	\begin{align}
	R_T=\mu^*T-\sum_{t=1}^T \mu_{I_t}.
	\end{align}
	 Here, $R_T$ is a random variable with respect to the random occurrence of attacks and reward values, and in general is a function of attack values.
	
	The goal of this paper is to design algorithms that minimize the pseudo-regret and more importantly minimize the regret with high probability. More importantly, the latter condition guarantees robust operation over almost all reward trajectories.
	
	\textbf{Notations:} 
	For a given cumulative distribution function (CDF) $F$, we define its generalized $p$-quantile (where $0<p<1$)  function as $\theta_p(F) = \inf \{x\in \mathbb{R}: F(x) \ge p \}$. For a sample sequence $\{x_i\}_{i=1}^m$, let $\hat{F}$ be its empirical distribution. Then let $\theta_p(\{x_i\}_{i=1}^m)$ be the $p$-quantile of  $\hat{F}$, i.e., $ \theta_p(\{x_i\}_{i=1}^m)= \theta_p(\hat{F})$. If $p=1/2$, we obtain the median of the sequence given by $\mathrm{med}(\{x_i\}_{i=1}^m):=\theta_{1/2}(\{x_i\}_{i=1}^m)= \theta_{1/2}(\hat{F})$.
	
	We use $\mathcal{N}(\mu, \sigma^2)$ to denote a Gaussian distribution with mean $\mu$ and variance $\sigma^2$, and use $\Phi(\cdot)$ to denote the CDF of the standard Gaussian distribution $\mathcal{N}(0,1)$.
	
	In this paper, $\ceiling{\cdot}$ denotes the nearest bigger integer, $\lfloor \cdot \rfloor$ denotes the nearest smaller integer, and $[K]$ represents the set $\{1,2, ..., K\}$. Furthermore, $y = \mathcal{O}(f(x))$ represents that there exists constants $M>0, \zeta>0$ such that $y\le Mf(x)$ for all $x\ge \zeta$.  And $\log(\cdot)$ denotes the natural logarithm with the base $e$. For a differentiable function $f$, we write its derivative as $f'$.
	
\section{Median-based and Exploration-aided UCB}

In this section, we first propose a median-based UCB algorithm, and then show that such an algorithm can defend against the attack model described in Section \ref{section: notation}.

\subsection{Algorithm Overview}

We begin by explaining why direct replacement of sample mean by sample median in UCB \cite{kapoor2019corruption} cannot defend against large attack values. Consider a catastrophic attack scheme that our attack model allows, where the adversary sets $\eta_t = -\infty$ or a significantly large negative value in the case when the player pulls the optimal arm, and $\eta =0$ otherwise. Then, the first time that the player pulls the optimal arm, the adversary attacks with a positive probability $\rho>0$, resulting in the value of $\mathrm{med}_{i^*}(t) + \sqrt{\frac{\omega\log t}{T_j(t)}}$ being $-\infty$, where $\mathrm{med}_{i^*}(t)$ denotes the sample median of the rewards received by arm $i^*$ up to time $t$. Consequently, the optimal arm will never be pulled in the future, and hence the regret grows linearly with $T$. The primary reason that median-based vanilla UCB fails in such a case is due to insufficiently enforced exploration for each arm. That is, the sample median can fail with only one catastrophic sample if there are not enough samples. On the other hand, if there are further enforced explorations to pull the optimal arm, the sample median can eventually rule out outlier attack values since such an attack occurs only probabilistically and not all the time. 
\begin{figure}
	\centering
	\includegraphics[width= 0.45\textwidth]{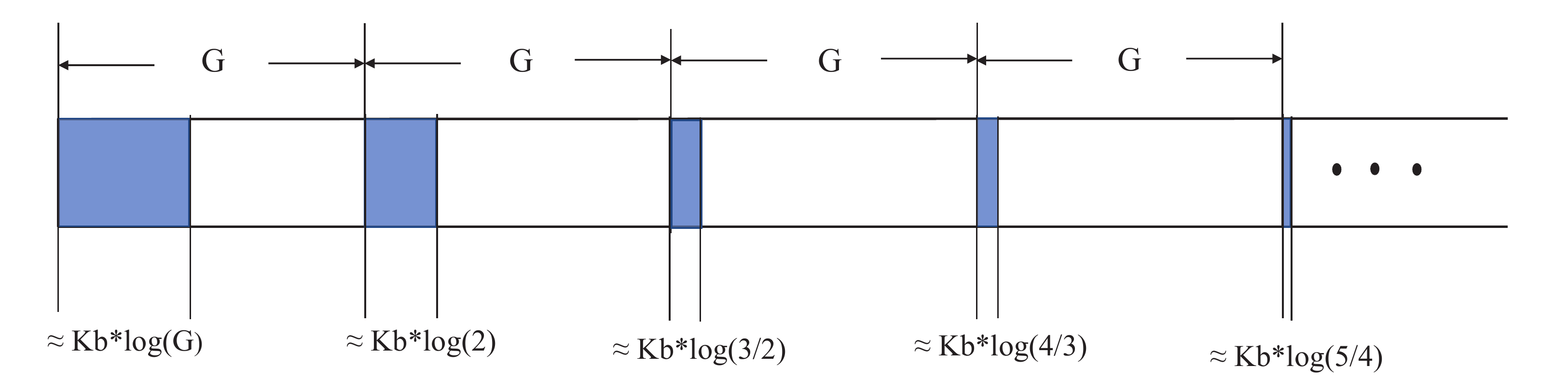}
	\caption{An illustration of med-E-UCB scheme, where the blue blocks denote the pure exploration rounds and the white blocks denote the UCB rounds.}\label{fig:illus}
\end{figure}
The above understanding motivates us to design an exploration-aided UCB algorithm, and then incorporate the sample median to defend against large attack values. We call this algorithm med-E-UCB and describe it formally in \Cref{alg:meucb}. This idea is illustrated in \Cref{fig:illus}, where we divide the pulling rounds into blocks. Each block consists of $G$ rounds, where $G\ge Kb\log G$ and $b>0$ is an arbitrary constant. During the first block (i.e. $k=0$), the size of pure exploration round is fixed at $b\log G$ pulls per arm. Except for the first block, at the beginning of each block, say block $k$, each arm is approximately explored $b\log \frac{k+1}{k}$ rounds. As a result, each arm is guaranteed to have been pulled $b\log((k+1)G)$ times at block $k$. So that pure exploration does not significantly affect the regret. 

\begin{algorithm}
	\caption{med-E-UCB}
	\label{alg:meucb}
	\textbf{Input:} Number of arms $K$, group size $G$, exploration parameters $b $,$\omega$, and total rounds $T$.
	\begin{algorithmic}[1]
		\STATE Initialization: for the first $K\ceiling{b \log G}$ rounds, pull each arm $\ceiling{b\log G}$ times.
		\FOR{$t = K\ceiling{b\log G }+1, ...,\  T$}
		\STATE $k = \left\lfloor\frac{t}{G} \right\rfloor$;
		\IF {$kG+1 \le t \le kG + K(\ceiling{b\log (k+1)G} - \ceiling{b\log kG})$}
		\STATE  Pure Exploration: \\ \quad $I_t = \ceiling{\frac{t-kG}{\ceiling{b\log (k+1)G} - \ceiling{b\log kG}}}$;
		\ELSE
		\STATE UCB round: \\
		\quad $I_t = \argmax\limits_j  \left\{ \mathrm{med}_j(t-1) + \sqrt{\frac{\omega \log t}{T_j(t-1)}}\right\}$, \\
		\ENDIF
		\ENDFOR
	\end{algorithmic}
\end{algorithm}

\subsection{Analysis of Regret}
In this subsection, we analyze the regret of med-E-UCB. The distributions associated with the arm are not necessarily Gaussian, and need only satisfy the following assumption.

\begin{assumption}\label{ass:mucb}
	There exists a constant $s$, such that $\pquantile{\frac{1}{2}-s}{F_{i^*}} > \pquantile{\frac{1}{2}+s}{F_j}$ for all $j\neq i^*$. Moreover, $F_i(\cdot)$ is differentiable for all $i\in [K]$ , and there exist constants $l>0$ and $\xi>0$, such that  
	$$\left. \inf\{ F'_{i^*}(x) : \ \pquantile{\frac{1}{2}-s}{F_{i^*}}-\xi< x < \pquantile{\frac{1}{2}-s}{F_{i^*}} \}\ge l\right.,$$
	and for all $j\neq i^*$
	$$\inf\{F'_{j}(x) : \ \pquantile{\frac{1}{2}+s}{F_j}< x <  \pquantile{\frac{1}{2}+s}{F_{j}}+\xi\}\ge l.$$ 	
\end{assumption}

The above assumption essentially requires that the median of the optimal arm and the median of the non-optimal arms have gaps such that the optimal arm can stand out statistically. This assumption further requires that the probability density within a $\xi$-neighborhood of the median to be lower-bounded by a positive $l$ in order to guarantee a good concentration property. Clearly, Gaussian distributions satisfy Assumption~\ref{ass:mucb}.
	\begin{lemma}[Sample median concentration bound]
		\label{median_concerntration}
		Let $X_i  = \tilde{X}_i + \eta_i, i=1,....,n$ be $n$ attacked data samples, where $\tilde{X}_i, i=1,...,n$ are original (i.e., un-attacked)  data samples  i.i.d. drawn from the distribution with CDF $F$. The $\eta_i$'s are unbounded attack values. If $\sum_{i=1}^n \indicator{\eta_i \neq 0} \le s\cdot n$ holds for a constant $s\in (0,1)$, then for any $a,b > 0$, we  have
		\begin{align*}
		&\prob{\mathrm{med}\bigparentheses{\samplesetcom{X_i}{i=1}{n}}- \pquantile{\frac{1}{2}-s}{F} \le -a}\le \exp\bigparentheses{-2np_1^2},\\
		&\prob{\mathrm{med}\bigparentheses{\samplesetcom{X_i}{i=1}{n}}- \pquantile{\frac{1}{2}+s}{F} \ge b}\le \exp\bigparentheses{-2np_2^2},
		\end{align*}
		where $p_1 =\frac{1}{2}-s - F(\pquantile{\frac{1}{2}-s}{F}-a)$, and $p_2 =F(\pquantile{\frac{1}{2}+s}{F}+b)- \frac{1}{2} -s$.
	\end{lemma}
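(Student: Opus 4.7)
The plan is to reduce both tails to a single application of Hoeffding's inequality via a deterministic counting step: convert the median event into a lower bound on a sum of i.i.d.\ Bernoulli indicators built from the clean samples $\tilde{X}_i$.

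For the lower tail, I would argue as follows. By the empirical-quantile convention $\theta_p(\hat F) = \inf\{x : \hat F(x) \ge p\}$ from the paper, the event $\mathrm{med}(\{X_i\}_{i=1}^n) \le \pquantile{\frac{1}{2}-s}{F} - a$ forces at least $\lceil n/2 \rceil$ of the $X_i$'s to satisfy $X_i \le \pquantile{\frac{1}{2}-s}{F} - a$. Since at most $sn$ indices have $\eta_i \ne 0$, at least $\lceil n/2\rceil - sn \ge (1/2-s)n$ of these must be un-attacked, so for those indices $\tilde{X}_i = X_i \le \pquantile{\frac{1}{2}-s}{F} - a$. Introducing the i.i.d.\ Bernoulli indicators $Y_i := \indicator{\tilde{X}_i \le \pquantile{\frac{1}{2}-s}{F} - a}$, with common mean $q_1 = F\bigl(\pquantile{\frac{1}{2}-s}{F} - a\bigr) = (1/2-s) - p_1$ (and $p_1\ge 0$ because $F(\pquantile{\frac{1}{2}-s}{F}) \ge 1/2-s$ by the quantile convention), the median event thus implies $\sum_i Y_i \ge (1/2-s)n = n q_1 + p_1 n$. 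Hoeffding's inequality then delivers
\[
\mathbb{P}\!\left(\sum_{i=1}^n Y_i - n q_1 \ge p_1 n\right) \le \exp(-2 n p_1^2),
\]
which is the first bound.

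The upper tail is completely symmetric: the event $\mathrm{med}(\{X_i\}) \ge \pquantile{\frac{1}{2}+s}{F} + b$ forces at least $\lfloor n/2\rfloor + 1$ of the $X_i$'s to lie above $\pquantile{\frac{1}{2}+s}{F}+b$, so at least $(1/2-s)n$ of the un-attacked $\tilde{X}_i$ exceed it as well. Hoeffding applied to $Z_i := \indicator{\tilde{X}_i \ge \pquantile{\frac{1}{2}+s}{F} + b}$ (which has mean $1/2-s-p_2$ with $p_2\ge 0$) then gives the $\exp(-2n p_2^2)$ bound.

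The only place where I expect to have to be careful, and where most of the bookkeeping lives, is the deterministic counting step. One has to juggle $\lceil n/2\rceil$, $\lfloor n/2\rfloor$, the integer bound $\lfloor sn\rfloor$ on the number of attacks, and the non-integer threshold $(1/2-s)n$ produced by Hoeffding, and check that the same clean quantity $(1/2-s)n$ drops out for both tails. A side benefit of phrasing it this way is that the reduction is robust to adaptive adversaries: the implication from the median event to $\sum_i Y_i \ge (1/2-s)n$ (respectively $\sum_i Z_i \ge (1/2-s)n$) holds pointwise on realizations, so the i.i.d.\ structure of the clean samples $\tilde{X}_i$ alone is what powers Hoeffding, no matter how $\eta_i$ is chosen subject to the count constraint.
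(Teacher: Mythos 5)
Your proof is correct and follows essentially the same route as the paper's: both reduce the attacked-median event to a lower bound on a sum of i.i.d.\ indicator variables of the \emph{clean} samples and then apply one-sided Hoeffding, with $p_1$ and $p_2$ emerging as the resulting mean gaps. The only organizational difference is that you establish the counting/sandwich step (the attacked median is trapped between the $(\tfrac12-s)$- and $(\tfrac12+s)$-empirical quantiles of the clean data) directly by bookkeeping the at most $sn$ corrupted indices, whereas the paper invokes this as a cited lemma from Zhang et al.\ and then applies the identical Hoeffding counting identity to the clean empirical quantiles.
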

Using \Cref{median_concerntration}, we obtain the following regret bounds for med-E-UCB. 
\begin{theorem}\label{th:avg_mucb}
	Consider the stochastic multi-armed bandit problem as described in \eqref{eq:bandit_prob}. Suppose Assumption~\ref{ass:mucb} holds. Further assume that the attack probability $\rho < s$, total number of rounds $T > G$, $b \ge \max\{\frac{\omega}{\xi^2}, \frac{2}{(s-\rho)^2}\}$ and $\omega \ge \frac{2}{l^2}$. Then the pseudo-regret of med-E-UCB satisfies
	\begin{align*}
	\bar{R}_T &\le\sum_{j=1, j\neq i^*}^{K}\Delta_j b\log(2T)\\
	&\quad+ \sum_{j=1, j\neq i^*}^{K}\Delta_j \left(\frac{4\omega\log T}{(\pquantile{\frac{1}{2} -s}{F_{i^*}} - \pquantile{\frac{1}{2} + s}{F_j})^2} \right) \\
	&\quad+ \sum_{j=1, j\neq i^*}^{K}\Delta_j(2 + \frac{2\pi^2}{3}).
	\end{align*}	
	For constant $K$, $\Delta_j$, $b$ and $\omega$, $\bar{R}_T = \mathcal{O}(\log(T))$.
\end{theorem}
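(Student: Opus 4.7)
The plan is to bound the expected number of pulls of each suboptimal arm $j \neq i^*$ up to horizon $T$, decomposing contributions from the pure exploration rounds (the blue blocks in Figure 1) and the UCB rounds (the white blocks). Multiplying by $\Delta_j$ and summing over $j$ recovers the three summands in the stated bound. The exploration contribution is essentially deterministic: by construction of the algorithm, arm $j$ is pulled at most $\lceil b \log G\rceil$ times in the initial block and at most $\lceil b\log(k+1)G\rceil - \lceil b\log kG\rceil$ times in block $k\ge 1$, which telescopes to at most $\lceil b\log((K_{\max}+1)G)\rceil \le b\log(2T) + O(1)$ total exploration pulls for arm $j$ when $T>G$. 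This yields the first summand.

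For the UCB-round contribution, I would mimic the classical UCB regret proof adapted to medians. Fix $j\neq i^*$ and set the threshold $u_j = 4\omega\log T/(\theta_{1/2-s}(F_{i^*})-\theta_{1/2+s}(F_j))^2$. If arm $j$ is pulled in a UCB round at time $t$ with $T_j(t-1)\ge u_j$, the UCB rule forces
\begin{align*}
\mathrm{med}_j(t-1)+\sqrt{\tfrac{\omega\log t}{T_j(t-1)}}\;\ge\;\mathrm{med}_{i^*}(t-1)+\sqrt{\tfrac{\omega\log t}{T_{i^*}(t-1)}}.
\end{align*}
A standard split shows at least one of the following must then hold: (a) $\mathrm{med}_{i^*}(t-1)\le \theta_{1/2-s}(F_{i^*})-\sqrt{\omega\log t/T_{i^*}(t-1)}$, or (b) $\mathrm{med}_j(t-1)\ge \theta_{1/2+s}(F_j)+\sqrt{\omega\log t/T_j(t-1)}$, since otherwise the gap condition $T_j(t-1)\ge u_j$ would make the inequality impossible. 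Thus the UCB pulls of arm $j$ beyond $u_j$ are bounded by a union-bounded sum of $\mathbb{P}(\text{(a)})+\mathbb{P}(\text{(b)})$ over $t$.

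To bound (a) and (b), I would invoke \Cref{median_concerntration}, which requires that the fraction of attacked samples on each arm is at most $s$. Because attacks are i.i.d.\ Bernoulli$(\rho)$ with $\rho<s$, a Hoeffding bound gives $\mathbb{P}(\text{\# attacks on arm } i > s\cdot T_i(t-1))\le \exp(-2(s-\rho)^2 T_i(t-1))$, and the exploration schedule guarantees $T_i(t-1)\ge b\log t$ for every arm, so the choice $b\ge 2/(s-\rho)^2$ makes this tail at most $t^{-4}$. Conditioning on the complementary event, Lemma 1 applies with $a=\sqrt{\omega\log t/T_{i^*}(t-1)}$ and $b=\sqrt{\omega\log t/T_j(t-1)}$; the assumed lower bound $l$ on the densities near the quantiles combined with $\omega\ge 2/l^2$ (and $a,b\le\xi$, guaranteed by $b\ge \omega/\xi^2$ via the exploration-enforced $T_i(t-1)\ge b\log t$) lets me lower-bound the exponents $p_1^2,p_2^2$ by $\omega\log t/T_i(t-1)\cdot l^2/2 \ge \log t/T_i(t-1)$, so each of (a), (b) has probability at most $t^{-2}$ after summing over the possible values of $T_i(t-1)$. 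Summing over $t$ gives the $2+2\pi^2/3$ constant, matching the third summand.

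The main obstacle is coordinating three scales simultaneously: the exploration schedule must provide enough samples per arm at every UCB time so that (i) Hoeffding controls the attack count below $s$, (ii) the deviations $\sqrt{\omega\log t/T_i(t-1)}$ lie in the $\xi$-neighborhood where the density lower bound $l$ applies, and (iii) the resulting concentration rate yields a summable $t^{-2}$ tail. The interlocking conditions $b\ge\max\{\omega/\xi^2,\,2/(s-\rho)^2\}$ and $\omega\ge 2/l^2$ in the theorem are precisely what makes these three requirements compatible, and verifying that the algorithm's logarithmic exploration is exactly sufficient (any less would break one of the three) is the delicate part of the argument.
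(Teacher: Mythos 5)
Your proposal follows essentially the same route as the paper's proof: the same decomposition of $T_j(T)$ into exploration pulls (at most $1+b\log(T+G)\le b\log(2T)+O(1)$ since $G<T$) plus a threshold $l_0=\lceil 4\omega\log T/(\pquantile{\frac{1}{2}-s}{F_{i^*}}-\pquantile{\frac{1}{2}+s}{F_j})^2\rceil$ plus post-threshold UCB pulls; the same event split into ``optimal median falls below its $(\frac{1}{2}-s)$-quantile minus the confidence width'' and ``suboptimal median exceeds its $(\frac{1}{2}+s)$-quantile plus the confidence width''; and the same two-part control of each event via Hoeffding on the attack fraction (using $b\ge 2/(s-\rho)^2$ and the exploration-enforced $T_i(t-1)\ge b\log t$) together with \Cref{median_concerntration} and the density lower bound $l$ on the $\xi$-neighborhood (using $b\ge\omega/\xi^2$ to keep the deviation inside that neighborhood and $\omega\ge 2/l^2$ for the rate). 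Your identification of the role of each parameter condition matches the paper exactly.

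One quantitative step needs tightening. You lower-bound the exponent via $p_1^2\ge \frac{l^2}{2}\cdot\frac{\omega\log t}{T_{i^*}(t-1)}$, which makes each individual deviation probability only $t^{-2}$; after the union over the up-to-$t$ possible values of the pull counts this gives order $t^{-1}$ per round, which is not summable and therefore cannot produce the constant $2+\frac{2\pi^2}{3}$. The factor $\frac{1}{2}$ should not be there: by Assumption~\ref{ass:mucb}, $p_1=\frac{1}{2}-s-F_{i^*}\bigl(\pquantile{\frac{1}{2}-s}{F_{i^*}}-a\bigr)\ge l\cdot a$ with $a=c_{t,r}=\sqrt{\omega\log t/r}$, hence $2rp_1^2\ge 2l^2\omega\log t\ge 4\log t$ and each term is $t^{-4}$ (the attack-fraction failure contributes another $t^{-4}$); the union over the pull counts $r,v\le t$ then gives $4/t^2$ per round, and $\sum_t 4/t^2=\frac{2\pi^2}{3}$, exactly as in the paper. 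With that correction your argument closes.
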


\Cref{th:avg_mucb} implies that med-E-UCB achieves the best possible pseudo-regret bound under the attack-free model \cite{lai1985asymptotically}. Considering that the number of attacks can scale linearly with the total number of pulling rounds and attack values can be unbounded and arbitrary, \Cref{th:avg_mucb} demonstrates that med-E-UCB is robust algorithm against very powerful attacks.

Furthermore, we establish a stronger {\em high-probability} guarantee for the regret of med-E-UCB with respect to both the randomness of attack occurrence and rewards. 

\begin{theorem}\label{th:highprob_meu}
	Suppose Assumption \ref{ass:mucb} holds. Assume that the attack probability $\rho < s$, total number of rounds $T >G$, $b \ge \max\{\frac{\omega}{\xi^2}, \frac{2}{(s-\rho)^2}\}$ and $\omega \ge \frac{3.5}{l^2}$. Then, with probability at least $1-\delta$ with respect to the randomness of attack occurrence and rewards, the regret of med-E-UCB satisfies 
	\begin{align*}
	&R_T \le \sum_{j=1, j\neq i^*}\Delta_j b\log(2T)\\
	&\quad + \sum_{j=1, j\neq i^*}\Delta_j\bigparentheses{\frac{4\omega\log T}{(\pquantile{\frac{1}{2} -s}{F_{i^*}} - \pquantile{\frac{1}{2} + s}{F_j})^2}}\\
	&\quad+ \sum_{j=1, j\neq i^*} \Delta_j\bigparentheses{e\bigparentheses{\frac{bK}{2\delta}}^\frac{1}{4} + \frac{2K}{\delta} +3 + \frac{\pi^2}{3}}.
	\end{align*}
	For constant $K$,$\Delta_j$, $b$ and $\omega$, $R_T =\mathcal{O} (\log T + \frac{1}{\delta})$.
\end{theorem}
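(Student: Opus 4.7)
The plan is to decompose the regret as $R_T = \sum_{j\neq i^*} \Delta_j T_j(T)$, where $T_j(T)$ is the number of pulls of arm $j$ through round $T$, and to control each $T_j(T)$ with probability at least $1-\delta/K$. The pulls of arm $j$ split naturally into pure-exploration pulls and UCB-round pulls; the pure-exploration pulls are deterministic and, by the schedule described before Algorithm~\ref{alg:meucb}, contribute at most roughly $b\log(2T)$ per arm, giving the first term of the bound. The interesting content is in bounding the UCB-round pulls with high probability.

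First I would define, for each arm $j$, the ``good event'' $\mathcal{E}_j$ on which (i) for every $t$ with $T_j(t)\ge b\log t$, the empirical attack fraction $\tfrac{1}{T_j(t)}\sum_{\tau}\indicator{\eta_\tau\neq 0}$ among the pulls of arm $j$ stays below $s$, and (ii) at every such $t$ the sample medians concentrate, namely $\mathrm{med}_{i^*}(t)\ge \pquantile{\frac{1}{2}-s}{F_{i^*}}-\sqrt{\omega\log t/T_{i^*}(t)}$ and $\mathrm{med}_j(t)\le \pquantile{\frac{1}{2}+s}{F_j}+\sqrt{\omega\log t/T_j(t)}$. Part (i) follows from Hoeffding on the independent attack Bernoullis together with $b\ge 2/(s-\rho)^2$, which is exactly what is needed so the attack fraction stays below $s$ once $T_j(t)\ge b\log t$. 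Conditioned on (i), part (ii) is a direct application of Lemma~\ref{median_concerntration} with $a=b=\sqrt{\omega\log t/T_j(t)}$; the condition $\omega\ge 3.5/l^2$ replaces the weaker $\omega\ge 2/l^2$ of Theorem~\ref{th:avg_mucb} precisely so that the probabilities $\exp(-2np^2)$ from Lemma~\ref{median_concerntration}, where $p\ge l\sqrt{\omega\log t/n}$ by Assumption~\ref{ass:mucb}, are summable in a high-probability sense rather than merely in expectation.

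Next, on the intersection of all $\mathcal{E}_j$, I would run the standard UCB argument: if arm $j$ is selected in a UCB round at time $t$, then
\begin{equation*}
\mathrm{med}_j(t-1)+\sqrt{\tfrac{\omega\log t}{T_j(t-1)}}\;\ge\;\mathrm{med}_{i^*}(t-1)+\sqrt{\tfrac{\omega\log t}{T_{i^*}(t-1)}}.
\end{equation*}
Substituting the concentration inequalities from $\mathcal{E}_j$ and using the gap $\pquantile{\frac{1}{2}-s}{F_{i^*}}-\pquantile{\frac{1}{2}+s}{F_j}>0$ from Assumption~\ref{ass:mucb} forces $T_j(T)\le 4\omega\log T/(\pquantile{\frac{1}{2}-s}{F_{i^*}}-\pquantile{\frac{1}{2}+s}{F_j})^2$ plus an additive constant, which is the second summand of the stated bound.

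The hard part will be converting the above ``good event'' analysis into the explicit $1-\delta$ statement with the particular constants $e(bK/(2\delta))^{1/4}$ and $2K/\delta$. A naive union bound of Hoeffding's inequality over every $t\in[1,T]$ blows up with $T$; instead I expect to have to split the contributions from (i) and (ii) and bound them differently. For (i), summing $\exp(-2T_j(t)(s-\rho)^2)$ over $t$ converges because $T_j(t)$ grows at least logarithmically, and Markov's inequality on the resulting expected number of violations gives the $2K/\delta$ piece. For (ii), the strengthened $\omega\ge 3.5/l^2$ lets one extract a $\delta^{-1/4}$-type tail by applying Markov to an appropriate fourth-moment (or geometric) aggregation of the excess pulls beyond the deterministic UCB bound, producing the $e(bK/(2\delta))^{1/4}$ term. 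Assembling these with a union bound over the $K$ arms yields the stated $1-\delta$ guarantee; the technical crux is tuning the Hoeffding/Markov trade-off so that the high-probability overhead depends only polynomially on $1/\delta$ while preserving the clean $\mathcal{O}(\log T)$ scaling.
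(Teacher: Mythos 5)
Your skeleton matches the paper's: decompose $R_T=\sum_{j\neq i^*}\Delta_j T_j(T)$, peel off the deterministic exploration pulls, and bound the UCB pulls on a good event combining attack-fraction control and median concentration. The genuine gap is in the step you yourself flag as ``the hard part'': the conversion to the explicit $1-\delta$ statement. Your sketch misattributes where the two $\delta$-dependent constants come from, and the mechanisms you propose for producing them do not work as described. In the paper, the $\frac{2K}{\delta}$ term comes from the \emph{median-concentration} side: after lower-bounding $T_j(t)\ge l_0$ one applies Markov's inequality to the event that an indicator exceeds $\frac{1}{t^4}$, paying a factor $t^4$; the strengthened condition $\omega\ge \frac{3.5}{l^2}$ exists precisely to make $2\omega l^2-5\ge 2$ so that the resulting sum behaves like $\sum_{t\ge l_0}t^{-2}\le \frac{K}{l_0-1}$, and choosing $l_0\ge \frac{2K}{\delta}+1$ makes this at most $\frac{\delta}{2}$. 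It is not a ``fourth-moment aggregation,'' and there is no route from $\omega\ge\frac{3.5}{l^2}$ to a $\delta^{-1/4}$ tail. The $e\bigparentheses{\frac{bK}{2\delta}}^{1/4}$ term instead comes from the \emph{attack-fraction} side: Lemma \ref{lemma3} guarantees the fraction event only for arms with at least $N=\ceiling{\frac{1}{2\epsilon_0^2}\log\frac{K}{\epsilon_0^2\delta}}+1$ samples (a deterministic, $\delta$-dependent threshold obtained by summing the Hoeffding tail over $m\ge N$, not by Markov on a count of violations), and forcing $b\log l_0\ge N$ requires $l_0\ge \exp(\frac{2}{b})\bigparentheses{\frac{K}{\delta\epsilon_0^2}}^{1/(2\epsilon_0^2 b)}$; the exponent $\frac{1}{4}$ is $\frac{1}{2\epsilon_0^2 b}$ under $b\ge\frac{2}{\epsilon_0^2}$, again nothing to do with fourth moments.

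Your alternative for the attack fraction --- Markov on the expected number of violations of $s_j(m)>\rho+\epsilon_0$ --- is not obviously fatal (each violation costs at most one bad pull, so a count bound of order $\frac{1}{\delta}$ would still give $R_T=\mathcal{O}(\log T+\frac{1}{\delta})$), but it is not what produces the stated constants, and you would still need to handle the dependence between the median statistics at different rounds; the paper sidesteps this by summing over the triple $(t,r,v)$ of round, optimal-arm count, and suboptimal-arm count, conditioning throughout on the global event $\mathcal{E}$ from Lemma \ref{lemma3}. As written, the proposal identifies the right structure but does not contain a correct derivation of the high-probability terms that distinguish this theorem from Theorem \ref{th:avg_mucb}.
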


In practice, the high-probability result as Theorem \ref{th:highprob_meu} is much more desirable. In such a case, we would like the guarantee of successful defense for almost all realizations of the attack (i.e., with high probability) rather than an on-average performance which does not imply what happens for each attack realization.

Theorems \ref{th:avg_mucb} and \ref{th:highprob_meu} readily imply the following corollary for Gaussian distributions. To present the result, let the $i$th arm be associated with $\mathcal{N}(\mu_i, \sigma^2)$. Further let $\Delta_{min}:= \min\limits_{i\neq i^*} \{\mu^* - \mu_i\}$, and $l = \frac{1}{\sqrt{2\pi\sigma^2}}\exp\bigparentheses{-\frac{(\Delta_{min}+4)^2}{32\sigma^2} }$, where $\Phi(\cdot)$ denotes the CDF of $\mathcal{N}(0, 1)$.

\begin{corollary}\label{cor:mucb}
	Suppose each arm corresponds to a Gaussian distribution. Suppose $\rho<\Phi(\frac{\Delta_{min}}{4\sigma}) -\frac{1}{2}$, $b \ge \max\{\omega, \frac{2}{(\Phi(\Delta_{min}/(4\sigma)) -1/2 -\rho)^2}\}$, and $\omega \ge \frac{2}{l^2}$. Then, the pseudo-regret of med-E-UCB satisfies $\bar{R}_T =  \mathcal{O}(\log T)$.
	And, with probability at least $1-\delta$ with respect to the randomness of both attacks and rewards, the regret of med-E-UCB satisfies $R_T = \mathcal{O} \left(\log (T) +\frac{1}{\delta}\right)$.
\end{corollary}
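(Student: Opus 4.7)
The plan is to reduce the corollary to Theorems~\ref{th:avg_mucb} and~\ref{th:highprob_meu} by verifying that a Gaussian bandit with common variance $\sigma^{2}$ satisfies Assumption~\ref{ass:mucb} with concrete constants $s$, $\xi$, $l$ for which the stated hypotheses on $\rho$, $b$, $\omega$ coincide with those of the theorems. The key observation is that for $\mathcal{N}(\mu_i,\sigma^{2})$ the $p$-quantile is exactly $\mu_i+\sigma\Phi^{-1}(p)$, so by the symmetry $\Phi^{-1}(1/2-s)=-\Phi^{-1}(1/2+s)$ every quantile calculation reduces to arithmetic in $\Phi^{-1}(1/2+s)$.

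First I would set $s:=\Phi\bigl(\Delta_{\min}/(4\sigma)\bigr)-\tfrac{1}{2}$, so that $\sigma\Phi^{-1}(1/2+s)=\Delta_{\min}/4$. Then for every $j\neq i^{*}$,
\begin{align*}
\pquantile{\tfrac12-s}{F_{i^{*}}}-\pquantile{\tfrac12+s}{F_{j}}
  &= \mu^{*}-\mu_j-2\sigma\Phi^{-1}(\tfrac12+s)\\
  &= \Delta_j-\tfrac{\Delta_{\min}}{2}\;\ge\;\tfrac{\Delta_{\min}}{2}>0,
\end{align*}
which establishes the quantile-gap part of Assumption~\ref{ass:mucb}. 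Next I would take $\xi=1$ and verify the density lower bound. On the interval relevant to $i^{*}$, every $x$ satisfies $|x-\mu^{*}|\le \sigma\Phi^{-1}(1/2+s)+\xi = \Delta_{\min}/4+1=(\Delta_{\min}+4)/4$, and the same bound holds for every $j\neq i^{*}$ on its interval because $|x-\mu_j|\le \sigma\Phi^{-1}(1/2+s)+\xi$. Since the Gaussian density is decreasing in $|x-\mu_i|$, this gives
\begin{align*}
F_i'(x) \;\ge\; \frac{1}{\sqrt{2\pi\sigma^{2}}}\exp\!\left(-\frac{(\Delta_{\min}+4)^{2}}{32\sigma^{2}}\right)=l
\end{align*}
on both intervals, which is exactly the $l$ defined in the statement.

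With Assumption~\ref{ass:mucb} verified, the hypothesis $\rho<\Phi(\Delta_{\min}/(4\sigma))-1/2$ becomes $\rho<s$, and the conditions on $b$ and $\omega$ translate directly into the hypotheses of Theorems~\ref{th:avg_mucb} and~\ref{th:highprob_meu} (using $\xi=1$ so that $\omega/\xi^{2}=\omega$). Applying Theorem~\ref{th:avg_mucb} yields $\bar R_T=\mathcal{O}(\log T)$, and applying Theorem~\ref{th:highprob_meu} yields $R_T=\mathcal{O}(\log T+1/\delta)$ with probability at least $1-\delta$; the $\Delta_j$-dependent factors in the bounds become constants once the Gaussian parameters are fixed.

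No step is substantively hard: the whole argument is a careful specialization. The only delicate point is the choice of $\xi$, which must be large enough that $b\ge\omega/\xi^{2}$ is guaranteed by the corollary's assumption $b\ge\omega$, yet small enough that the worst-case Gaussian density in the $\xi$-neighborhood still equals the advertised $l$. Taking $\xi=1$ threads this needle cleanly because $(\Delta_{\min}/4+1)^{2}/(2\sigma^{2})$ simplifies to $(\Delta_{\min}+4)^{2}/(32\sigma^{2})$, matching the $l$ in the statement exactly, so after that algebraic check the corollary follows by direct invocation of the two theorems.
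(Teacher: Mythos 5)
Your proposal is correct and matches the paper's own proof essentially verbatim: the same choice $s=\Phi\left(\frac{\Delta_{\min}}{4\sigma}\right)-\frac{1}{2}$, $\xi=1$, and $l=\frac{1}{\sqrt{2\pi\sigma^2}}\exp\left(-\frac{(\Delta_{\min}+4)^2}{32\sigma^2}\right)$, the same quantile-gap computation giving $\Delta_j-\frac{\Delta_{\min}}{2}\ge\frac{\Delta_{\min}}{2}$, and the same direct invocation of Theorems~\ref{th:avg_mucb} and~\ref{th:highprob_meu}. (Both you and the paper quietly pass over the fact that Theorem~\ref{th:highprob_meu} requires $\omega\ge\frac{3.5}{l^2}$ while the corollary only assumes $\omega\ge\frac{2}{l^2}$, but that is a shared bookkeeping slip in the statement, not a gap in your argument relative to the paper's.)
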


\section{Median-based $\epsilon$-greedy}

In this section, we propose a robust $\epsilon$-greedy algorithm based on the sample median, and show that it is robust to defend against the adversarial attack described in Section \ref{section: notation}. This algorithm is helpful to compare with med-E-UCB to illustrate that under unbounded attacks med-E-UCB is more exploration-efficient.

\subsection{Algorithm Overview}

We propose an $\epsilon$-greedy algorithm that incorporates the sample median to defend against adversarial attacks. We call the algorithm med-$\epsilon$-greedy and describe it formerly in Algorithm \ref{alg: meg}. Compared with the standard $\epsilon$-greedy algorithm, the med-$\epsilon$-greedy algorithm replaces the sample mean by the sample median. In addition, the exploration parameter $c$ needs to be appropriately chosen to provide sufficient exploration to guarantee the concentration of the sample median.

\begin{algorithm}
	\caption{med-$\epsilon$-greedy}\label{alg: meg}
	\textbf{Input:} Number $K$ of arms, total number of $T$ pulling, and exploration parameter $c$.
	\begin{algorithmic}[1]
		\STATE Initialization: pull each arm $\ceiling{c}$ times.
		\FOR {$t= \lceil c\rceil K+1, ...,\  T$}
		\STATE Pull arm 
		$$
		I_t = \left\{ 
		\begin{aligned}
		&\argmax_{j} \{\mathrm{med}_j(t-1)\}, \qquad  \text{w.p.  } 1 - \frac{cK}{t}\\
		&\text{Uniformly pick an arm from 1 t o K}, \text{w.p.  } \frac{cK}{t}  
		\end{aligned} \right.$$
		\ENDFOR
	\end{algorithmic}
\end{algorithm}

\subsection{Analysis of Regret}\label{sec:reg_meg}
In this subsection, we analyze both the pseudo-regret and regret of the med-$\epsilon$-greedy algorithm. We first make the following assumption on the reward distributions.
\begin{assumption}\label{ass:meg}
	There exists a constant $0 < s <1$ and a constant $x_0\in \mathbb{R}$, such that the CDF $F(\cdot)$ of the optimal arm satisfies $F_{i^*}(x_0) < \frac{1}{2}-s$, and the CDFs of the remaining arms satisfy $F_{j}(x_0) > \frac{1}{2}+s$, for all $j\neq i^*$. 
\end{assumption}
The above assumption ensures that the sample median of the optimal arm is larger than those of the other arms with a desirable gap. Compared to Assumption \ref{ass:mucb} for med-E-UCB, Assumption \ref{ass:meg} is slightly weaker as it does not need the CDF to be differentiable and its derivative to be bounded below in the neighborhood of $\frac{1}{2}+s$ or $\frac{1}{2}-s$ quantiles. Clearly, a collection of Gaussian distributions with a unique largest mean satisfies Assumption \ref{ass:meg}.

The following theorem characterizes the pseudo-regret bound for the med-$\epsilon$-greedy algorithm.

\begin{theorem}\label{th: avg_meg}
	Consider the stochastic multi-armed bandit problem under adversarial attack as described in \eqref{eq:bandit_prob}. Let Assumption~\ref{ass:meg} hold. Suppose $\rho < s$, and suppose the exploration parameter $c$ satisfies the following condition 
	 $c > \max \{20, \frac{2}{\bigparentheses{F_j(x_0)-\frac{1}{2}  -s}^2}, \frac{2}{\bigparentheses{\frac{1}{2} -s - F_{i^*}(x_0)}^2}, \frac{2}{(s-\rho)^2} : j =1, 2, ...K, j \neq i^*\}$.  Then the pseudo-regret of med-$\epsilon$-greedy satisfies
	$$\bar{R}_T \le  c\sum_{j=1, j\neq i^*}^{K}\Delta_j\log T + 2cKe\mu^* + \sum_{j=1, j\neq i^*}^{K}(2+3c)\Delta_j,$$
	For fixed $\Delta_j$, $K$, and $c$, $\bar{R}_T = \mathcal{O}(\log T)$.
\end{theorem}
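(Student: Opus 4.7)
The plan is to bound $\bar R_T = \sum_{j\neq i^*}\Delta_j\,\mathbb{E}[T_j(T)]$ by splitting each suboptimal arm $j$'s pulls into exploration pulls (coming from the uniform draw with probability $cK/t$) and greedy pulls (when $\text{med}_j(t-1)\ge \text{med}_{i^*}(t-1)$). The exploration contribution is immediate: the expected number of exploration pulls of any fixed arm $j$ through round $T$ is at most $\sum_{t=\lceil c\rceil K+1}^{T}\frac{c}{t}\le c\log T + c$, which yields the leading $c\Delta_j\log T$ term of the bound after multiplying by $\Delta_j$ and summing.

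The core of the argument is to bound $\mathbb{P}(I_t=j \text{ in a greedy round})\le \mathbb{P}(\text{med}_j(t-1)\ge\text{med}_{i^*}(t-1))$. By Assumption~\ref{ass:meg}, $F_{i^*}(x_0)<\tfrac12-s$ and $F_j(x_0)>\tfrac12+s$, so $\theta_{1/2+s}(F_j)\le x_0<\theta_{1/2-s}(F_{i^*})$; a union bound at $x_0$ reduces the task to bounding $\mathbb{P}(\text{med}_j(t-1)\ge x_0)$ and $\mathbb{P}(\text{med}_{i^*}(t-1)\le x_0)$ separately. To apply Lemma~\ref{median_concerntration} to either, I need two ingredients per arm $i$: (a) a high-probability lower bound $n_t$ on $T_i(t-1)$, and (b) control on the number of corrupted samples among those $T_i(t-1)$ pulls. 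For (a), since the exploration indicators are independent Bernoulli$(c/t)$ with expected sum $\Omega(c\log t)$, a Chernoff bound gives $T_i(t-1)\ge n_t:=\lfloor \tfrac{c}{2}\log t\rfloor$ with probability $1-t^{-\Theta(c)}$. For (b), conditional on $T_i(t-1)=n$, the attack count is Binomial$(n,\rho)$ with $\rho<s$, so Chernoff again yields $\sum \mathbb{1}\{\eta\ne 0\}\le sn$ with probability at least $1-\exp(-2n(s-\rho)^2)$ — this is exactly why the hypothesis $c\ge 2/(s-\rho)^2$ is imposed.

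Granted (a) and (b), Lemma~\ref{median_concerntration} gives $\mathbb{P}(\text{med}_j(t-1)\ge x_0)\le \exp(-2n_t p_2^2)$ with $p_2=F_j(x_0)-\tfrac12-s>0$, and symmetrically $\mathbb{P}(\text{med}_{i^*}(t-1)\le x_0)\le \exp(-2n_t p_1^2)$ with $p_1=\tfrac12-s-F_{i^*}(x_0)>0$. The lower bounds on $c$ in the hypothesis ensure $2n_t p_\cdot^2\ge 2\log t$, so both probabilities decay like $t^{-\alpha}$ with $\alpha>1$, and summing over $t$ yields a finite constant — this will produce the $3c\Delta_j$ term. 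What remains is a short initial window where the concentration has not yet activated; bounding its contribution crudely by the per-round regret $\mu^*$ and carefully summing the tails of the exploration probability $cK/t$ via the standard $(1+1/x)^x\le e$ trick produces the $2cKe\mu^*$ term.

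The main obstacle is the interleaving of randomness and adaptivity: the choice of arm in a greedy round depends on the past, so the conditional distribution of the attack indicators and of the rewards among pulls of arm $i$ must be handled carefully. I plan to resolve this by exploiting the fact that the attack occurrence at round $t$ is independent of the history (given the bandit protocol) and the clean rewards are i.i.d.\ per arm, so conditioning on the filtration at the start of round $t$ preserves both properties needed to invoke Lemma~\ref{median_concerntration}. A secondary delicacy is that one needs the high-probability events of step (a) and step (b) to hold \emph{simultaneously} uniformly over the rounds in which arm $j$ could be chosen greedily; this is dispatched by a single union bound whose total cost is absorbed into the $(2+3c)\Delta_j$ term thanks to the largeness of $c$.
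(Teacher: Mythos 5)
Your proposal follows essentially the same route as the paper's proof: split pulls into exploration and greedy rounds, reduce $\prob{\mathrm{med}_j(t-1)\ge \mathrm{med}_{i^*}(t-1)}$ to the two one-sided events at $x_0$ via Assumption~\ref{ass:meg}, obtain a high-probability lower bound of order $\frac{c}{2}\log t$ on the pull count from the forced-exploration Bernoullis, control the attacked fraction by Hoeffding using $c\ge 2/(s-\rho)^2$, and then invoke the median concentration lemma so that each tail decays faster than $t^{-1}$ and sums to the constant terms. The one claim you should not take literally is that ``conditional on $T_i(t-1)=n$ the attack count is Binomial$(n,\rho)$'': conditioning on the adaptive pull count (an event depending on observed, possibly attacked rewards) can bias the attack indicators, and the paper sidesteps this by conditioning in the reverse direction --- writing $\prob{\mathrm{med}_j(t-1)\ge x_0,\,T_j(t-1)=\tau}$ as $\prob{T_j(t-1)=\tau \mid \medthta{\{X_k^j\}_{k=1}^{\tau}}\ge x_0}\cdot\prob{\medthta{\{X_k^j\}_{k=1}^{\tau}}\ge x_0}$ over a pre-generated reward/attack stack, then using that $T_j^R(t-1)$ is independent of that stack; you flag the adaptivity issue yourself, and this is the standard fix, so the gap is one of precision rather than of substance.
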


\Cref{th: avg_meg} indicates that even under adversarial attack, med-$\epsilon$-greedy still achieves $\mathcal{O}(\log T)$ regret, which is the same as the optimal pseudo-regret order in attack-free model. In contrast to med-E-UCB, exploration rounds in vanilla $\epsilon$-greedy are already sufficient for the sample median to be effective.

Aside from the pseudo-regret bound, we further provide a high-probability guarantee for the regret below.

\begin{theorem}\label{th:highprob_meg}
	Given Assumption \ref{ass:meg}, suppose $\rho < s$, and the exploration parameter $c$ satisfies the following condition 
	$c > \max \{40, \frac{4}{\bigparentheses{F_j(x_0)-\frac{1}{2}  -s}^2}, \frac{4}{\bigparentheses{\frac{1}{2} -s - F_{i^*}(x_0)}^2}, \frac{1}{(s-\rho)^2}: j =1, 2, ...K, j \neq i^*\}.$  Then, with probability at least $1-\delta$ with respect to the randomness of both attacks and rewards, the regret of med-$\epsilon$-greedy satisfies
	
	$$R_T\le\frac{6\ceiling{c}^2K^3}{\delta}\mu^* + \sum_{j=1, j\neq i^*}^K2c\Delta_j \log T + \sum_{j=1, j\neq i^*}^K 2c\Delta_j.$$
	For constant $\Delta_j$, $K$, and $c$, $R_T = \mathcal{O}(\log T + \frac{1}{\delta})$
\end{theorem}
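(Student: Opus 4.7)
The plan is to decompose the regret as $R_T = R_T^{\mathrm{exp}} + R_T^{\mathrm{ex}}$, where $R_T^{\mathrm{exp}}$ collects the rounds executed by the uniform-exploration branch of Algorithm~\ref{alg: meg} (together with the initial $\lceil c\rceil$ pulls per arm) and $R_T^{\mathrm{ex}}$ collects the rounds executed by the sample-median exploitation branch. These two pieces will be handled by qualitatively different tail arguments: the exploration piece is controlled by a concentration inequality around its expectation, while the exploitation piece is first bounded in expectation and then converted to a high-probability tail bound via Markov's inequality. A union bound with the failure budget split as $\delta/2+\delta/2$ combines the two.

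For $R_T^{\mathrm{exp}}$, each round $t>\lceil c\rceil K$ contributes an independent $\mathrm{Bernoulli}(c/t)$ indicator to $N_j^{\mathrm{exp}}(T)$, so this count is a Poisson-binomial sum with mean at most $c\log T + \lceil c\rceil$. A multiplicative Chernoff bound, together with the hypothesis $c>40$, yields $N_j^{\mathrm{exp}}(T)\le 2c\log T + 2c$ with probability at least $1-\delta/(2K)$; a union bound over $j\ne i^*$ recovers the $\sum_{j\ne i^*}\bigl(2c\Delta_j\log T + 2c\Delta_j\bigr)$ term of the stated bound at a total failure probability at most $\delta/2$.

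For $R_T^{\mathrm{ex}}$, the target is $\expectation{R_T^{\mathrm{ex}}}\le 3\lceil c\rceil^2 K^3\mu^*$ (independent of $T$), after which Markov's inequality at level $\delta/2$ yields $R_T^{\mathrm{ex}}\le 6\lceil c\rceil^2K^3\mu^*/\delta$ with the remaining probability. Fix $t$ and $j\ne i^*$. The exploitation branch pulls $j$ only if $\mathrm{med}_j(t-1)\ge\mathrm{med}_{i^*}(t-1)$, which by Assumption~\ref{ass:meg} forces either $\mathrm{med}_j(t-1)\ge x_0$ or $\mathrm{med}_{i^*}(t-1)\le x_0$. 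I will condition on a good event asserting (i)~$T_j(t-1)\ge (c/2)\log t$ for every arm (a Chernoff lower tail on the exploration Bernoullis, using $c>40$), and (ii)~the fraction of attacked samples seen by arm $j$ is at most $s$ (a Hoeffding bound on $\mathrm{Binomial}(T_j(t-1),\rho)$, using $\rho<s$ and $c>1/(s-\rho)^2$). Under this good event Lemma~\ref{median_concerntration} applied with $p_2 = F_j(x_0)-\tfrac12-s>0$ gives
\[
\prob{\mathrm{med}_j(t-1)\ge x_0}\le \exp\!\bigl(-c\,(F_j(x_0)-\tfrac12-s)^2\log t\bigr),
\]
which by $c>4/(F_j(x_0)-\tfrac12-s)^2$ is at most $t^{-4}$, and analogously for $\mathrm{med}_{i^*}(t-1)\le x_0$. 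Summing the exploitation-pull probabilities over $t$, together with the summable failure probabilities of the good event, yields the claimed constant bound on $\expectation{N_j^{\mathrm{ex}}(T)}$, and hence on $\expectation{R_T^{\mathrm{ex}}}$.

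The main obstacle is this exploitation analysis: one must coordinate three separate concentration events---sufficient exploration of every arm, per-arm attack fraction below $s$, and median concentration from Lemma~\ref{median_concerntration}---and ensure that after the compound union bound over rounds $t$ and arms $j$, the total expected exploitation regret sums to a constant in $T$. Every failure probability has to decay in $t$ strictly faster than $1/t$, which is exactly what pins down the four numerical thresholds on $c$ in the hypothesis; tracking the constants to match the stated $6\lceil c\rceil^2K^3\mu^*/\delta$ prefactor together with the $2c$ logarithmic coefficient is the principal bookkeeping effort.
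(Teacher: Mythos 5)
Your proposal is correct in outline and arrives at the stated bound, but it reaches the $\frac{1}{\delta}$ term by a genuinely different route than the paper. The paper never applies Markov's inequality to the aggregate exploitation regret: it deterministically writes off the first $B=\frac{6\ceiling{c}^2K^3}{\delta}$ rounds, paying $B\mu^*$, and then constructs a single good event $\mathcal{K}$ under which, for every $t>B$, the exploration count is at most $2cK\log(T/B)$ and each median-deviation indicator is dominated by an exponentially small quantity; the largeness of $B$ is precisely what makes the union bound over $t>B$ and over arms come out to $\delta$. You instead bound $\expectation{R_T^{\mathrm{ex}}}$ by a constant in $T$ (essentially recycling the $\mathcal{O}(1/t^2)$ per-round exploitation probabilities already established in the proof of \Cref{th: avg_meg}) and convert to a tail bound by Markov at level $\delta/2$, while handling the exploration count by Chernoff at level $\delta/2$. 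Both routes are valid and give $\mathcal{O}(\log T+\frac{1}{\delta})$; yours is more modular and avoids the paper's somewhat awkward device of applying Markov's inequality to $\{0,1\}$-valued indicators against sub-unit thresholds, whereas the paper's yields one trajectory-wise good event controlling the whole run. Two points you should tighten. First, your Chernoff step has failure probability about $\exp(-(c\log T+\ceiling{c})/3)$, which is \emph{not} below $\delta/(2K)$ for every pair $(T,\delta)$; you need $T\gtrsim(2K/\delta)^{3/c}$ or a fallback for small $T$ (the paper's corresponding step carries the analogous restriction $T\ge B(6/\delta)^{3/(cK)}$, so this is a shared blemish, but it should be stated). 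Second, you cannot condition the median concentration directly on $T_j(t-1)\ge\frac{c}{2}\log t$, since $T_j(t-1)$ depends on the observed rewards through the exploitation decisions; as in the paper you must route through the exploration-only count $T_j^R(t-1)$, which is independent of the rewards, and then sum the bound of Lemma \ref{lemma4} over all possible sample counts $v>u(t)$. The resulting geometric tail still leaves a summable $\mathcal{O}(t^{-2})$ contribution per round, so your constant-expectation claim for $R_T^{\mathrm{ex}}$ survives, but the argument as sketched skips this step.
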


Theorems \ref{th: avg_meg} and 4 readily implies the result when all arms correspond to Gaussian distributions, which we state in the following corollary. Similar to \Cref{cor:mucb} for med-E-UCB, for Gaussian distributions, we have derived the threshold for $\rho$ below which med-$\epsilon$-greedy has the desired regret. To present the result, suppose the $i$th arm is associated with $\mathcal{N}(\mu_i, \sigma^2)$, and let $\Delta_{min}:= \min\limits_{i\neq i^*} \{\mu^* - \mu_i\}$.
\begin{corollary}\label{cor: gau_meg}
	Suppose each arm corresponds to a Gaussian distribution, and $\rho<\Phi(\frac{\Delta_{min}}{4\sigma}) -\frac{1}{2}$. Let $c> \max\{10, \frac{1}{\left(\Phi(\frac{\Delta_{min}}{2\sigma}) - \Phi(\frac{\Delta_{min}}{4\sigma})\right)^2}, \frac{1}{(\Phi(\frac{\Delta_{min}}{4\sigma}) -\frac{1}{2 } -\rho)^2}\}.$ Then, the pseudo-regret of med-$\epsilon$-greedy satisfies $\bar{R}_T = \mathcal{O}(\log T )$.
	
	Furthermore, with probability at least $1-\delta$ with respect the randomness of both attacks and rewards, the regret of med-$\epsilon$-greedy satisfies
	$R_T = \mathcal{O}(\log T + \frac{1}{\delta})$.
\end{corollary}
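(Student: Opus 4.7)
The plan is to derive Corollary~\ref{cor: gau_meg} as a direct specialization of Theorems~\ref{th: avg_meg} and~\ref{th:highprob_meg} to Gaussian arms, by making explicit choices for the parameters $s$ and $x_0$ appearing in Assumption~\ref{ass:meg}. All the stochastic work is already contained in those two theorems, so what remains is to verify Assumption~\ref{ass:meg} with a single $x_0$ that handles every arm simultaneously, and to confirm that the corollary's condition on $c$ implies (up to universal numerical constants) the analogous conditions in the source theorems.

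First I would set $s = \Phi(\Delta_{\min}/(4\sigma)) - \tfrac12$, so that the hypothesis $\rho < \Phi(\Delta_{\min}/(4\sigma)) - \tfrac12$ reads simply $\rho < s$, and $\Phi^{-1}(\tfrac12 + s) = \Delta_{\min}/(4\sigma)$. Next I would pick $x_0 = \mu^* - \Delta_{\min}/2$, midway between the optimal mean and its closest competitor. Since $F_i(x) = \Phi((x-\mu_i)/\sigma)$, a direct calculation gives
\[
\tfrac{1}{2} - s - F_{i^*}(x_0) = \Phi\bigl(\tfrac{\Delta_{\min}}{2\sigma}\bigr) - \Phi\bigl(\tfrac{\Delta_{\min}}{4\sigma}\bigr) > 0,
\]
while for any suboptimal arm $j$ the bound $x_0 - \mu_j = \Delta_j - \Delta_{\min}/2 \ge \Delta_{\min}/2$ yields
\[
F_j(x_0) - \tfrac{1}{2} - s \ge \Phi\bigl(\tfrac{\Delta_{\min}}{2\sigma}\bigr) - \Phi\bigl(\tfrac{\Delta_{\min}}{4\sigma}\bigr) > 0.
\]
Hence Assumption~\ref{ass:meg} is satisfied uniformly across all arms with these $s$ and $x_0$.

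With these identifications, the common quantity $\Phi(\Delta_{\min}/(2\sigma)) - \Phi(\Delta_{\min}/(4\sigma))$ simultaneously lower-bounds $\tfrac12 - s - F_{i^*}(x_0)$ and $\min_{j\neq i^*}\{F_j(x_0) - \tfrac12 - s\}$, so the corollary's threshold on $c$ controls (up to universal factors) each of $2/(F_j(x_0)-\tfrac12-s)^2$, $2/(\tfrac12-s-F_{i^*}(x_0))^2$, and $2/(s-\rho)^2$ appearing in Theorem~\ref{th: avg_meg}, as well as their $4/(\cdot)^2$ and $1/(s-\rho)^2$ counterparts in Theorem~\ref{th:highprob_meg}. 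Invoking those two theorems then yields $\bar{R}_T = \mathcal{O}(\log T)$ and, with probability at least $1-\delta$, $R_T = \mathcal{O}(\log T + 1/\delta)$.

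The only mild subtlety is to ensure that $x_0$ works uniformly in $j$: a natural pointwise choice like $x_0 = (\mu^* + \mu_j)/2$ depends on $j$, so one instead takes the $j$-free threshold $\mu^* - \Delta_{\min}/2$ and appeals to $\Delta_j \ge \Delta_{\min}$ to keep $F_j(x_0)$ bounded away from $\tfrac12 + s$ for every suboptimal arm at once. Beyond this, the argument is pure constant-chasing between the hypotheses of the corollary and those of the two source theorems, and no further probabilistic reasoning is required.
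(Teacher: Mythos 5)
Your proposal is correct and follows essentially the same route as the paper: the same choices $s = \Phi(\Delta_{\min}/(4\sigma)) - \tfrac12$ and $x_0 = \mu^* - \Delta_{\min}/2$, the same verification of Assumption~\ref{ass:meg} via $\Delta_j \ge \Delta_{\min}$, and the same reduction to Theorems~\ref{th: avg_meg} and~\ref{th:highprob_meg}. Your explicit acknowledgment that the corollary's thresholds on $c$ only match the theorems' hypotheses up to small numerical factors (e.g.\ $10$ vs.\ $20$ or $40$, and $1/(\cdot)^2$ vs.\ $2/(\cdot)^2$ or $4/(\cdot)^2$) is actually more candid than the paper, which applies the theorems without comment on this mismatch.
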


\section{Experiments}\label{sec:exp}
\subsection{Comparison among Algorithms}
\label{sec:simulation}
In this subsection, we provide experiments to demonstrate that existing robust algorithms fail under the attack model considered here, whereas our two algorithms are successful. 

In our experiment, we choose  the number of arms to be $10$. The reward distribution of the  $i$th arm is $\mathcal{N}(2i, 1)$ for $i\in [K]$. The attack probability is fixed to be $\rho$ ($\rho = 0.125$ and $0.3$). The adversary generates an attack value $\eta$ uniformly at random from the interval $(0, 1800)$ if it attacks, and subtracts the clean reward  by $\eta$ if the optimal arm is pulled and adds to the clean reward otherwise. For each algorithm, each trial contains $T = 10^5$ rounds, and the final results take the average of $20$ Monte Carlo trials. 

We first compare the performance of our med-E-UCB and med-$\epsilon$-greedy with RUCB-MAB \cite{kapoor2019corruption}, EXP3 \cite{auer2002nonstochastic}, and Cantoni UCB \cite{bubeck2013bandits}. We also include (vanilla) UCB\cite{auer2002finite} and (vanilla) $\epsilon$-greedy \cite{auer2002finite} in the comparison for completeness. For med-E-UCB, we set $b= 4, \omega=4$, and $G= 10^3$. For med-$\epsilon$-greedy, we set $c = 10$. Other parameters are set as suggested by the original references. It can be seen from \Cref{Fig:regret_multialg} that our med-E-UCB and med-$\epsilon$-greedy algorithms significantly outperform the other algorithms with respect to the average regret. It is also clear that only our med-E-UCB and med-$\epsilon$-greedy algorithms have logarithmically increasing regret, whereas all other algorithms suffer linearly increasing regret. Between our two algorithms, med-E-UCB performs slightly better than med-$\epsilon$-greedy. This implies that the med-E-UCB is more exploration efficient than med-$\epsilon$-greedy. The same observations can also be made in \Cref{Fig:per_multialg}, where the performance metric is the percentage of pulling the optimal arm. Only our med-E-UCB and med-$\epsilon$-greedy algorithms asymptotically approaches 100\% optimal arm selection rate.

\begin{figure}[!htb]	
	\centering
	\subfigure[$\rho=0.125$]{
		\includegraphics[width=0.24\textwidth]{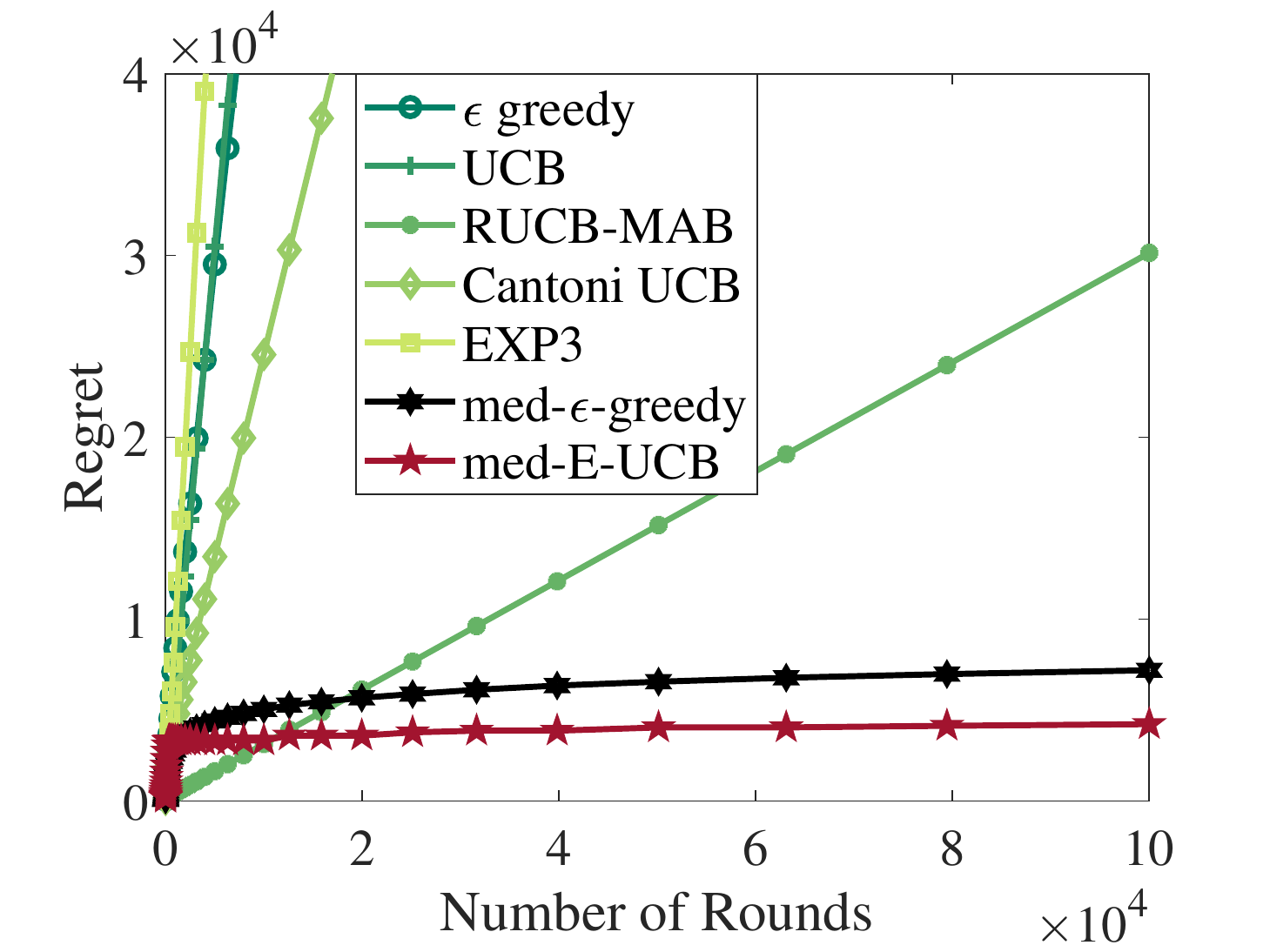}}\subfigure[$\rho=0.3$]{
		\includegraphics[width=0.24\textwidth]{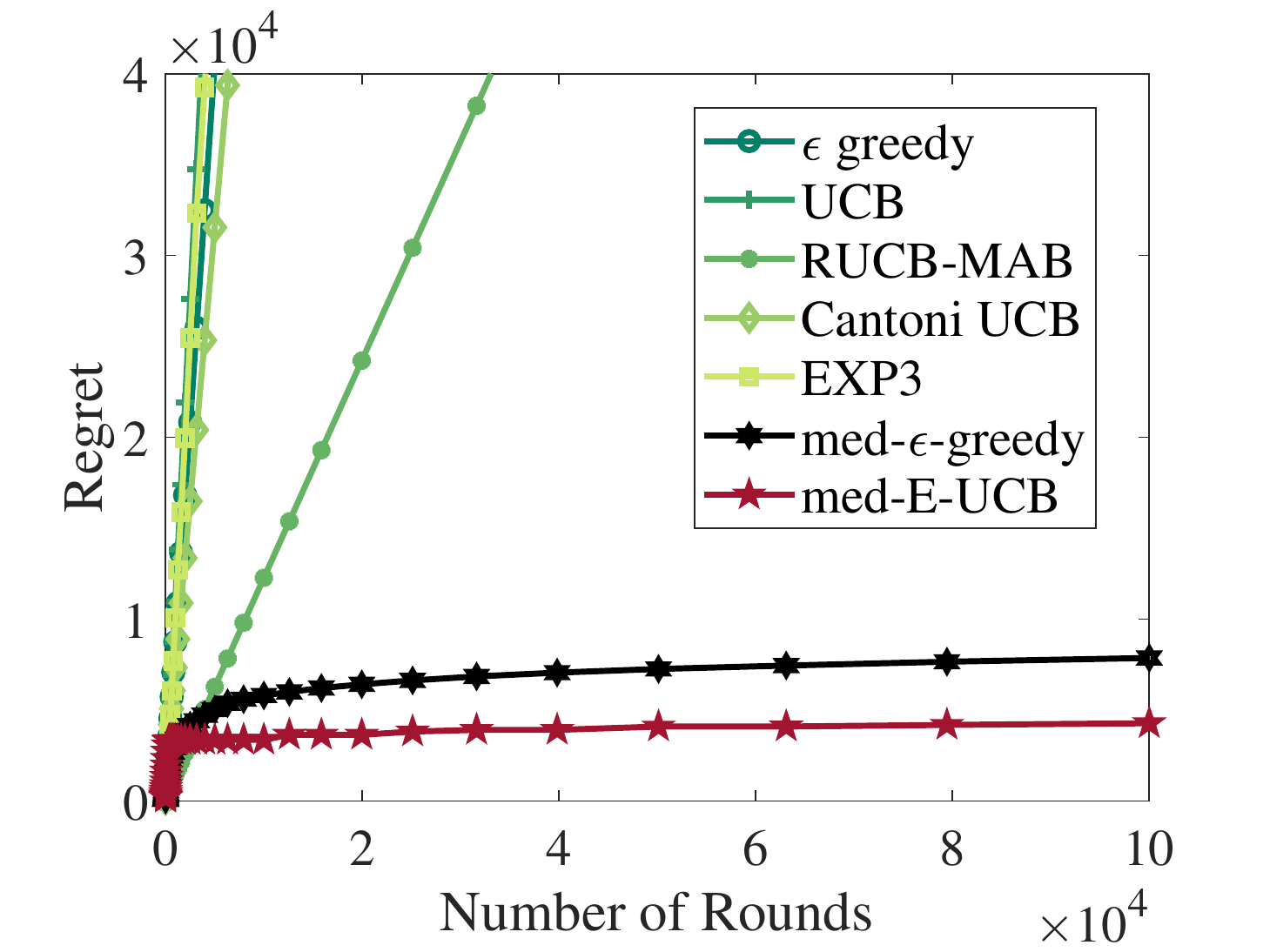}}
	\caption{Comparison of regret among algorithms}\label{Fig:regret_multialg}
\end{figure}

\begin{figure}[!htb]	
	\centering
	\subfigure[$\rho=0.125$]{ 
		\includegraphics[width=0.24\textwidth]{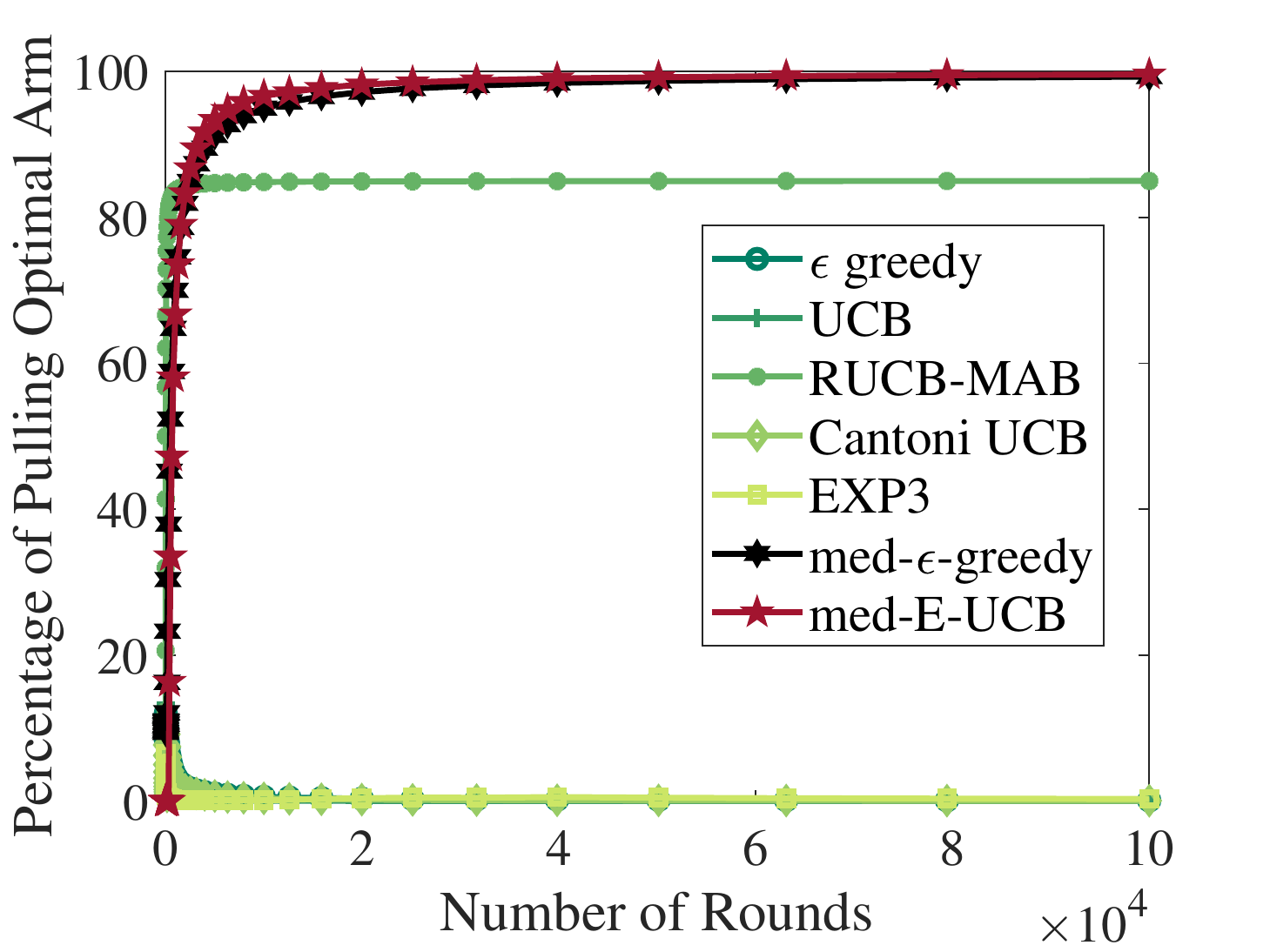}}\subfigure[$\rho=0.3$]{ 
		\includegraphics[width=0.24\textwidth]{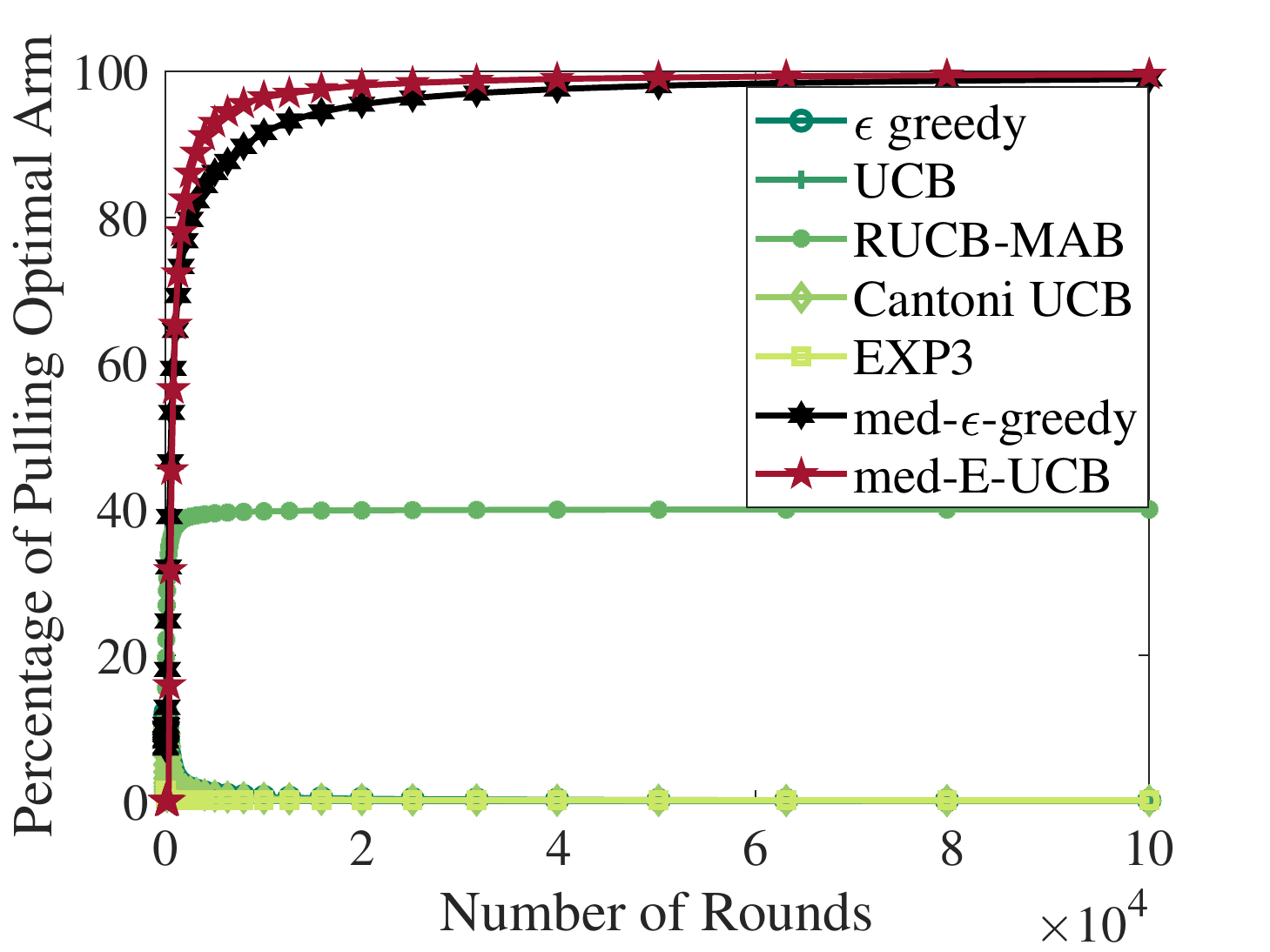}}
	\caption{Comparison of percentage of pulling the optimal arm among algorithms}\label{Fig:per_multialg}  
\end{figure}

\begin{figure}[!htb]	
	\subfigure[$\rho= 0.125$]{ 
		\includegraphics[width=0.24\textwidth]{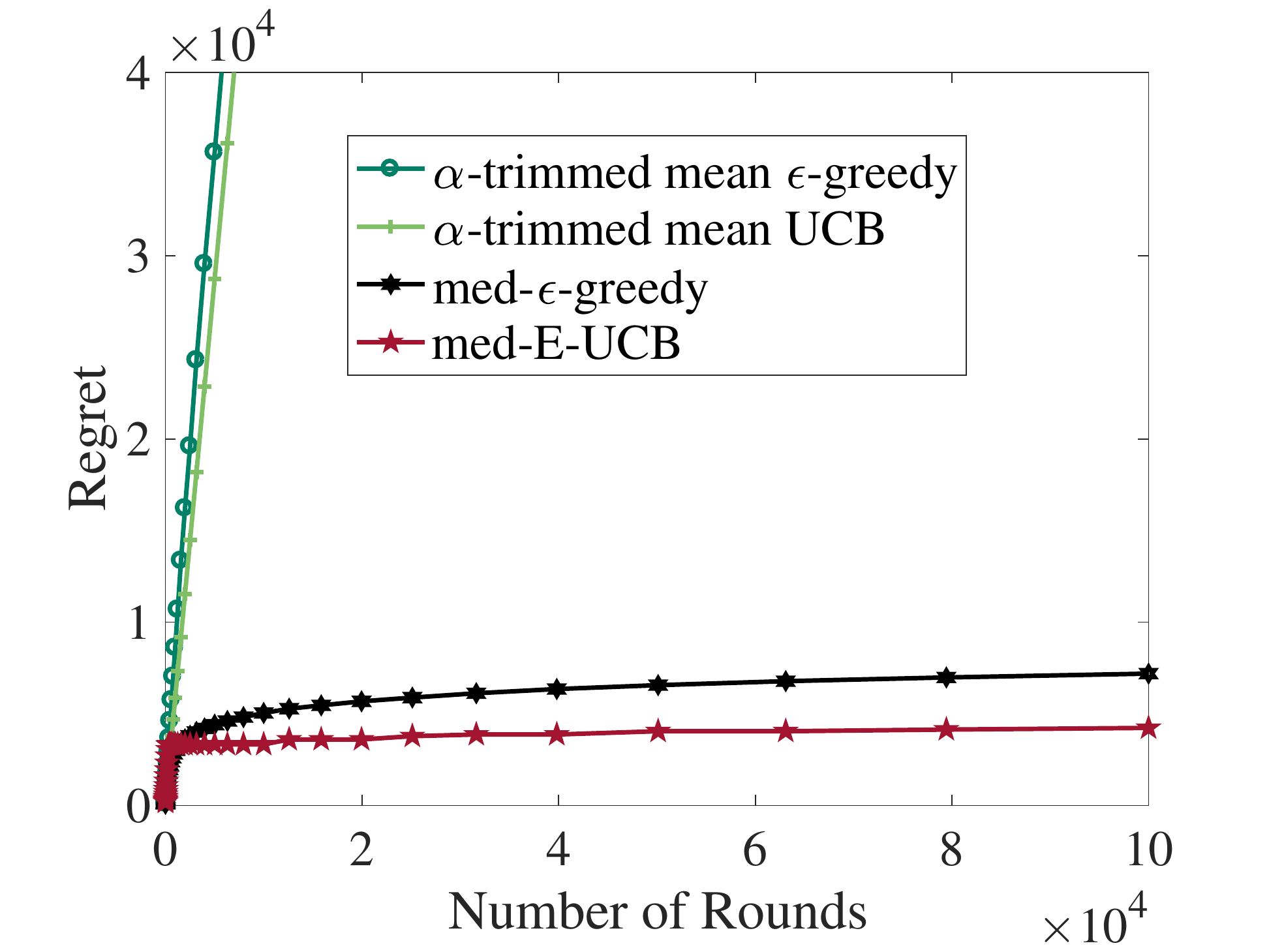}}\subfigure[$\rho= 0.3$]{ 
		\includegraphics[width=0.24\textwidth]{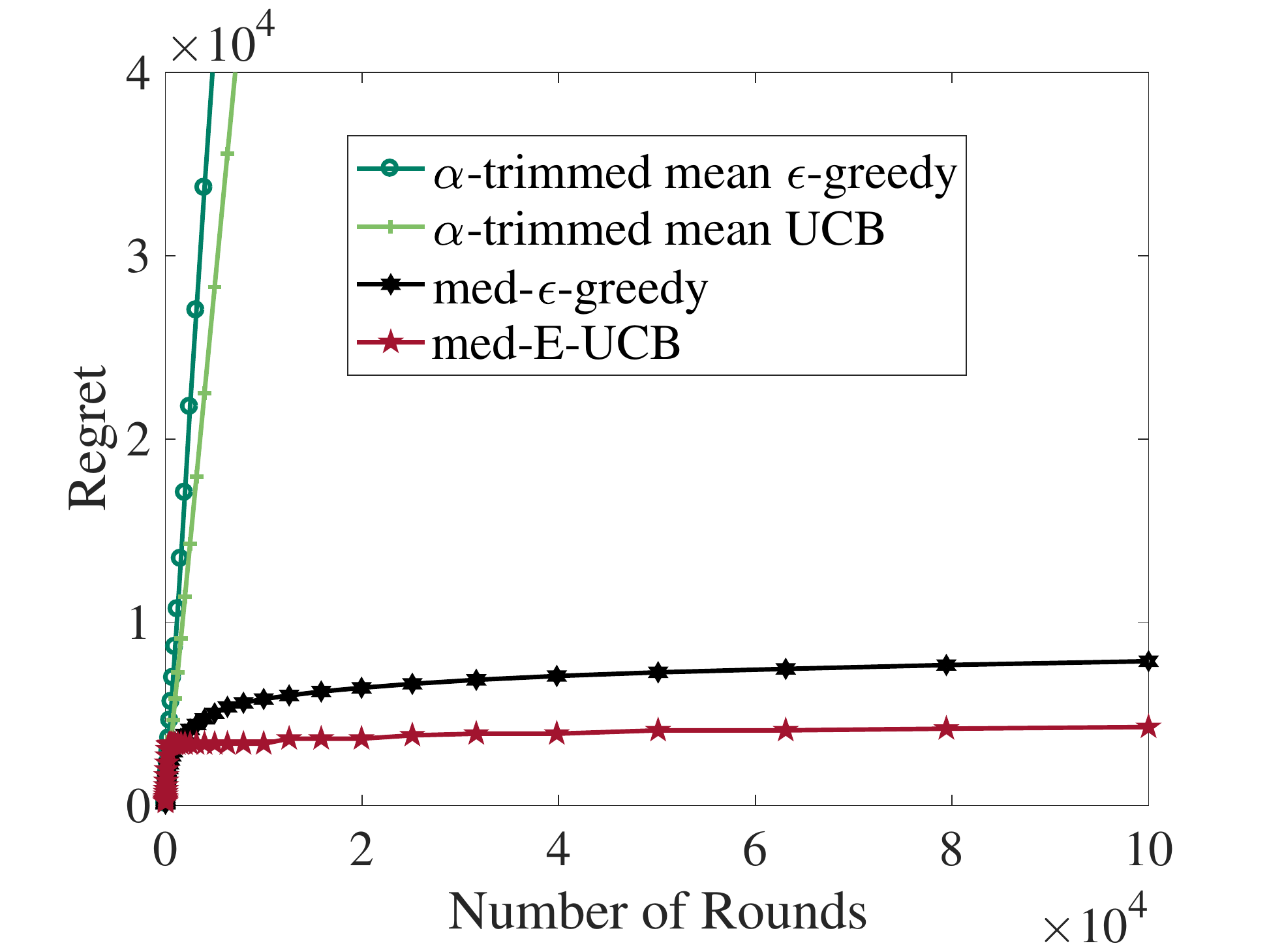}}
	\caption{Comparison of regret among algorithms}\label{Fig:regret_alpha}
\end{figure}
\begin{figure}[!htb]	
	\subfigure[$\rho= 0.125$]{ 
		\includegraphics[width=0.24\textwidth]{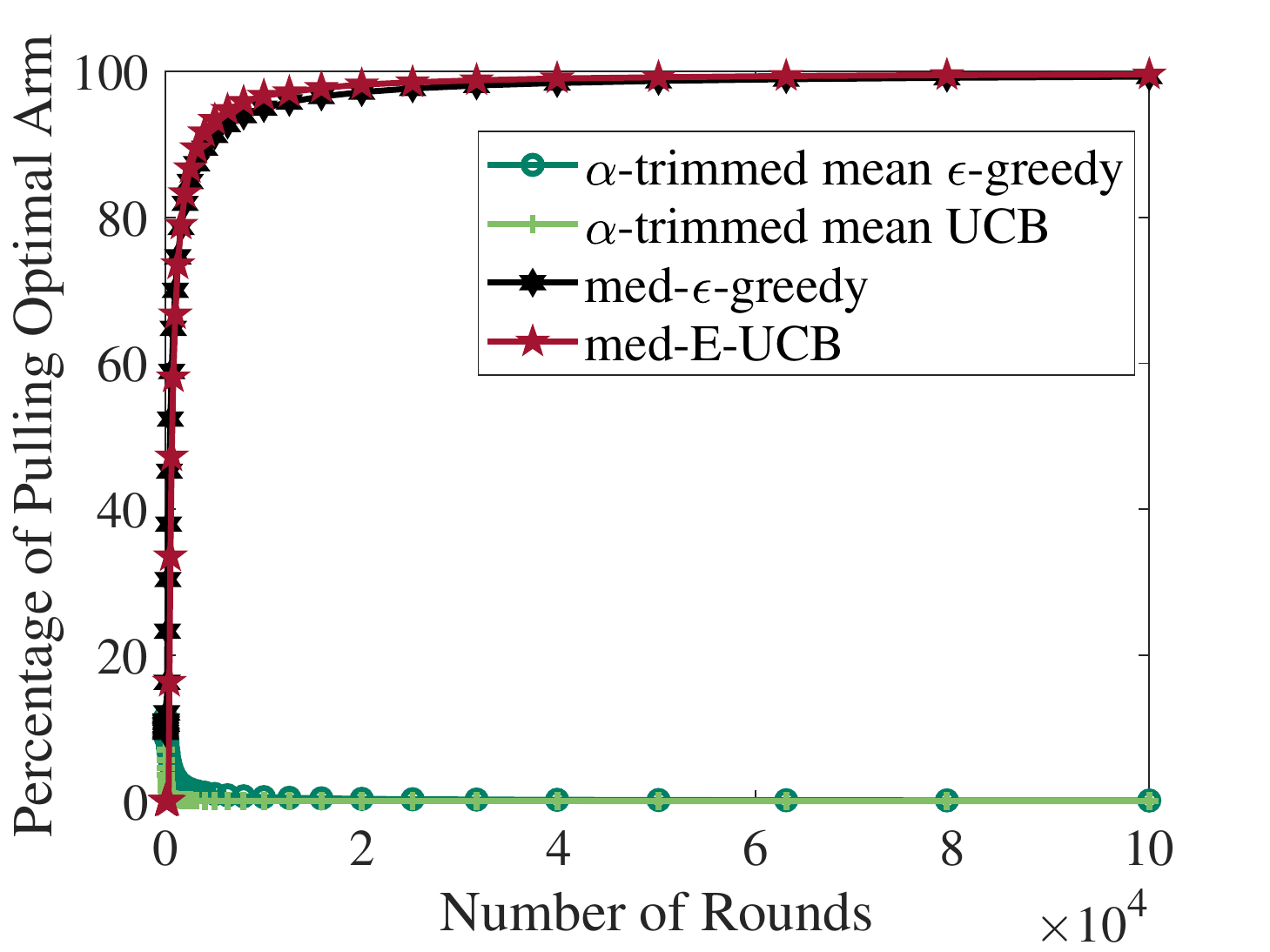}}\subfigure[$\rho= 0.3$]{ 
		\includegraphics[width=0.24\textwidth]{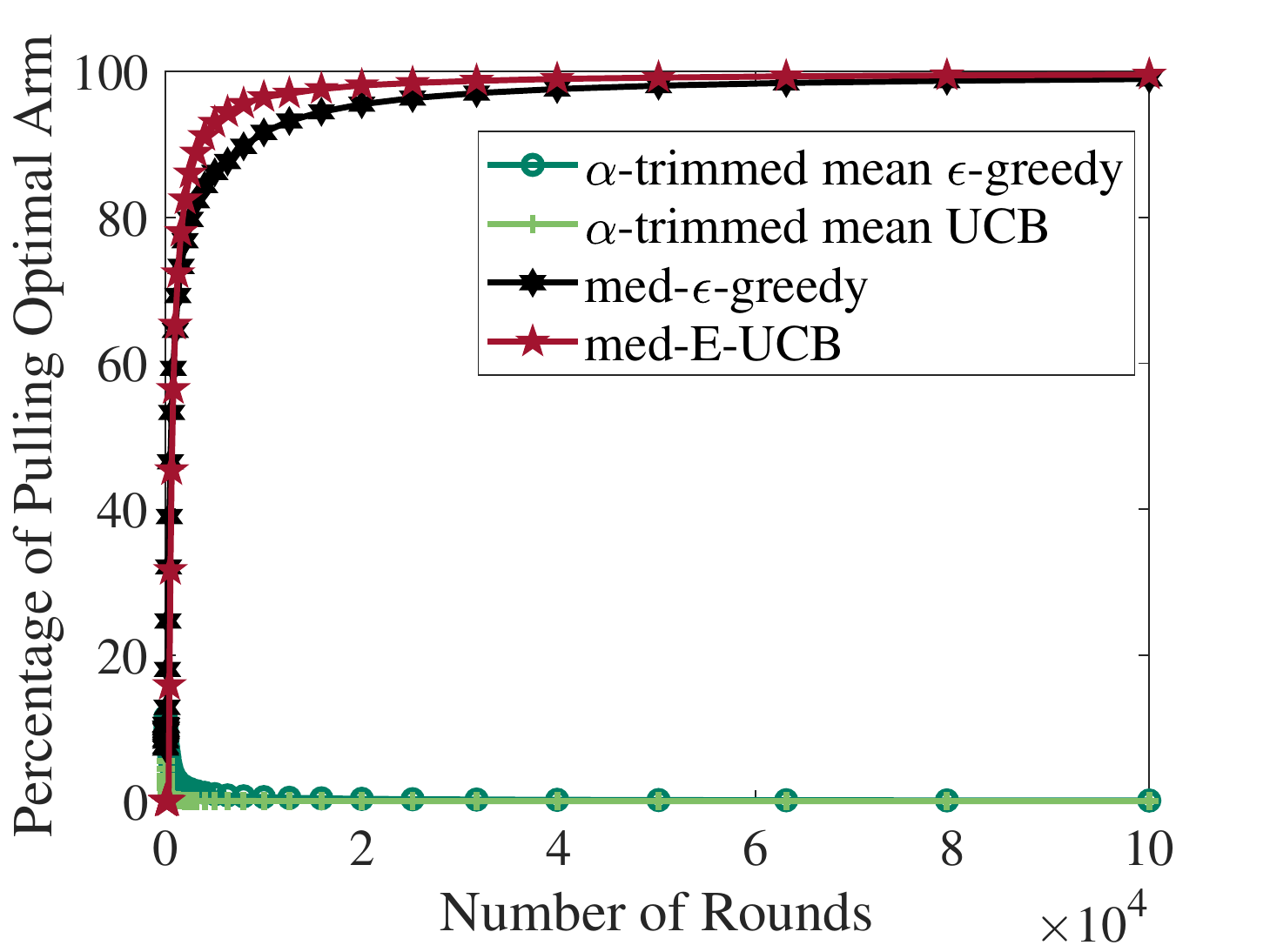}}
	\caption{Comparison of percentage pulling of optimal arm among algorithms}\label{Fig:per_alpha}
\end{figure}

We further note that as shown in \Cref{Fig:regret_multialg,Fig:per_multialg}, RUCB-MAB in \cite{kapoor2019corruption} does not defend against large valued attacks. This fact indicates that direct replacement of sample mean by sample median as in RUCB-MAB does not yield sufficiently robust performance. As a result, the diminishing periodic exploration in our med-E-UCB turns out to play a critical role in its successful defense.  

We further compare the performance of our med-E-UCB and med-$\epsilon$-greedy with a so-called $\alpha$-trimmed scheme \cite{bednar1984alpha}, which is a very popular method in signal processing to deal with similar unbounded arbitrary attacks. The idea of such a scheme is to remove the top and bottom  $\alpha$ fraction of samples before calculating the sample mean in order to eliminate the influence of outliers. It can be seen from \Cref{Fig:regret_alpha,Fig:per_alpha} that med-E-UCB and med-$\epsilon$-greedy significantly outperform the $\alpha$-trimmed UCB and $\alpha$-trimmed $\epsilon$-greedy algorithms.

\subsection{Experiment over Cognitive Radio Testbed}\label{sec:radio}
In this subsection, we present experimental results over a wireless over-the-air radio testbed (\Cref{fig:usrp_setup}) to validate the performance of med-E-UCB and med-$\epsilon$-greedy. The testbed models a pair of secondary users in a cognitive radio network opportunistically sharing spectrum resources with a primary user (not shown) while simultaneously defending against a stochastic adversarial jammer. We model the channel selection problem of the secondary users as a multi-armed bandit and use the channel signal to interference and noise ratio (SINR) as a measure of reward. The SINR is approximated using the inverse of error vector magnitude, which has a linear relationship for the range of signal powers we measure in the software defined radio (SDR). The transmitted packet signal power is constant throughout the experiment so that lower noise channels have a greater SINR compared with noisier channels which have a lower SINR. We model our unbounded adversary according to \eqref{eq:bandit_prob}. If the adversary attacks, a fixed power noise jamming attack is placed over top the secondary user signal packet, significantly reducing SINR by 40dB (i.e, 4 orders of magnitude). Each radio node uses an Ettus Research Universal Software Radio Peripheral (USRP) B200 Software Defined Radio (SDR).

\begin{figure}[!htb]
	\centering
		\includegraphics[width=0.60\linewidth]{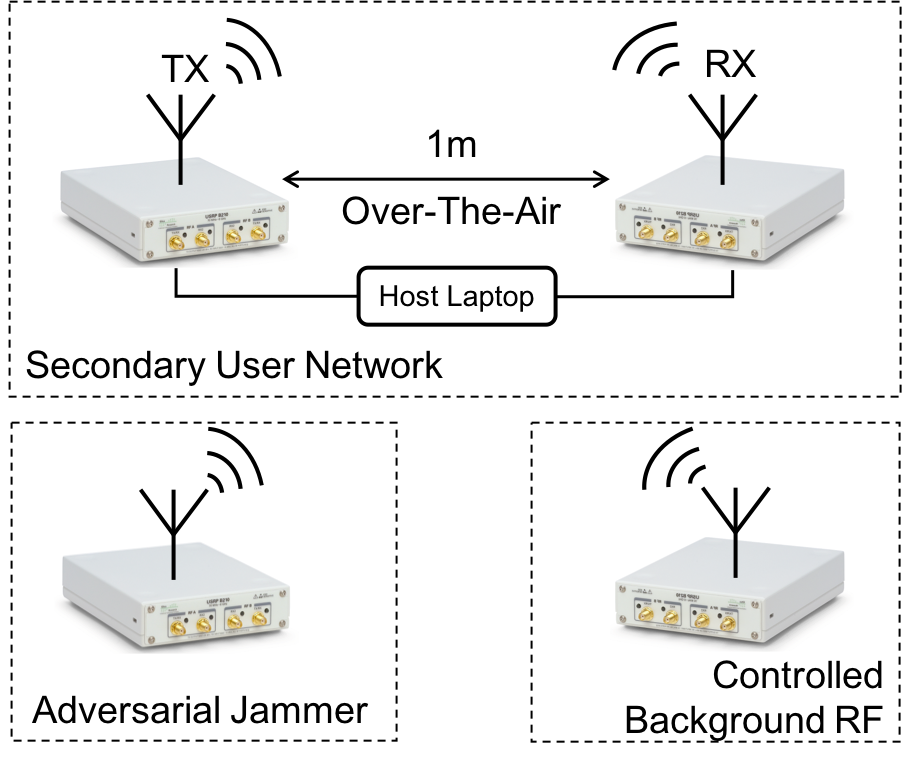}
	\caption{Cognitive radio testbed hardware setup. The receive node (RX) selects a channel according to its assigned policy and communicates the selected channel to the transmit node (TX) via an ACK channel. TX transmits a 500 kHz bandwidth QPSK signal in the 1.2 GHz UHF band. RX receives a reward based on the selected channel conditions.
	}
	\label{fig:usrp_setup}
	\end{figure}
\begin{figure}
	\centering
	\includegraphics[width=0.7\linewidth]{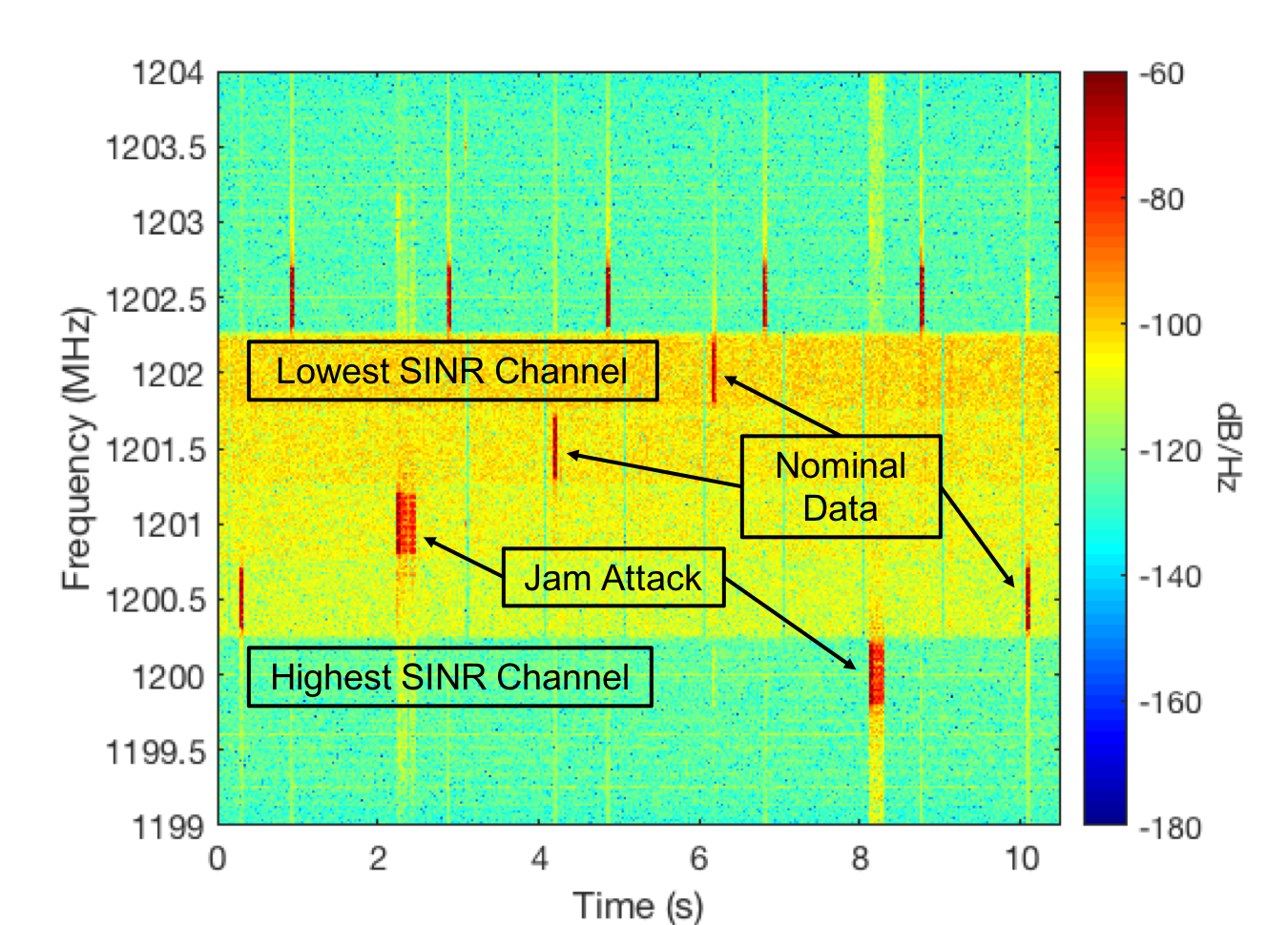} 
	\caption{Example spectrogram of received data.  Five individual 500 kHz channels are available for selection between center frequencies 1200 MHz and 1202 MHz with increasing amounts of background noise. Data transmission are sent every 2-seconds indicated by thin red data packets. Some proportion of data transmissions are overlaid with thicker red bars, indicating an adversarial attack.
	}\label{fig:spectrogram}\label{fig:test_setup}
\end{figure}

\begin{figure}[!htb]
	\centering
	\subfigure[$\rho= 0.125$]{
		\includegraphics[width=0.24\textwidth]{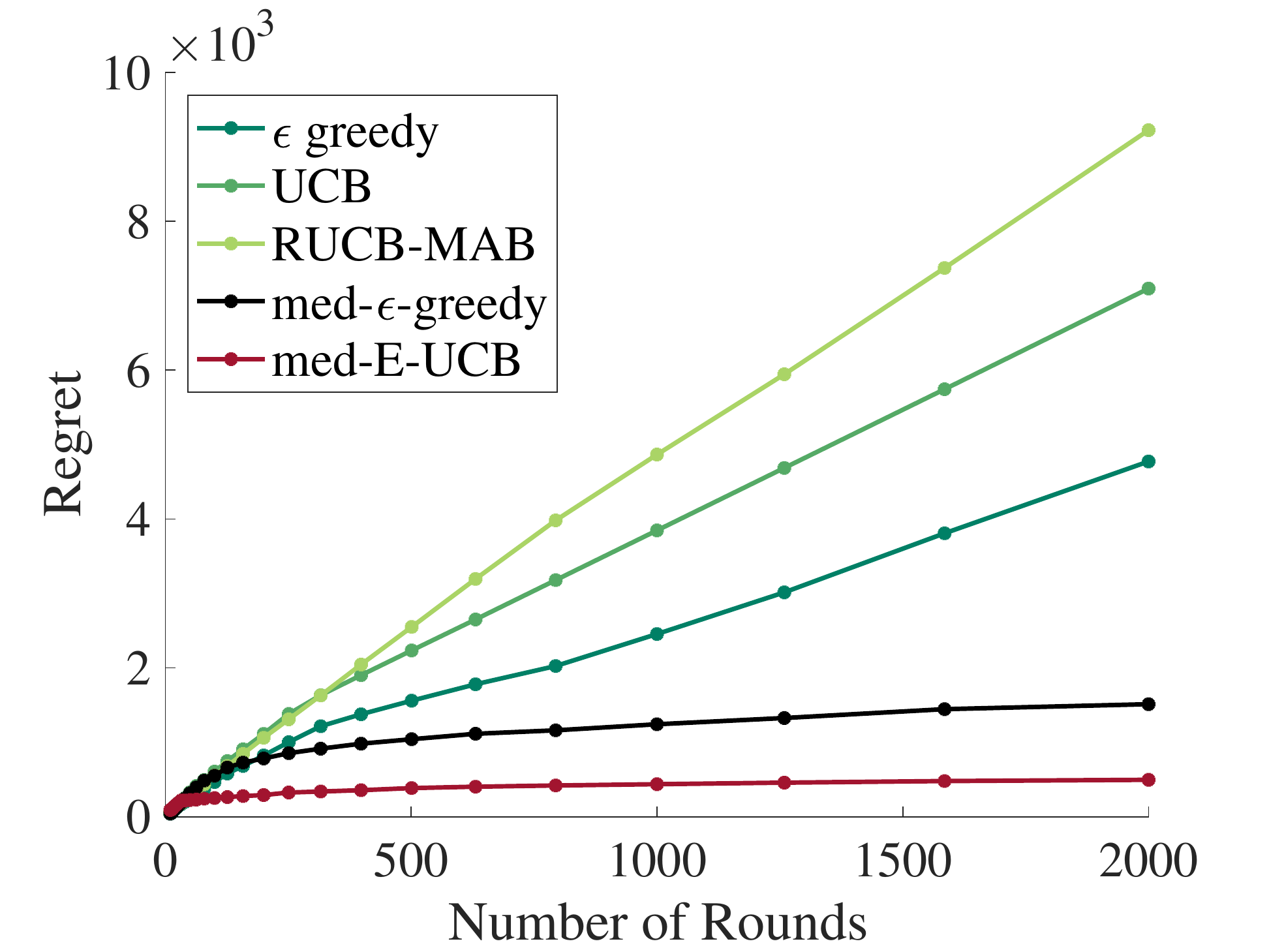}}\subfigure[$\rho= 0.3$]{
		\includegraphics[width=0.24\textwidth]{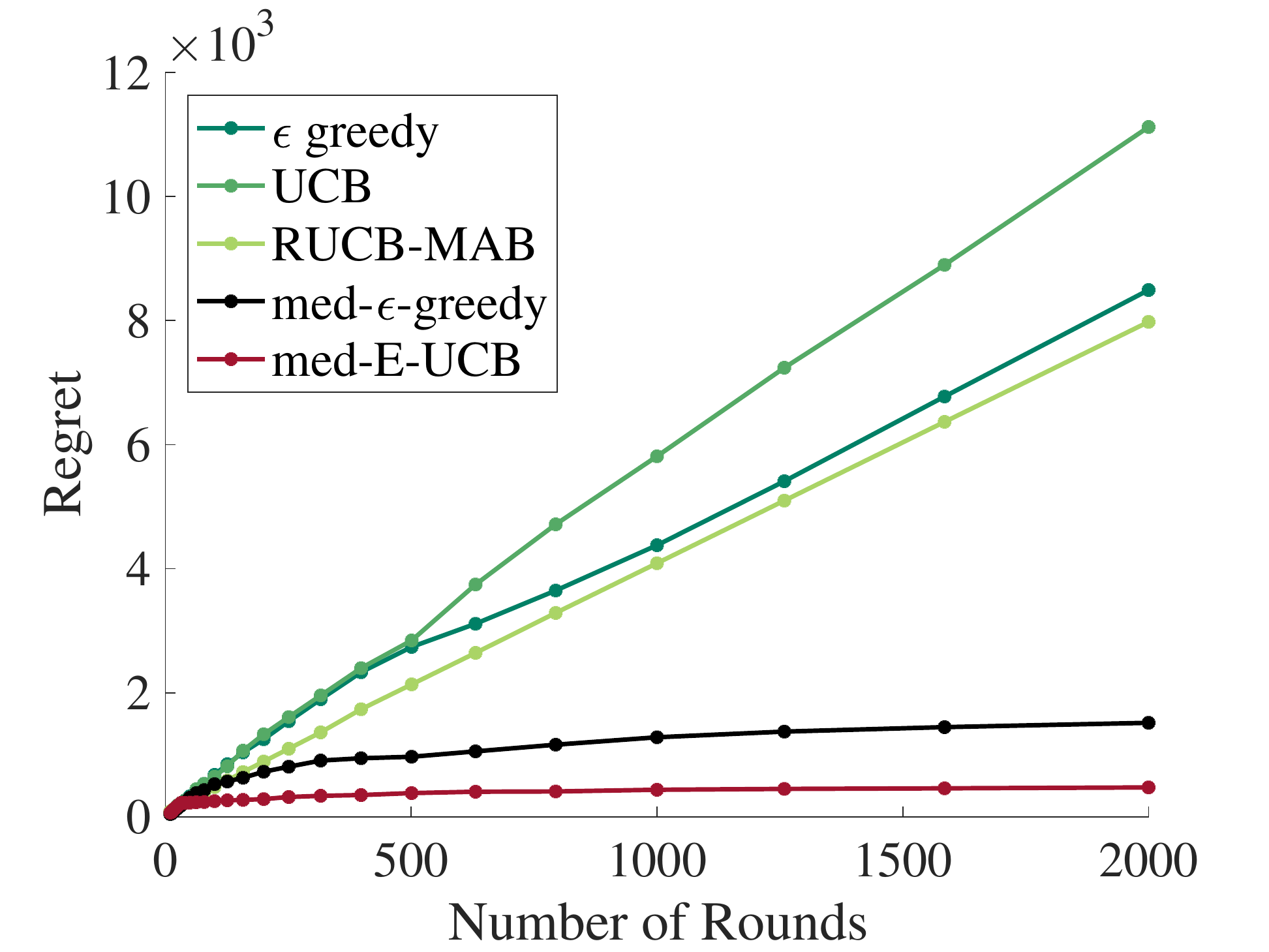}}
	\caption{Comparison of regret among algorithms}\label{fig:usr_algs}
\end{figure}
\begin{figure}[!htb]
	\centering
	\subfigure[$\rho= 0.125$]{
		\includegraphics[width=0.24\textwidth]{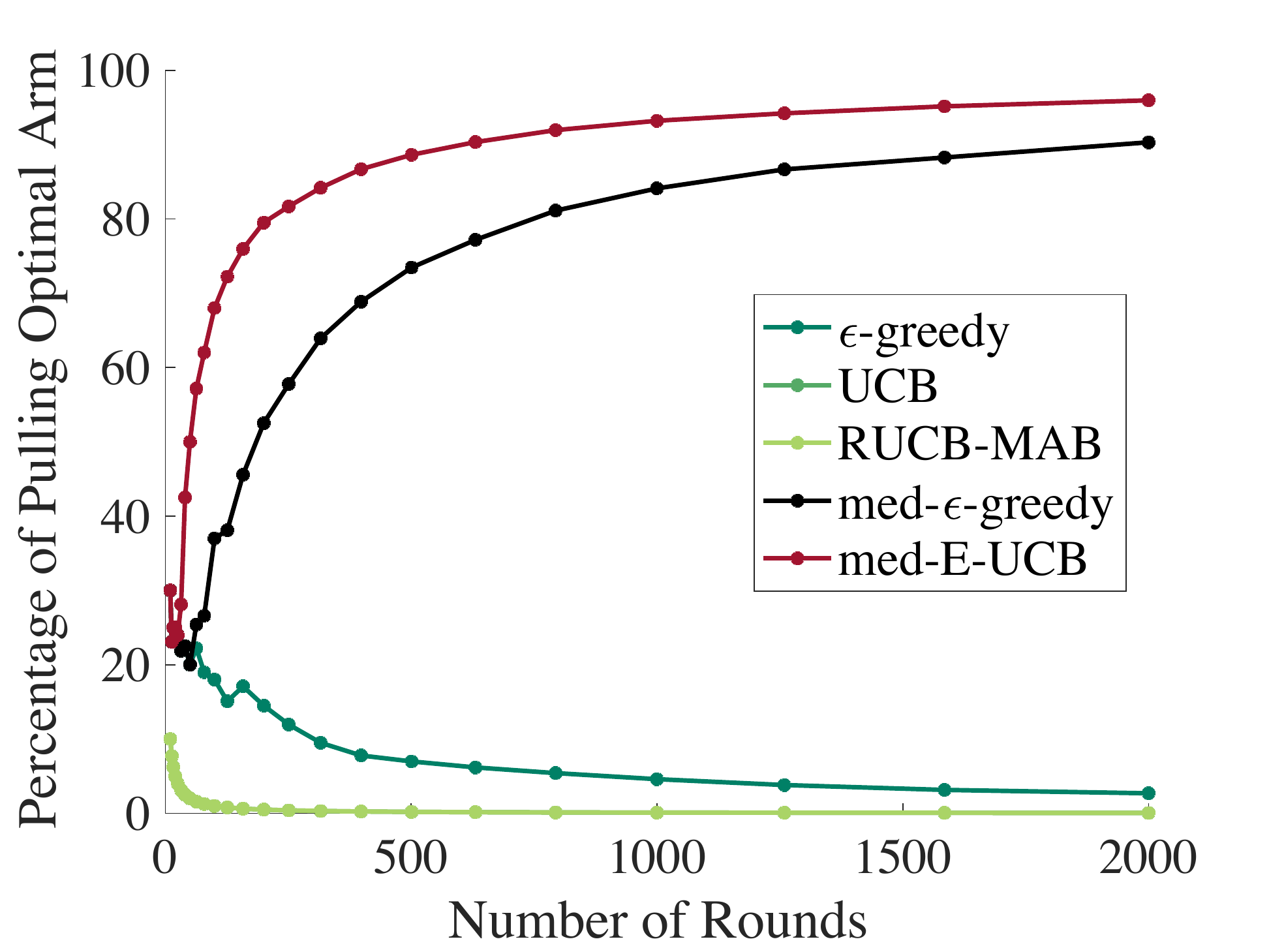}}\subfigure[$\rho= 0.3$]{
		\includegraphics[width=0.24\textwidth]{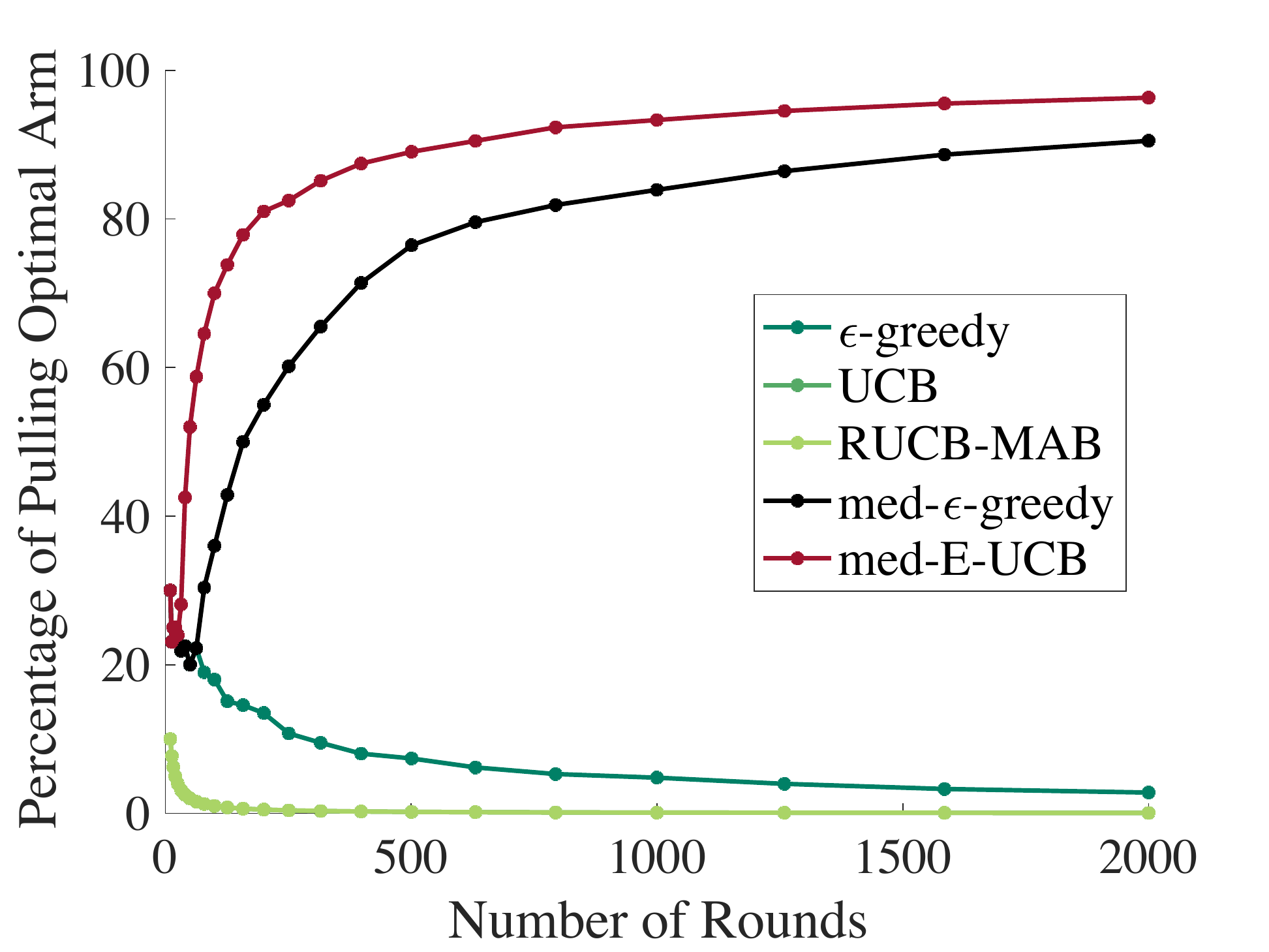}}
	\caption{Comparison of percentage of pulling the optimal arm among algorithms}\label{fig:usp_algs}
\end{figure} 

We center our experiment in the 1.2 GHz UHF frequency band across a 5 MHz RF bandwidth. Channel SINR is controlled using an additional USRP to transmit frequency varying noise to create five contiguous 500 kHz channels centered between 1200 MHz and 1202 MHz (\Cref{fig:test_setup}). The rewards of the 5 channels without adversarial perturbation are normally distributed with mean SINR of [41, 37, 35, 31, 28] dB and unit variance. Mean SINR of adversarially attacked rounds range between 5 to 10 dB. We use identical parameters as our simulations from \Cref{sec:simulation}, with $c = 10$ for med-$\epsilon$-greedy and $b = 4$, $\omega = 4$, for med-E-UCB. Similarly, the attack probabilities are fixed to be $\rho$ which equals either $0.125$ or $0.3$. Due to hardware timing constraints, rounds occur at 2-second intervals. So we reduce the total number of rounds $T = 2000$ for each experiment and set $G = 200$ for med-E-UCB. 

The experiment results are illustrated in \Cref{fig:usr_algs,fig:usp_algs}, which indicate that med-E-UCB and med-$\epsilon$-greedy achieve logarithmic regret compared with other algorithms including RUCB-MAB as well as mean UCB and mean $\epsilon$-greedy. Similarly, our algorithms both eventually converge to a 100\% pull rate of the optimal arm.

\section{Conclusion}

In this work, we proposed two median-based bandit algorithms, which we show to be robust under probabilistic unbounded valued adversarial attacks. Our median-based method can be potentially applied to many other models including multi-player bandits \cite{gai2011decentralized} and UCT \cite{kocsis2006bandit}.

\section*{Acknowledgment}

The work of Z. Guan, K. Ji and Y. Liang is supported in part by U.S.\ National Science Foundation under the grants CCF-1801846 and ECCS-1818904.

\bibliographystyle{aaai}\bibliography{references}

\begin{thebibliography}{}

\bibitem[\protect\citeauthoryear{Altschuler, Brunel, and
  Malek}{2019}]{altschuler2018best}
Altschuler, J.; Brunel, V.-E.; and Malek, A.
\newblock 2019.
\newblock Best arm identification for contaminated bandits.
\newblock {\em Proceedings of Machine Learning Research (PMLR)}.

\bibitem[\protect\citeauthoryear{Audibert and
  Bubeck}{2009}]{audibert2009minimax}
Audibert, J.-Y., and Bubeck, S.
\newblock 2009.
\newblock Minimax policies for adversarial and stochastic bandits.

\bibitem[\protect\citeauthoryear{Auer and Chiang}{2016}]{auer2016algorithm}
Auer, P., and Chiang, C.-K.
\newblock 2016.
\newblock An algorithm with nearly optimal pseudo-regret for both stochastic
  and adversarial bandits.
\newblock In {\em Proc. Conference on Learning Theory (COLT)},  116--120.

\bibitem[\protect\citeauthoryear{Auer \bgroup et al\mbox.\egroup
  }{2002}]{auer2002nonstochastic}
Auer, P.; Cesa-Bianchi, N.; Freund, Y.; and Schapire, R.~E.
\newblock 2002.
\newblock The nonstochastic multiarmed bandit problem.
\newblock {\em SIAM Journal on Computing} 32(1):48--77.

\bibitem[\protect\citeauthoryear{Auer, Cesa-Bianchi, and
  Fischer}{2002}]{auer2002finite}
Auer, P.; Cesa-Bianchi, N.; and Fischer, P.
\newblock 2002.
\newblock Finite-time analysis of the multiarmed bandit problem.
\newblock {\em Machine Learning} 47(2-3):235--256.

\bibitem[\protect\citeauthoryear{Awerbuch and
  Kleinberg}{2004}]{awerbuch2004adaptive}
Awerbuch, B., and Kleinberg, R.~D.
\newblock 2004.
\newblock Adaptive routing with end-to-end feedback: Distributed learning and
  geometric approaches.
\newblock In {\em Proc. Annual ACM Symposium on Theory of Computing},  45--53.

\bibitem[\protect\citeauthoryear{Bednar and Watt}{1984}]{bednar1984alpha}
Bednar, J., and Watt, T.
\newblock 1984.
\newblock Alpha-trimmed means and their relationship to median filters.
\newblock {\em IEEE Transactions on Acoustics, Speech, and Signal Processing}
  32(1):145--153.

\bibitem[\protect\citeauthoryear{Bubeck and
  Cesa-Bianchi}{2012}]{bubeck2012regret}
Bubeck, S., and Cesa-Bianchi, N.
\newblock 2012.
\newblock Regret analysis of stochastic and nonstochastic multi-armed bandit
  problems.
\newblock {\em Foundations and Trends{\textregistered} in Machine Learning}
  5(1):1--122.

\bibitem[\protect\citeauthoryear{Bubeck and Slivkins}{2012}]{bubeck2012best}
Bubeck, S., and Slivkins, A.
\newblock 2012.
\newblock The best of both worlds: stochastic and adversarial bandits.
\newblock In {\em Proc. Conference on Learning Theory (COLT)},  42--1.

\bibitem[\protect\citeauthoryear{Bubeck, Cesa-Bianchi, and
  Lugosi}{2013}]{bubeck2013bandits}
Bubeck, S.; Cesa-Bianchi, N.; and Lugosi, G.
\newblock 2013.
\newblock Bandits with heavy tail.
\newblock {\em IEEE Transactions on Information Theory} 59(11):7711--7717.

\bibitem[\protect\citeauthoryear{Gai and
  Krishnamachari}{2011}]{gai2011decentralized}
Gai, Y., and Krishnamachari, B.
\newblock 2011.
\newblock Decentralized online learning algorithms for opportunistic spectrum
  access.
\newblock In {\em IEEE Global Telecommunications Conference (GLOBECOM)},  1--6.

\bibitem[\protect\citeauthoryear{Gupta, Koren, and
  Talwar}{2019}]{DBLP:journals/corr/abs-1902-08647}
Gupta, A.; Koren, T.; and Talwar, K.
\newblock 2019.
\newblock Better algorithms for stochastic bandits with adversarial
  corruptions.
\newblock {\em Proc. Conference of Learning Theory (COLT)}.

\bibitem[\protect\citeauthoryear{Jun \bgroup et al\mbox.\egroup
  }{2018}]{jun2018adversarial}
Jun, K.-S.; Li, L.; Ma, Y.; and Zhu, J.
\newblock 2018.
\newblock Adversarial attacks on stochastic bandits.
\newblock In {\em Proc. Advances in Neural Information Processing Systems
  (NeurIPS)},  3640--3649.

\bibitem[\protect\citeauthoryear{Kapoor, Patel, and
  Kar}{2019}]{kapoor2019corruption}
Kapoor, S.; Patel, K.~K.; and Kar, P.
\newblock 2019.
\newblock Corruption-tolerant bandit learning.
\newblock {\em Machine Learning} 108(4):687--715.

\bibitem[\protect\citeauthoryear{Klivans, Kothari, and
  Meka}{2018}]{pmlr-v75-klivans18a}
Klivans, A.; Kothari, P.~K.; and Meka, R.
\newblock 2018.
\newblock Efficient algorithms for outlier-robust regression.
\newblock {\em Proc. Conference On Learning Theory (COLT)}  1420--1430.

\bibitem[\protect\citeauthoryear{Kocsis and
  Szepesv{\'a}ri}{2006}]{kocsis2006bandit}
Kocsis, L., and Szepesv{\'a}ri, C.
\newblock 2006.
\newblock Bandit based monte-carlo planning.
\newblock In {\em European Conference on Machine Learning},  282--293.
\newblock Springer.

\bibitem[\protect\citeauthoryear{Kuleshov and
  Precup}{2014}]{kuleshov2014algorithms}
Kuleshov, V., and Precup, D.
\newblock 2014.
\newblock Algorithms for multi-armed bandit problems.
\newblock {\em Journal of Machine Learning Research}.

\bibitem[\protect\citeauthoryear{Lai and Robbins}{1985}]{lai1985asymptotically}
Lai, T.~L., and Robbins, H.
\newblock 1985.
\newblock Asymptotically efficient adaptive allocation rules.
\newblock {\em Advances in Applied Mathematics} 6(1):4--22.

\bibitem[\protect\citeauthoryear{Li \bgroup et al\mbox.\egroup
  }{2010}]{li2010contextual}
Li, L.; Chu, W.; Langford, J.; and Schapire, R.~E.
\newblock 2010.
\newblock A contextual-bandit approach to personalized news article
  recommendation.
\newblock In {\em Proc. International Conference on World Wide Web},  661--670.

\bibitem[\protect\citeauthoryear{Liu and Shroff}{2019}]{liu2019data}
Liu, F., and Shroff, N.
\newblock 2019.
\newblock Data poisoning attacks on stochastic bandits.
\newblock In {\em Proc. International Conference on Machine Learning (ICML)}.

\bibitem[\protect\citeauthoryear{Lykouris, Mirrokni, and
  Paes~Leme}{2018}]{lykouris2018stochastic}
Lykouris, T.; Mirrokni, V.; and Paes~Leme, R.
\newblock 2018.
\newblock Stochastic bandits robust to adversarial corruptions.
\newblock In {\em Proc. Annual ACM SIGACT Symposium on Theory of Computing},
  114--122.

\bibitem[\protect\citeauthoryear{Pandey \bgroup et al\mbox.\egroup
  }{2007}]{pandey2007bandits}
Pandey, S.; Agarwal, D.; Chakrabarti, D.; and Josifovski, V.
\newblock 2007.
\newblock Bandits for taxonomies: A model-based approach.
\newblock In {\em Proc. SIAM International Conference on Data Mining},
  216--227.

\bibitem[\protect\citeauthoryear{Seldin and Lugosi}{2017}]{seldin2017improved}
Seldin, Y., and Lugosi, G.
\newblock 2017.
\newblock An improved parametrization and analysis of the exp3++ algorithm for
  stochastic and adversarial bandits.
\newblock In {\em Proc. Conference on Learning Theory (COLT)}.

\bibitem[\protect\citeauthoryear{Seldin and Slivkins}{2014}]{seldin2014one}
Seldin, Y., and Slivkins, A.
\newblock 2014.
\newblock One practical algorithm for both stochastic and adversarial bandits.
\newblock In {\em Proc. International Conference on Machine Learning (ICML)},
  1287--1295.

\bibitem[\protect\citeauthoryear{Stoltz}{2005}]{stoltz2005incomplete}
Stoltz, G.
\newblock 2005.
\newblock {\em Incomplete information and internal regret in prediction of
  individual sequences}.
\newblock Ph.D. Dissertation.

\bibitem[\protect\citeauthoryear{Tyler}{2008}]{tyler2008robust}
Tyler, D.~E.
\newblock 2008.
\newblock Robust statistics: Theory and methods.
\newblock {\em Journal of the American Statistical Association}
  103(482):888--889.

\bibitem[\protect\citeauthoryear{Vershynin}{2018}]{vershynin2018high}
Vershynin, R.
\newblock 2018.
\newblock {\em High-dimensional probability: An introduction with applications
  in data science}, volume~47.
\newblock Cambridge University Press.

\bibitem[\protect\citeauthoryear{Zhang, Chi, and
  Liang}{2016}]{zhang2016provable}
Zhang, H.; Chi, Y.; and Liang, Y.
\newblock 2016.
\newblock Provable non-convex phase retrieval with outliers: Median
  truncatedwirtinger flow.
\newblock In {\em Proc. International Conference on Machine Learning (ICML)},
  1022--1031.

\bibitem[\protect\citeauthoryear{Zhang, Chi, and Liang}{2018}]{zhang2016median}
Zhang, H.; Chi, Y.; and Liang, Y.
\newblock 2018.
\newblock Median-truncated nonconvex approach for phase retrieval with
  outliers.
\newblock {\em IEEE Transactions on Information Theory} 64:7287--7310.

\bibitem[\protect\citeauthoryear{Zimmert and Seldin}{2019}]{zimmert2018optimal}
Zimmert, J., and Seldin, Y.
\newblock 2019.
\newblock An optimal algorithm for stochastic and adversarial bandits.
\newblock {\em International Conference on Artificial Intelligence and
  Statistics (AISTATS)}.

\end{thebibliography}

\onecolumn
\noindent\appendix{\LARGE \bf Supplementary Materials}
\section{Specification of Parameters for Algorithms in Experiments}\label{app:exp}
\begin{itemize}
	
	\item \textbf{med-E-UCB}: $b= 4, \omega=4$, and $G= 10000$
	
	\item \textbf{med-$\epsilon$-greedy}: $c = 10$
	
	\item \textbf{UCB}: refers to the ($\alpha, \psi$)-UCB algorithm proposed in \cite[Chap. 2.2]{bubeck2012regret}, with its penalty function $(\psi^*)^{-1}$ as recommended in \cite{bubeck2012regret}, and its parameter $\alpha$ tuned to the best in our experiment.
	
	\item \textbf{$\epsilon$-greedy}: refers to the algorithm proposed in \cite{auer2002finite}, with its exploration parameter $c$ tuned to the best in our experiment.
	
	\item \textbf{RUCB-MAB}: refers to the median-based vanilla UCB algorithm proposed in \cite{kapoor2019corruption}. Such an algorithm directly applies the sample median to replace the sample mean in UCB. 
	
	\item \textbf{EXP3}: refers to \cite{auer2002nonstochastic}. Given the time horizon $T$, we set the $\gamma =\min\left\{1, \sqrt{\frac{K\ln K}{(e-1)T}}\right\}$ as in corollary3.2 \cite{auer2002nonstochastic} suggests, and we take $g$ in the corollary  to be $T$.  
	
	\item \textbf{Cantoni UCB}: refers to the modified robust UCB algorithm with Cantoni's M-estimator proposed in \cite{bubeck2013bandits}, which can tolerate certain heavy tail outlier samples. 
	
	\item \textbf{$\alpha$-trimmed mean UCB}: refers to an $\alpha$-trimmed mean-based UCB algorithm, where we remove the top and bottom  $\alpha$ fraction of samples before calculating the sample mean. In our experiment, we choose $\alpha$ from $\{0.125, 0.3\}$ depending on the attack probability $\rho$, and tune other parameters to the best.
	
	\item \textbf{$\alpha$-trimmed mean $\epsilon$-greedy}: refers to an $\alpha$-trimmed mean-based $\epsilon$-greedy algorithm. We choose $\alpha$ from $\{0.125, 0.3\}$ depending on the attack probability $\rho$, and tune other parameters to the best in the experiment.

\end{itemize}

\section{Technical Lemmas}\label{sec:appen}
To prove the main results, we first establish several useful lemmas.  The following lemma shows that for each arm $i$, the proportion $s_i(m)$ of its attacked samples to its total $t$ samples  is  upper-bounded with high probability when $m$ is large enough. 
\begin{lemma}
	\label{lemma3}\label{lemma:2}
	Let $s_j(m) \defeq  \frac{\sum_{k\in Q_j(t), T_j(t) = m} \indicator{\eta_{k}\neq 0}}{m}$ 	denote the proportion of attacked data out of $m$ collected data from the $j$th arm.  Define an event $\mathcal{E} = \{ s_j(m_j) \le \rho + \epsilon_0 : m_j \ge N, j = 1,... K \}$, where $N =\ceiling{\frac{1}{2\epsilon_0^2} \log \frac{K}{ 2\epsilon_0^2\delta}}+1 \in \mathbb{N}$, with a small constant $0<\delta<1$. Then, we have $\prob{\mathcal{E}} \ge 1- \delta$. 
\end{lemma}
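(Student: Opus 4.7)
The plan is to reduce the statement to a concentration bound on the attack indicators restricted to each arm, then to apply Hoeffding together with a union bound over both sample sizes $m \ge N$ and arms $j \in [K]$.

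First I would argue that the restricted attack sequence for each arm is i.i.d. Bernoulli($\rho$). By the model in \eqref{eq:bandit_prob}, the attack indicator $Z_t := \indicator{\eta_t \neq 0}$ at round $t$ is Bernoulli($\rho$) independent of the history $\mathcal{F}_{t-1}$, and in particular independent of the player's choice $I_t$, which is $\mathcal{F}_{t-1}$-measurable. Letting $\tau_j(m)$ denote the round at which arm $j$ is pulled for the $m$th time (a stopping time with respect to the natural filtration), a standard optional-sampling argument shows that $\{Z_{\tau_j(m)}\}_{m \ge 1}$ is a sequence of i.i.d. Bernoulli($\rho$) random variables. Consequently, for any fixed $m$, Hoeffding's inequality gives
\begin{equation*}
\mathbb{P}\bigl(s_j(m) > \rho + \epsilon_0\bigr) \;\le\; \exp\bigl(-2m\epsilon_0^2\bigr).
\end{equation*}

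Next I would union bound over all $m \ge N$ for a fixed arm $j$. Using the integral comparison $\sum_{m=N}^{\infty} e^{-2m\epsilon_0^2} \le \int_{N-1}^{\infty} e^{-2x\epsilon_0^2}\,dx = \frac{e^{-2(N-1)\epsilon_0^2}}{2\epsilon_0^2}$, we obtain
\begin{equation*}
\mathbb{P}\bigl(\exists\, m \ge N:\, s_j(m) > \rho + \epsilon_0\bigr) \;\le\; \frac{\exp\bigl(-2(N-1)\epsilon_0^2\bigr)}{2\epsilon_0^2}.
\end{equation*}
Then a union bound over the $K$ arms yields $\mathbb{P}(\mathcal{E}^c) \le \frac{K}{2\epsilon_0^2}\exp(-2(N-1)\epsilon_0^2)$. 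Plugging in $N = \lceil \frac{1}{2\epsilon_0^2}\log\frac{K}{2\epsilon_0^2 \delta}\rceil + 1$ gives $2(N-1)\epsilon_0^2 \ge \log\frac{K}{2\epsilon_0^2\delta}$, so $\mathbb{P}(\mathcal{E}^c) \le \delta$, which proves the claim.

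The only delicate point, and the one I would be most careful about, is the first step: justifying that the attack indicators along the pull times of arm $j$ are genuinely i.i.d.\ Bernoulli($\rho$) rather than a subsample whose joint law is biased by the adaptive pulling strategy. The key fact is that $Z_t$ is drawn independently of $\mathcal{F}_{t-1}$ and in particular of $I_t$, so conditioning on the event $\{I_t = j\}$ does not change the marginal of $Z_t$; formally, one invokes the optional-sampling argument at the stopping times $\tau_j(m)$. Once this is in place, the remaining steps are a straightforward Hoeffding tail bound followed by two union bounds and the choice of $N$ to absorb the resulting factors into $\delta$.
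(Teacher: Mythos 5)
Your proof is correct and follows essentially the same route as the paper's: Hoeffding's inequality for each fixed sample size, a union bound over $m \ge N$ and over the $K$ arms, a geometric tail estimate of the form $\sum_{m\ge N} e^{-2m\epsilon_0^2} \le \frac{1}{2\epsilon_0^2}e^{-2(N-1)\epsilon_0^2}$, and the choice of $N$ to make the total at most $\delta$. The only difference is that you explicitly justify, via optional sampling, that the attack indicators along the pull times of each arm are i.i.d.\ Bernoulli($\rho$) — a detail the paper leaves implicit (relying on the model assumption that attacks occur independently of the pull history) — which is a welcome bit of extra rigor but not a different argument.
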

\begin{proof}[Proof of Lemma \ref{lemma3}]
	\begin{align*}
	\prob{\mathcal{E}} &= \prob{\bigcap\limits _{j=1}^K \bigcap\limits _{m_j = {N}} ^\infty \left\{ s_j(m_j) - \rho \le  \epsilon_0\right\}}= 1 - \prob{\bigcup\limits _{j=1}^K \bigcup\limits _{m _j= {N}} ^\infty \left\{ s_j(m_j) - \rho >  \epsilon_0\right\} } \nonumber\\
	&\overset{(i)}\ge 1 - \sum_{j=1}^K  \sum_{m _j= {N}}^\infty \prob{ s_j(m_j) - \rho > \epsilon_0}  \overset{(ii)}\ge 1 - \sum_{j=1}^K \sum_{m_j={N}}^{\infty} e^{-2\epsilon_0^2m_j}\\
	& \overset{(iii)}\ge 1- \sum_{j=1}^K \frac{1}{2\epsilon_0^2} \exp\bigparentheses{-2\epsilon_0^2(N-1)} \ge 1 - \sum_{j=1}^K \frac{\delta}{K}  = 1- \delta,
	\end{align*} 
	where ($i$) follows from the union bound, ($ii$) follows from Hoeffding's inequality \cite[See Theorem 2.2.2]{vershynin2018high},  and ($iii$) follows from the fact $\sum\limits_{t = x+1}^\infty e^{-Kt} \le \frac{1}{K} e^{-Kx}$.
\end{proof}
The following lemma provides a concentration bound on the $p$-quantile of attacked data $\{X_i\}_{i=1}^n$. 
\begin{lemma}
	\label{lemma4}
	Let $X_i  = \tilde{X}_i + \eta_i, i=1,....,n$ be $n$ attacked data samples, where $\tilde{X}_i, i=1,...,n$ are original (i.e., un-attacked)  data samples  i.i.d. drawn from the distribution with CDF $F$ and $\eta$ are unbounded attack values. Then, if  the ratio of attacked samples to total samples is upper-bounded by  $s$ (i.e., $\frac{1}{n}\sum_{i=1}^n \indicator{\eta_i \neq 0} \le s$) for constant $s\in (0,1)$. Then for any constants $p \in (s, 1-s)$, and $a,b > 0$, we  have
	\begin{align}
	&\prob{\pquantile{p}{\samplesetcom{X_i}{i=1}{n}}- \pquantile{p-s}{F} \le -a} \le \exp\bigparentheses{-2n[p-s - F(\pquantile{p-s}{F}-a)]^2}\nonumber \\
	&\prob{\pquantile{p}{\samplesetcom{X_i}{i=1}{n}}- \pquantile{p+s}{F} \ge b}\le \exp\bigparentheses{-2n[F(\pquantile{p+s}{F}+b)- p -s]^2}. \label{formula18}
	\end{align}
\end{lemma}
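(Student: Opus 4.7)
The plan is to reduce both concentration inequalities to Hoeffding bounds on the empirical CDF of the \emph{clean} samples $\tilde X_1,\dots,\tilde X_n$, exploiting the fact that at most $s n$ of the $X_i$'s differ from the corresponding $\tilde X_i$'s. Intuitively, attacking at most an $s$-fraction of the points can shift the empirical $p$-quantile by at most an $s$-shift in level, which is exactly why the limits $\theta_{p-s}(F)$ and $\theta_{p+s}(F)$ appear on the right-hand side.

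For the first (lower-tail) bound, I would first observe that, since the empirical CDF $\hat F_X$ is right-continuous and $\theta_p$ is a generalized infimum, the event $\theta_p(\{X_i\}_{i=1}^n) \le \theta_{p-s}(F) - a$ forces $\hat F_X(\theta_{p-s}(F)-a) \ge p$, i.e., at least $p n$ of the $X_i$'s lie at or below the threshold $c := \theta_{p-s}(F)-a$. Because at most $s n$ indices are attacked, at least $(p-s)n$ of those indices correspond to clean observations, so at least $(p-s)n$ of the i.i.d.\ $\tilde X_i$'s satisfy $\tilde X_i \le c$. Setting $Y_i := \mathbbm{1}\{\tilde X_i \le c\}$, the $Y_i$'s are i.i.d.\ Bernoulli with mean $q := F(c)$, and the definition of the generalized quantile gives the strict inequality $q < p-s$, so the deviation $(p-s)-q$ is in the correct direction. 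A direct application of Hoeffding's inequality then yields
\begin{equation*}
\mathbb{P}\!\left(\tfrac{1}{n}\textstyle\sum_i Y_i - q \ge (p-s) - q\right) \le \exp\bigl(-2n[(p-s) - F(\theta_{p-s}(F)-a)]^2\bigr),
\end{equation*}
which is the claimed bound.

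The second (upper-tail) bound is symmetric. The event $\theta_p(\{X_i\}_{i=1}^n) \ge \theta_{p+s}(F)+b$ implies, by the infimum definition of $\theta_p$, that strictly fewer than $p n$ of the $X_i$'s are at or below $c' := \theta_{p+s}(F)+b$; equivalently, more than $(1-p)n$ exceed $c'$. Removing the at most $s n$ attacked indices leaves at least $(1-p-s)n$ clean samples with $\tilde X_i > c'$. Setting $Z_i := \mathbbm{1}\{\tilde X_i > c'\}$, the $Z_i$'s are i.i.d.\ Bernoulli with mean $1 - F(c') \le 1 - (p+s)$ (using right-continuity of $F$ at $\theta_{p+s}(F)$), so the deviation $(1-p-s) - (1-F(c')) = F(c') - (p+s)$ is nonnegative and Hoeffding yields the exponent $-2n[F(\theta_{p+s}(F)+b) - (p+s)]^2$.

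The only real obstacle is the careful translation between the event about the quantile $\theta_p(\hat F_X)$ and a count of samples in a half-line. This step requires the right-/left-continuity properties of empirical CDFs and, for the upper-tail case, the non-strict-versus-strict distinction in the defining infimum; both are routine once one writes $\theta_p(\hat F_X) = X_{(\lceil pn\rceil)}$. Everything else — splitting the count into clean and attacked indices, bounding the attacked count by $sn$, and invoking Hoeffding on the clean i.i.d.\ indicators — is mechanical.
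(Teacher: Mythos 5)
Your proof is correct and follows essentially the same route as the paper's: both reduce the quantile event to a count of clean-sample indicators and apply Hoeffding. The only difference is that the paper first states the deterministic sandwich $\pquantile{p-s}{\samplesetcom{\tilde{X}_i}{i=1}{n}} \le \pquantile{p}{\samplesetcom{X_i}{i=1}{n}} \le \pquantile{p+s}{\samplesetcom{X_i}{i=1}{n}}$ by citing an external lemma, whereas you inline the counting argument (remove at most $sn$ attacked indices from the half-line count) that proves that sandwich.
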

\begin{proof}[Proof of Lemma \ref{lemma4}]
	
	First, we claim 
	\begin{equation}
	\pquantile{p-s}{\samplesetcom{\tilde{X}_i}{i=1}{n}} \le  \pquantile{p}{\samplesetcom{X_i}{i=1}{n}} \le  \pquantile{p+s}{\samplesetcom{\tilde{X}_i}{i=1}{n}}, \label{formula16}
	\end{equation}
	which is quite obvious. As you can check the two extreme cases, where the adversary only add  $+\infty$ or $-\infty$ to the samples, the equality holds. Rigorous proof can be found in \cite[Lemma 3]{zhang2016median}.
	
	Based on the definition of quantile, we have 
	\begin{align}
	\prob{\pquantile{p-s}{\samplesetcom{\tilde{X}_i}{i=1}{n}} \le \pquantile{p-s}{F} -a}&\overset{(i)}=\prob{\sum_{i=1}^{n}\indicator{\tilde{X_i} \le \pquantile{p-s}{F} -a} \ge n(p-s)} \nonumber\\
	&\overset{(ii)}\le \exp\bigparentheses{-2n[p-s - F(\pquantile{p-s}{F}-a)]^2},  \label{formula14}
	\end{align}	
	where ($i$) follows from the definition of the $p-s$ quantile of the clean rewards, and $(ii)$ follows from \cite[Theorem 2.2.2]{vershynin2018high}. 
	Taking the steps similar to the above ones, we obtain 
	\begin{align}
	&\prob{\pquantile{p+s}{\samplesetcom{\tilde{X}_i}{i=1}{n}} \ge \pquantile{p+s}{F} +b}\le \exp\bigparentheses{-2n[F(\pquantile{p+s}{F}+b)-(p+s)]^2}. \label{formula15}
	\end{align}
	Combining \eqref{formula16}, \eqref{formula14} and \eqref{formula15}, we finish the proof.	
\end{proof}

\section{Proof of Theorem \ref{th:avg_mucb}}

We first define some notations. Let $Q_i(t) \defeq \{ \tau: \tau \le t, I_\tau = i\}$ be the set of rounds which consist of the time indices up to round $t$ in which the player pulls the $i$th arm. Let $T_i(t) \defeq |Q_i(t)|$ denote the number of rounds that the player pulls the $i$th arm up to round $t$, and let $\mathrm{med}_i(t) = \mathrm{med}\left(\{X_{i,\tau}\}_{\tau\in Q_i(t)}\right)$ denote the sample median of the collected data generated by the $i$th arm up to round $t$.

For any arm $j \neq i^*$ and any positive integer $l_0$, we have 
\begin{align}
T_j(T) &= \sum_{t=1}^T \indicator{I_t =j} = \sum_{ t\leq T, t \text{ is a pure exploration round}} \indicator{I_t =j } + \sum_{  t\leq T, t \text{ is a UCB round}} \indicator{I_t =j}\nonumber\\
&\overset{(i)}\le 1 + b\log (T+G) + \sum_{ t\leq T, t \text{ is a UCB round}} \indicator{I_t =j}  \nonumber \\
&\le 1+ b\log (T+G) + l_0  + \sum_{ t\leq T,  t \text{ is a UCB round}} \indicator{I_t =j, T_j(t-1) \ge l_0} \label{form:Tupper}, 
\end{align}
where ($i$) follows from the fact that  for each arm $j$,  the number of  pure exploration rounds is $\ceiling{b\log(\ceiling{\frac{T}{G}} \cdot G)} \le 1+ b\log(T + G)$. 
For notional simplicity, denote the UCB penalty term as $c_{t, T_i(t-1)} = \sqrt{\frac{\omega\log t}{T_i(t-1)}}$, where $\omega$ is defined in \Cref{alg:meucb}.  

Let $ \{ X_k^j \}_{k=1}^{m} := \{X_{j,k}\}_{k\in Q_j(t), T_j(t) :=m}$ be the first $m$ rewards collected from arm $j$. Then, based on~\eqref{form:Tupper}, we obtain
\begin{align}
T_j(T) &\overset{(i)}\le 1+ b\log(T+G) + l_0  \nonumber \\ 
& \;\; +  \sum_{t\leq T, t \text{ is a UCB round}} \indicator{\mathrm{med}_{i^*}(t-1) + c_{t-1, T_{i^*}(t-1)} \le \mathrm{med}_j(t-1) +c_{t-1, T_j(t-1)}, T_j(t-1) \ge l_0}  \nonumber  \\
& \le 1+ b\log(T+G) + l_0  + \sum_{t=l_0}^T \indicator{\mathrm{med}_{i^*}(t-1) + c_{t-1, T_{i^*}(t-1)} \le \mathrm{med}_j(t-1) +c_{t-1, T_j(t-1)}, T_j(t-1) \ge l_0} \nonumber  \\
& \overset{(ii)}\le 1+ b\log(T+G) + l_0 \nonumber \\
& \qquad + \sum_{t=l_0}^T \sum_{r= b\log t}^{t-1} \sum_{v =\max\{l_0, b\log t\}}^{t-1}\nonumber\\
&\qquad \mathbbm{1}\left\{\mathrm{med}_{i^*}(t-1) + c_{t-1, T_{i^*}(t-1)} \le \mathrm{med}_j(t-1) +c_{t-1, T_j(t-1)},  T_{i^*} (t-1)=r, T_j(t-1)= v \right\}\nonumber  \\
& = 1+ b\log (T+G) + l_0 + \sum_{t=l_0}^{T} \sum_{r= b\log t}^{t-1} \sum_{v =\max\{l_0, b\log t\}}^{t-1} \indicator{\pquantile{\frac{1}{2}}{\{X_k^{i^*}\}_{k=1}^r} + c_{t,r} \le \pquantile{\frac{1}{2}}{\{X_k^j\}_{k=1}^{v}}+c_{t, v}}, \label{form:Tupper2}
\end{align}

where ($i$) follows from the fact that  if arm $j$ is pulled at round $t$, then $\mathrm{med}_j(t -1)+c_{t-1,  T_j(t-1)}$ is the largest among all arms, and ($ii$) follows from the law of total probability.

Taking expectation with respect to the randomness of the regret on both sides of (\ref{form:Tupper2}), we obtain 
\begin{align}
\mathbb{E} (T_i(T)) &\le 1 + b\log(T+G) + l_0 \nonumber \\
&+ \sum_{t=l_0}^{T} \sum_{r= b\log t}^{t-1}\sum_{v =\max\{l_0, b\log t\}}^{t-1} \underbrace{\prob{\pquantile{\frac{1}{2}}{\{X_k^{i^*}\}_{k=1}^r} + c_{t,r} \le \pquantile{\frac{1}{2}}{\{X_k^j\}_{k=1}^{v}}+c_{t, v}}}_{(a)}. \label{form:TupperP}
\end{align}	
Let $\epsilon_0 = s-\rho$ and $l_0 = \ceiling{\frac{4\omega\log T}{(\pquantile{\frac{1}{2} -s}{F_{i^*}} - \pquantile{\frac{1}{2} + s}{F_j})^2}}.$  
Therefore, all $v$ in \eqref{form:TupperP} satisfy $v\geq  l_0\geq \ceiling{\frac{4\omega\log T}{(\pquantile{\frac{1}{2} -s}{F_{i^*}} - \pquantile{\frac{1}{2} + s}{F_j})^2}}$.

Define events 
\begin{align}\label{form:eventsplit}
\mathcal{A} &=\left\{\pquantile{\frac{1}{2}}{\{X_k^{i^*}\}_{k=1}^r} + c_{t,r} \le \pquantile{\frac{1}{2}}{\{X_k^j\}_{k=1}^{v}}+c_{t, v} \right\}, \nonumber\\
\mathcal{B} &=\left\{\pquantile{\frac{1}{2}}{\{X_k^{i^*}\}_{k=1}^r} + c_{t,r} \le \pquantile{\frac{1}{2} -s}{F_{i^*}} \right\},\nonumber \\ 
\mathcal{C} &= \left\{\pquantile{\frac{1}{2}}{\{X_k^j\}_{k=1}^{v}}\ge  \pquantile{\frac{1}{2} + s}{F_j} + c_{t, v}\right\}.
\end{align}
Next, we  show that $\mathcal{A}\subseteq\mathcal{B}\cup\mathcal{C}$ for all  $v \ge \ceiling{\frac{4\omega\log T}{(\pquantile{\frac{1}{2} -s}{F_{i^*}} - \pquantile{\frac{1}{2} + s}{F_j})^2}}$.

Suppose both $\mathcal{B}^c$ and $\mathcal{C}^c$ occur. Then, we have, 
\begin{align*}
\pquantile{\frac{1}{2}}{\{X_k^{i^*}\}_{k=1}^r} + c_{t,r}&\overset{(i)}> \pquantile{\frac{1}{2} -s}{F_{i^*}} = \pquantile{\frac{1}{2} + s}{F_j}  + \pquantile{\frac{1}{2} -s}{F_{i^*}} - \pquantile{\frac{1}{2} + s}{F_j} \\
& \overset{(ii)}\ge\pquantile{\frac{1}{2} + s}{F_j} + 2c_{t, v}\overset{(iii)}> \pquantile{\frac{1}{2}}{\{X_k^j\}_{k=1}^{v}} +c_{t, v},
\end{align*}
where ($i$) follows from the definition of $\mathcal{B}^c$,  ($ii$) follows from $c_{t, v} \le c_{t,  l_0}\le \frac{\pquantile{\frac{1}{2} -s}{F_{i^*}} - \pquantile{\frac{1}{2} + s}{F_j}}{2}$, and 
($iii$) follows from the definition of $\mathcal{C}^c$. The above inequality implies that 
$\mathcal{A}^c$ occurs, and thus $\mathcal{B}^c\cap\mathcal{C}^c \subseteq \mathcal{A}^c$,  and thus $\mathcal{A} \subseteq\mathcal{B}\cup \mathcal{C} $. Then, we have $\prob{\mathcal{A}}\leq \prob{\mathcal{B}}+\prob{\mathcal{C}}$.
We next upper-bound (a)  in~\eqref{form:TupperP}, i.e., $\prob{\mathcal{A}}$, by upper-bounding $\prob{\mathcal{B}}$ and $\prob{\mathcal{C}}$.

{\bf Upper-bounding $\prob{\mathcal{B}}$}: First note that 
\begin{align}\label{shav1}
&\prob{\mathcal{B}}=\prob{\pquantile{\frac{1}{2}}{\{X_k^{i^*}\}_{k=1}^r} + c_{t,r} \le \pquantile{\frac{1}{2} -s}{F_{i^*}}} \nonumber\\
&\qquad\overset{(i)}= \prob{\pquantile{\frac{1}{2}}{\{X_k^{i^*}\}_{k=1}^r} + c_{t,r} \le \pquantile{\frac{1}{2} -s}{F_{i^*}}, \quad s_{i^*}(r) \le \rho + \epsilon_0} \nonumber \\
& \qquad\qquad+ \prob{\pquantile{\frac{1}{2}}{\{X_k^{i^*}\}_{k=1}^r} + c_{t,r} \le \pquantile{\frac{1}{2} -s}{F_{i^*}}, \quad s_{i^*}(r) > \rho + \epsilon_0}  \nonumber\\
&\qquad = \prob{\pquantile{\frac{1}{2}}{\{X_k^{i^*}\}_{k=1}^r} + c_{t,r} \le \pquantile{\frac{1}{2} -s}{F_{i^*}} \middle| s_{i^*}(r) \le \rho + \epsilon_0}\prob{s_{i^*}(r) \le \rho + \epsilon_0} \nonumber\\
& \qquad\qquad+ \prob{\pquantile{\frac{1}{2}}{\{X_k^{i^*}\}_{k=1}^r} + c_{t,r} \le \pquantile{\frac{1}{2} -s}{F_{i^*}}, \quad s_{i^*}(r) > \rho + \epsilon_0}\nonumber\\
& \qquad  \overset{(ii)}\le \prob{\pquantile{\frac{1}{2}}{\{X_k^{i^*}\}_{k=1}^r} + c_{t,r} \le \pquantile{\frac{1}{2} -s}{F_{i^*}} \middle| s_{i^*}(r) \le \rho + \epsilon_0}  + \prob{s_{i^*}(r) > \rho + \epsilon_0},
\end{align}
where ($i$) follows from the law of total probability, and ($ii$) follows from the fact that $\prob{s_{i^*}(r) \le \rho + \epsilon_0} \le 1$ and $\mathbb{P}(\mathcal{S}\cap \mathcal{T})\leq \prob{\mathcal{T}}$ for any events $\mathcal{S}$ and $\mathcal{T}$. 

Applying Lemma \ref{lemma4} yields
\begin{align}
&\prob{\pquantile{\frac{1}{2}}{\{X_k^{i^*}\}_{k=1}^r} \le \pquantile{\frac{1}{2} -s}{F_{i^*}} - c_{t, r}} \le \exp\bigparentheses{-2r[\frac{1}{2}-s - F_{i^*}(\pquantile{\frac{1}{2} -s}{F_{i^*}} - c_{t,r})]^2}. \label{form:con}
\end{align}
Note that $c_{t,r} \le c_{t, b\log t}= \sqrt{\frac{\omega}{b}}\le \xi$, and    
\begin{align}
\frac{1}{2}-s - F_{i^*}(\pquantile{\frac{1}{2} -s}{F_{i^*}} - c_{t,r})&\ge \int_{\pquantile{\frac{1}{2} -s}{F_{i^*}}}^{\pquantile{\frac{1}{2} -s}{F_{i^*}} - c_{t,r}} \inf\{F'_{i^*}(x) : \ \pquantile{\frac{1}{2}-s}{F_{i^*}}-c_{t,r}< x < \pquantile{\frac{1}{2}-s}{F_{i^*}} \} dx \nonumber\\
& = \inf\{F'_{i^*}(x) : \ \pquantile{\frac{1}{2}-s}{F_{i^*}}-c_{t,r}< x < \pquantile{\frac{1}{2}-s}{F_{i^*}} \} c_{t,r}\overset{(i)}\ge lc_{t,r}, \label{form:trivalbound}
\end{align} 
where ($i$) follows the definition of $l$ in Assumption \ref{ass:mucb}. Substituting \eqref{form:trivalbound} into \eqref{form:con}, we obtain,  if event $\mathcal{E}$ occurs, 
\begin{align}\label{form:upb1}
\prob{\mathcal{B}} &= \prob{\pquantile{\frac{1}{2}}{\{X_k^{i^*}\}_{k=1}^r} +  c_{t,r}\le \pquantile{\frac{1}{2} -s}{F_{i^*}}}\le \bigparentheses{\frac{1}{t}}^{2\omega l^2}  \overset{(i)}\le \frac{1}{t^4},
\end{align}
where ($i$) follows because $\omega\ge \frac{2}{l^2}$.

In the meanwhile, applying \cite[Theorem 2.2.2]{vershynin2018high} to $s_{i^*}(s)$ yields 
\begin{align}\label{opopsa1}
\prob{s_{i^*}(r) > \rho + \epsilon_0} &\le \exp\bigparentheses{-2r\epsilon_0^2} \le \exp\bigparentheses{-2(b\log t)\epsilon_0^2} \overset{(i)}\le  \frac{1}{t^4},
\end{align}
where ($i$) follows from $b\ge \max\{\frac{2}{\epsilon_0^2}, \frac{\omega}{\xi^2}\}\ge \frac{2}{\epsilon_0^2}$. Combining~\eqref{form:upb1},~\eqref{opopsa1} and~\eqref{shav1} implies that 
\begin{align*}
\prob{\mathcal{B}}\le \frac{2}{t^4}.
\end{align*}

{\bf Upper-bounding $\prob{\mathcal{C}}$:} Taking  similar steps as in upper-bounding $\prob{\mathcal{B}}$, we have
\begin{align}
\prob{\mathcal{C}}=\prob{\pquantile{\frac{1}{2}}{\{X_k^j\}_{k=1}^{v}}\ge  \pquantile{\frac{1}{2} + s}{F_j} + c_{t, v}} \le \frac{2}{t^4} \nonumber
\end{align}

Thus, for all $r$ and $v$ in \eqref{form:TupperP}, we have 
\begin{align}\label{ggwrs1}
\prob{\pquantile{\frac{1}{2}}{\{X_k^{i^*}\}_{k=1}^r} + c_{t,r} \le \pquantile{\frac{1}{2}}{\{X_k^j\}_{k=1}^{v}}+c_{t, v}} \le \frac{4}{t^4}.
\end{align}
Based on~\eqref{ggwrs1}, we now upper-bound the pseudo-regret $\bar{R}_T$.

{\bf Upper-bounding $\bar{R}_T$:} Combining~\eqref{form:TupperP} and~\eqref{ggwrs1} yields 
\begin{align}\label{wopps1}
\mathbb{E} (T_j(T)) &\le  1 + b\log(T+G) + l_0\nonumber \\
&\quad+ \sum_{t=l_0}^{T} \sum_{r= b\log t}^{t-1} \sum_{v =\max\{l_0, b\log t\}}^{t-1} \prob{\pquantile{\frac{1}{2}}{\{X_k^{i^*}\}_{k=1}^r} + c_{t,r} \le \pquantile{\frac{1}{2}}{\{X_k^j\}_{k=1}^{v}}+c_{t, v}} \nonumber\\
& \le  1 + b\log(T+G) + l_0\sum_{t=l_0}^{T} \sum_{r= b\log t}^{t-1} \sum_{v =\max\{l_0, b\log t\}}^{t-1} \frac{4}{t^4}\nonumber \\
&\le 1 + b\log(T+G) + l_0 + \sum_{t=1}^{\infty} \sum_{r= 1}^{t} \sum_{v =1}^{t} \frac{4}{t^4} \nonumber\\
& \overset{(i)}\le 1 + b\log(T+G)+ l_0 + \frac{2\pi^2}{3} \nonumber \\
& \overset{(ii)}= 2 + b\log(T+G)+{\frac{4\omega\log T}{(\pquantile{\frac{1}{2} -s}{F_{i^*}} - \pquantile{\frac{1}{2} + s}{F_j})^2}} + \frac{2\pi^2}{3},
\end{align}
where ($i$) follows because $\sum_{t =1}^{\infty} \frac{1}{t^2} = \frac{\pi^2}{6}$ and ($ii$) follows from the fact that  $l_0 =\ceiling{\frac{4\omega\log T}{(\pquantile{\frac{1}{2} -s}{F_{i^*}} - \pquantile{\frac{1}{2} + s}{F_j})^2}}$.

Then, using~\eqref{wopps1}, we have 
\begin{align*}
\bar{R}_T &= \mathbb{E} [\sum_{t =1}^{T} \mu_{i^*} - \mu_{I_t}] = \mathbb{E}[\sum_{\substack{j=1 \\ j\neq i^*}}^{K} \sum_{t =1}^{T} \Delta_j \indicator{I_t =j}] = \sum_{\substack{j=1 \\ j\neq i^*}}^{K} \Delta_j\expectation{T_j(T)}\nonumber\\
& \overset{(i)}\le\sum_{j=1, j\neq i^*}^{K}\Delta_jb\log(T+G)+ \sum_{j=1, j\neq i^*}^{K} \Delta_j\frac{4\omega\log T}{(\pquantile{\frac{1}{2} -s}{F_{i^*}} - \pquantile{\frac{1}{2} + s}{F_j})^2} + \sum_{j=1, j\neq i^*}^{K}\Delta_j(2 + \frac{2\pi^2}{3}),
\end{align*}
where ($i$) follows from~\eqref{wopps1}. By exploiting the simple bounds on the constants, we derive following bound.
\begin{align}\label{woppsaaa1}
\bar{R}_T &\le\sum_{j=1, j\neq i^*}^{K}\Delta_j\left(b\log(2)+ \frac{4\omega}{(\pquantile{\frac{1}{2} -s}{F_{i^*}} - \pquantile{\frac{1}{2} + s}{F_j})^2} \right) \log T + \sum_{j=1, j\neq i^*}^{K}\Delta_j(2 + \frac{2\pi^2}{3}),
\end{align}
which completes the proof. 

\section{Proof of Theorem \ref{th:highprob_meu}}

For any non-optimal arm $j$, follow the same steps in the proof of \Cref{th:avg_mucb}. We obtain

\begin{align}
T_j(T) &\le 1+ b\log (T+G) + l_0 \nonumber \\
& \;\;+ \sum_{t=l_0}^{T} \sum_{r= b\log t}^{t-1} \sum_{v =\max\{l_0, b\log t\}}^{t-1} \indicator{\pquantile{\frac{1}{2}}{\{X_k^{i^*}\}_{k=1}^r} + c_{t,r} \le \pquantile{\frac{1}{2}}{\{X_k^j\}_{k=1}^{v}}+c_{t, v}}, \label{form:TjUpperP}
\end{align}
for any positive $l_0$.

Let $\epsilon_0 = s-\rho$ and $l^j_0 = \max\Big\{\ceiling{\frac{4\omega\log T}{(\pquantile{\frac{1}{2} -s}{F_{i^*}} - \pquantile{\frac{1}{2} + s}{F_j})^2}}, \exp(\frac{2}{b})\bigparentheses{\frac{K}{\delta\epsilon_0^2}}^{\frac{1}{2\epsilon_0^2b}}, \frac{2K}{\delta} +1\Big\}.$ 
Then, all $v$ in \eqref{form:TjUpperP} satisfy $v\geq  l^j_0\geq \ceiling{\frac{4\omega\log T}{(\pquantile{\frac{1}{2} -s}{F_{i^*}} - \pquantile{\frac{1}{2} + s}{F_j})^2}}$.

Define events $\mathcal{A}$, $\mathcal{B}$ and $\mathcal{C}$ as we did in \eqref{form:eventsplit}. As we have shown in the proof of \Cref{th:avg_mucb}, $\mathcal{A}\subseteq\mathcal{B}\cup\mathcal{C}$ for all  $v \ge \ceiling{\frac{4\omega\log T}{(\pquantile{\frac{1}{2} -s}{F_{i^*}} - \pquantile{\frac{1}{2} + s}{F_j})^2}}$. This implies 

\begin{align}
T_j(T) &\le  1+ b\log (T+G) + l^j_0 \nonumber \\
& \;\;+ \sum_{t=l_0}^{T} \sum_{r= b\log t}^{t-1} \sum_{v =\max\{l_0, b\log t\}}^{t-1} \indicator{\pquantile{\frac{1}{2}}{\{X_k^{i^*}\}_{k=1}^r} + c_{t,r} \le \pquantile{\frac{1}{2} -s}{F_{i^*}}} + \indicator{\pquantile{\frac{1}{2}}{\{X_k^j\}_{k=1}^{v}}\ge  \pquantile{\frac{1}{2} + s}{F_j} + c_{t, v}} \nonumber \\
&  \le 1+ b\log (T+G) + l^j_0  + \sum_{t=l_0}^{T} t\sum_{r= b\log t}^{t-1} \indicator{\pquantile{\frac{1}{2}}{\{X_k^{i^*}\}_{k=1}^r} + c_{t,r} \le \pquantile{\frac{1}{2} -s}{F_{i^*}}} \nonumber \\
&  \qquad +\sum_{t=l_0}^{T} t \sum_{v = b\log t}^{t-1} \indicator{\pquantile{\frac{1}{2}}{\{X_k^j\}_{k=1}^{v}}\ge  \pquantile{\frac{1}{2} + s}{F_j} + c_{t, v}}. \nonumber 
\end{align}

Therefore, we derive the following upper bound of regret,

\begin{align}
\mathcal{R}_T  &= \sum_{t=1}^T \mu^* - \mu_{I_t} = \sum_{j=1, j\neq i^*}\Delta_j T_j(T) \nonumber \\
& \overset{(i)}= \sum_{j=1, j\neq i^*} \Delta_j (1+ b\log (T+G) + l^j_0)  + \sum_{j=1, j\neq i^*} \Delta_j \sum_{t=l_0}^{T} t \sum_{v =b\log t}^{t-1} \indicator{\pquantile{\frac{1}{2}}{\{X_k^j\}_{k=1}^{v}}\ge  \pquantile{\frac{1}{2} + s}{F_j} + c_{t, v}} \nonumber \\
& \qquad + \sum_{j=1, j\neq i^*} \Delta_j  \sum_{t=l_0}^{T} t\sum_{r= b\log t}^{t-1} \indicator{\pquantile{\frac{1}{2}}{\{X_k^{i^*}\}_{k=1}^r} + c_{t,r} \le \pquantile{\frac{1}{2} -s}{F_{i^*}}},
\end{align}
where $(i)$ follows the upper bound of $T_j(T)$.

Define event $\mathcal{G}$ as follows.

\begin{align*}
\mathcal{G} &=\bigparentheses{\bigcap_{j=1, j\neq i^*} \bigcap_{t=l_0}^T\bigcap_{v=b\log t}^{t-1} \left\{\indicator{\pquantile{\frac{1}{2}}{\{X_k^j\}_{k=1}^{v}}\ge  \pquantile{\frac{1}{2} + s}{F_j} + c_{t, v}}\le \frac{1}{t^4}\right\}} \\
&\qquad \bigcap \bigparentheses{\bigcap_{t=l_0}^T\bigcap_{v=b\log t}^{t-1} \left\{\indicator{\pquantile{\frac{1}{2}}{\{X_k^{i^*}\}_{k=1}^r} + c_{t,r} \le \pquantile{\frac{1}{2} -s}{F_{i^*}}}\le \frac{1}{t^4}\right\}}
\end{align*}

Under event $\mathcal{G}$, we obtain
\begin{align*}
\mathcal{R}_T &\le  \sum_{j=1, j\neq i^*}\Delta_j \bigparentheses{1+ b\log (T+G) + l^j_0 + \frac{\pi^2}{3}} \\	
&\le \sum_{j=1, j\neq i^*} \Delta_j \bigparentheses{3 + \frac{\pi^2}{3} + b\log(2)\log(T)  + \frac{4\omega\log(T)}{(\pquantile{\frac{1}{2} -s}{F_{i^*}} - \pquantile{\frac{1}{2} + s}{F_j})^2} + \exp(\frac{2}{b})(\frac{K}{\delta\epsilon^2})^\frac{1}{2\epsilon_0^2b}+ \frac{2K}{\delta} } \\
&\overset{(i)}\le \sum_{j=1, j\neq i^*}\Delta_j \bigparentheses{b\log(2) + \frac{4\omega\log(T)}{(\pquantile{\frac{1}{2} -s}{F_{i^*}} - \pquantile{\frac{1}{2} + s}{F_j})^2}}\log T \\
&\qquad + \sum_{j=1, j\neq i^*} \Delta_j\bigparentheses{e\bigparentheses{\frac{bK}{2\delta}}^\frac{1}{4} + \frac{2K}{\delta}} + \sum_{j=1, j\neq i^*}\Delta_j(3 + \frac{\pi^2}{3}),
\end{align*}
where $(i)$ follows from our assumptions of the parameters.

We nex lower-bound the probability of event $\mathcal{G}$.  Note that by Lemma 1 and setting $N = \ceiling{\frac{1}{2\epsilon_0^2} \log \frac{K}{\epsilon_0^2\delta}}+1$, event $\mathcal{E} = \{ s_j(m_j) \le \rho + \epsilon_0 : m_j \ge N, j = 1,... K \}$ happens with probability at least $1- \frac{\delta}{2}$.
Based on this fact, we have 
\begin{align}
\prob{\mathcal{G}^c} &= \prob{\mathcal{G}^c, \mathcal{E}} + \prob{\mathcal{G}^c, \mathcal{E}^c }  \le  \prob{\mathcal{G}^c|\mathcal{E}} +  \prob{\mathcal{E}^c }   = \prob{\mathcal{G}^c|\mathcal{E}}  + \frac{\delta}{2}.\label{form:probG}
\end{align}

And since $\prob{\mathcal{U}\cup \mathcal{T}} \le \prob{\mathcal{U}}+ \prob{\mathcal{T}}$ for any events $\mathcal{U}$ and $\mathcal{T}$, we obtain

\begin{align*}
\prob{\mathcal{G}^c|\mathcal{E}} &\le \sum_{j=1, j\neq i^*} \sum_{t=l_0}^T \sum_{v=b\log t}^{t-1} \prob{\indicator{\pquantile{\frac{1}{2}}{\{X_k^j\}_{k=1}^{v}}\ge  \pquantile{\frac{1}{2} + s}{F_j} + c_{t, v}} > \frac{1}{t^4} | \mathcal{E}}\\
& \qquad + \sum_{t=l_0}^T \sum_{r=b\log t}^{t-1} \prob{ \indicator{\pquantile{\frac{1}{2}}{\{X_k^{i^*}\}_{k=1}^r} + c_{t,r} \le \pquantile{\frac{1}{2} -s}{F_{i^*}}} > \frac{1}{t^4} | \mathcal{E}} \\
& \overset{(i)}\le  \sum_{j=1, j\neq i^*} \sum_{t=l_0}^T \sum_{v=b\log t}^{t-1}t^4 \expectation{\indicator{\pquantile{\frac{1}{2}}{\{X_k^j\}_{k=1}^{v}}\ge  \pquantile{\frac{1}{2} + s}{F_j} + c_{t, v}}|\mathcal{E}} \\
& \qquad + \sum_{t=l_0}^T \sum_{r=b\log t}^{t-1}t^4 \expectation{\indicator{\pquantile{\frac{1}{2}}{\{X_k^{i^*}\}_{k=1}^r} + c_{t,r} \le \pquantile{\frac{1}{2} -s}{F_{i^*}}} | \mathcal{E}} \\
& \overset{(ii)}\le \sum_{j=1, j\neq i^*} \sum_{t=l_0}^T  \bigparentheses{\frac{1}{t}}^{2\omega l^2-5}  +  \sum_{t=l_0}^T  \bigparentheses{\frac{1}{t}}^{2\omega l^2-5}  \overset{(iii)}\le K  \sum_{t=l_0}^T \frac{1}{t^2}   \le \frac{K}{l_0-1}  \overset{(iv)}\le \frac{\delta}{2},
\end{align*}
where $(i)$ follows from the Markov inequality, $(ii)$ follows from the steps similar to (15) and (16), in our paper, $(iii)$ follows from the fact that $\omega > \frac{3.5}{l^2}$, and $(iv)$ follows from the fact that $l_0 \ge \frac{2K}{\delta} +1$.

Combining with \ref{form:probG}, we have $\prob{G} \ge 1 - \delta$.

\section{Proof of \Cref{cor:mucb}}

To prove this corollary, we first show that the given Gaussian distributions meet Assumption \ref{ass:mucb}, and then apply \Cref{th:avg_mucb} and \Cref{th:highprob_meu}  to complete the proof.

Let $s =\Phi(\frac{\Delta_{min}}{4\sigma})-\frac{1}{2}$. Then we have 
$$\pquantile{\frac{1}{2}-s}{F_{i^*}} = \mu^* - \frac{\Delta_{min}}{4}.$$
And for all $j\neq i^*$, we have 
$$\pquantile{\frac{1}{2}+s}{F_j} = \mu_j + \frac{\Delta_{min}}{4}.$$ 
Thus, 
$$\pquantile{\frac{1}{2}-s}{F_{i^*}} -\pquantile{\frac{1}{2}+s}{F_j} = \Delta_j - \frac{\Delta_{min}}{2} \ge \frac{\Delta_{min}}{2} >0. $$

The CDF of Gaussian distributions is differentiable at every point of $\mathbb{R}$, which meet the requirement of differentiability.

Let $\xi = 1$, and $l = \frac{1}{\sqrt{2\pi\sigma^2}}\exp\bigparentheses{-\frac{(\Delta_{min}+4)^2}{32\sigma^2} }$. It is easy to check that, for the optimal arm,
$$\left. \inf\{F'_{i^*}(x) : \ \pquantile{\frac{1}{2}-s}{F_{i^*}}-\xi< x < \pquantile{\frac{1}{2}-s}{F_{i^*}} \}= l\right.,$$
and for all $j\neq i^*$
$$\inf\{F'_{j}(x) : \ \pquantile{\frac{1}{2}+s}{F_j}< x <  \pquantile{\frac{1}{2}+s}{F_{j}}+\xi\} = l.$$ 	

Therefore, the given Gaussian distributions meet all requirements of Assumption \ref{ass:mucb}. If $\rho<\Phi(\frac{\Delta_{min}}{4\sigma})-\frac{1}{2}$, let $b \ge \{\omega, \frac{2}{(\Phi(\Delta_{min}/(4\sigma)) -1/2 -\rho)^2}\}$ and $\omega \ge \frac{2}{l^2}$. Applying \Cref{th:avg_mucb} and  \Cref{th:highprob_meu}, we complete the proof.

\section{Proof of Theorem \ref{th: avg_meg}}
Our first step is to upper-bound the probability $\mathbb{P}(I_t = j)$ for any $j\neq i^*$, and $t\ge (\ceiling{c}K +1)e:= A$. Based on Algorithm~\ref{alg: meg}, we have 
\begin{align}\label{iops}	
\mathbb{P}(I_t = j) &= \mathbb{P} (\text{explore in round $t$, draw arm j}) + \mathbb{P}(\text{exploit in round $t$, draw arm j}) \nonumber\\
&= \frac{c}{t} + (1- \frac{cK}{t}) \cdot \mathbb{P}(\mathrm{med}_j (t -1) = \max_j{\mathrm{med}_j (t-1)}) \nonumber\\
& \le \frac{c}{t} + (1- \frac{cK}{t}) \cdot \mathbb{P}(\mathrm{med}_j (t -1) \ge \mathrm{med}_{i^*} (t-1))\nonumber\\
& \le \frac{c}{t} +\mathbb{P}(\mathrm{med}_j (t -1) \ge \mathrm{med}_{i^*} (t-1)).
\end{align}	
Define events $\mathcal{A} =\{\mathrm{med}_j(t-1) \ge \mathrm{med}_{i^*}(t-1)\}$, $\mathcal{B} =\{ \mathrm{med}_j(t-1) \ge x_0\} $ and  $\mathcal{C} = \{\mathrm{med}_{i^*}(t-1) \le x_0\}$, where $x_0$ is defined in Assumption \ref{ass:meg}. Next, we show that $\mathcal{A}\subseteq\mathcal{B}\cup\mathcal{C}$. 

Assuming both $\mathcal{B}^c$ and $\mathcal{C}^c$ hold, we have 
\begin{equation*}
\mathrm{med}_j(t-1) < x_0 < \mathrm{med}_{i^*} (t-1),
\end{equation*}
which implies that $\mathcal{A}^c$ is true. Thus, we have  $\mathcal{B}^c\cap\mathcal{C}^c \subseteq \mathcal{A}^c$. Taking complementary on both sides implies $\mathcal{A}\subseteq \mathcal{B}\cup\mathcal{C}$. Thus, we have $\prob{\mathcal{A}}\le \prob{\mathcal{B}}+\prob{\mathcal{C}}$, which in conjunction with~\eqref{iops}, implies 
\begin{align}\label{fpos}
\mathbb{P}(I_t = j) \leq \frac{c}{t} + \mathbb{P}(\mathcal{B})+\mathbb{P}(\mathcal{C}). 
\end{align}
Then, our next two steps are to upper-bound $\prob{\mathcal{B}}$ and $\mathbb{P}(\mathcal{C})$, respectively.

{\bf Upper-bounding $\prob{\mathcal{B}}:$} Let $ \{ X_k^j \}_{k=1}^{m} := \{X_{j,k}\}_{k\in Q_j(t), T_j(t) :=m}$ be the first $m$ rewards collected from arm $j$, $T_j^R(t)= \sum_{\tau=1}^{t} \indicator{I_\tau = j,\ \text{exploring in round $\tau$}}$, and $u(t)  = \sum_{k = \ceiling{c}K+1}^{t-1} \frac{c}{2k}$. Then, we have 
\begin{align}
\mathbb{P} \left( \mathrm{med}_j(t-1) \ge x_0 \right) &= \sum_{\tau=1}^{t-1} \prob{\mathrm{med}_j(t-1) \ge x_0, T_j(t-1) = \tau}= \sum_{\tau=1}^{t-1} \prob { \medthta{\{X_k^j\}_{k=1}^\tau} \ge x_0, T_j(t-1) = \tau} \nonumber\\
&= \sum_{\tau=1}^{t-1} \prob {T_j(t-1) = \tau  \middle|  \medthta{\{X_k^j\}_{k=1}^\tau }\ge x_0}\prob{\medthta{\{X_k^j\}_{k=1}^\tau} \ge x_0}  \nonumber\\
& \overset{(i)}\le\sum_{\tau=1}^{t-1} \prob {T_j^R(t-1) \le \tau  \middle|  \medthta{\{X_k^j\}_{k=1}^\tau }\ge x_0}  \prob{\medthta{\{X_k^j\}_{k=1}^\tau} \ge x_0} \nonumber\\
&\overset{(ii)}=\sum_{\tau=1}^{t-1} \prob {T_j^R(t-1) \le \tau } \cdot \prob{\medthta{\{X_k^j\}_{k=1}^\tau} \ge x_0} \nonumber\\
& = \sum_{\tau=1}^{\flooring{u(t)}} \prob {T_j^R(t-1) \le \tau } \cdot \prob{\medthta{\{X_k^j\}_{k=1}^\tau} \ge x_0}\nonumber\\
&  \qquad+ \sum_{\tau=\flooring{u(t)}+1}^{t-1} \prob {T_j^R(t-1) \le \tau } \cdot \prob{\medthta{\{X_k^j\}_{k=1}^\tau} \ge x_0} \nonumber\\
& \le  \sum_{\tau=1}^{\flooring{u(t)}} \prob {T_j^R(t-1) \le \tau } + \sum_{\tau=\flooring{u(t)}+1}^{t-1} \prob{\medthta{\{X_k^j\}_{k=1}^\tau} \ge x_0} \nonumber\\
&  \overset{(iii)} \le \underbrace{  u(t)\prob {T_j^R(t-1) \le u(t)}}_{(a)}+ \underbrace{ \sum_{\tau=\flooring{u(t)}+1}^{t-1} \prob{\medthta{\{X_k^j\}_{k=1}^\tau} \ge x_0}}_{(b)}, \label{form: split1}
\end{align}
where ($i$) follows from  the fact $T_j^R(t-1) \le T_j(t-1)$, ($ii$) follows from the fact that $T_j^R(t-1) $ and $\medthta{\{ X_k^j\}_{k=1}^\tau} $  are mutually  independent, and ($iii$) follows from the fact that $\prob {T_j^R(t-1) \le \tau} \le \prob {T_j^R(t-1) \le u(t)}$ for all $\tau \le u(t)$.
{\bf Upper-bounding (a) in~\eqref{form: split1}}:
Since $T_j^R(t-1)$ can be rewritten as $\sum_{k=\ceiling{c}K+1}^{t-1} r_k$ with $r_k \sim \text{Bernoulli}(\frac{c}{k})$, we have $\expectation{T_j^R(t-1)} = \sum_{k=\ceiling{c}K+1}^{t-1}\frac{c}{k} = 2u(t)$ and $\mathrm{Var}(T_j^R(t-1)) = \sum_{k=\ceiling{c}K+1}^{t} 2\cdot\frac{c}{k}(1-\frac{c}{k}) < 2u(t)$. Then, applying the variant of Bernstein inequality~\cite[Fact 2]{auer2002finite} to $T_j^R(t-1)$ yields  	
\begin{align*}
\prob{T_j^R(t-1) \le u(t)} &= \prob{T_j^R(t-1)\le \expectation{T_j^R(t-1)} - u(t)} \le \exp\bigparentheses{-\frac{u(t)^2/2}{\mathrm{Var}(T_j^R(t-1))+ u(t)/2}} \le \exp\bigparentheses{-\frac{u(t)}{5}}
\end{align*}
which implies that (a) in~\eqref{form: split1} is upper-bounded by 
\begin{align}\label{form: medupa}
(a) \le u(t)\exp\bigparentheses{-\frac{u(t)}{5}}.
\end{align}
Since $t \ge A \ge (\ceiling{c}K +1)\exp(\frac{10}{c})$, we have $u(t) > \sum_{k = \ceiling{c}K+1}^{t-1} \frac{c}{2} \log\frac{k+1}{k} = \frac{c}{2}\log \frac{t}{\ceiling{c}K+1} \ge 5$, which combined with~\eqref{form: medupa} and the fact that function $f(x)= x\exp\bigparentheses{-x/5}$ is decreasing for any $x\ge5$,  yields 
\begin{align}\label{form: aupper}
(a) &\le \frac{c}{2}\bigparentheses{\log \frac{t}{\ceiling{c}K+1}}\bigparentheses{\frac{\ceiling{c}K+1}{t}}^{\frac{c}{10}}  \le \frac{c\bigparentheses{\ceiling{c}K+1}^{\frac{c}{10}}}{2}\log\bigparentheses{t}\bigparentheses{\frac{1}{t}}^{\frac{c}{10}}.
\end{align}

{\bf Upper-bounding (b) in~\eqref{form: split1}}:
Let $\epsilon_0 = s -\rho$. We first note that 
\begin{align}\label{ggwin}
\prob{\medthta{\{X_k^j\}_{k=1}^\tau} \ge x_0} &\overset{(i)}=\prob{\medthta{\{X_k^j\}_{k=1}^\tau} \ge x_0, s_j(\tau) \le \rho + \epsilon_0} + \prob{\medthta{\{X_k^j\}_{k=1}^\tau} \ge x_0, s_j(\tau) > \rho + \epsilon_0} \nonumber \\	
&  = \mathbb{P} \left(\medthta{\{X_k^j\}_{k=1}^\tau} \ge x_0 \middle|  s_j(\tau) \le \rho + \epsilon_0\right)\prob{s_j(\tau) \le \rho +\epsilon_0} \nonumber\\
&  \quad \quad+ \prob{\medthta{\{X_k^j\}_{k=1}^\tau} \ge x_0, s_j(\tau) > \rho + \epsilon_0}  \nonumber\\
&  \overset{(ii)}\le   \mathbb{P} \left(\medthta{\{X_k^j\}_{k=1}^\tau} \ge x_0 |  s_j(\tau) \le \rho + \epsilon_0\right)  + \prob{s_j(\tau) > \rho + \epsilon_0},
\end{align}
where ($i$) follows from the law of total probability, and ($ii$) follows from the fact that $\prob{s_j(\tau) \le \rho+\epsilon_0}\leq 1$ and  $\mathbb{P}(\mathcal{S}\cap \mathcal{T})\leq \prob{\mathcal{T}}$ for any events $\mathcal{S}$ and $\mathcal{T}$. Then, applying \Cref{lemma4} to the first term on the right side of~\eqref{ggwin} yields
\begin{align}\label{opsa}
\mathbb{P} &\left(\medthta{\{X_k^j\}_{k=1}^\tau } \ge x_0 | s_j(\tau) \le \rho + \epsilon_0 \right)\le\exp \bigparentheses{-2\tau[F_j(x_0) -\frac{1}{2} -s]^2}.
\end{align}
In the meanwhile, applying \cite[Theorem 2.2.2]{vershynin2018high} to the second term yields
\begin{align}\label{opsb}
\prob{s_j(\tau) > \rho + \epsilon_0} \le \exp\bigparentheses{-2\tau\epsilon_0^2}.
\end{align}
Thus, combining~\eqref{opsa},~\eqref{opsb} and~\eqref{ggwin}, we have 
\begin{align}\label{wori} 
(b) & \le \sum_{\tau = \flooring{u(t)} +1}^{t-1}\bigparentheses{\exp \bigparentheses{-2\tau[F_j(x_0) -\frac{1}{2} -s]^2} + \exp\bigparentheses{-2\tau\epsilon_0^2}} \nonumber\\
&  \overset{(i)}\le \frac{1}{2\bigparentheses{F_j(x_0)-\frac{1}{2} -s}^2}\exp\bigparentheses{-2(F_j(x_0) - \frac{1}{2} -s)^2(u(t)-1)} + \frac{1}{2\epsilon_0^2}\exp\bigparentheses{-2\epsilon_0^2(u(t)-1)} \nonumber\\
&   \overset{(ii)}\le \frac{\exp\bigparentheses{2\bigparentheses{F_j(x_0)-\frac{1}{2} -s}^2}}{2\bigparentheses{F_j(x_0)-\frac{1}{2} -s}^2}\left(\frac{\ceiling{c}K+1}{t}\right)^{\bigparentheses{F_j(x_0)-\frac{1}{2} -s}^2c} + \frac{\exp(2\epsilon_0^2)}{2\epsilon^2_0}\bigparentheses{\frac{\ceiling{c}K+1}{t}}^{\epsilon_0^2c},
\end{align}
where ($i$) follows because $\sum\limits_{t = x+1}^\infty e^{-Kt} \le \frac{1}{K} e^{-Kx}$, and ($ii$) follows from the fact $u(t) > \sum_{k = \ceiling{c}K+1}^{t-1} \frac{c}{2} \log\frac{k+1}{k} =\frac{c}{2} \log\frac{t}{\ceiling{c}K+1} $.

{\bf Upper-bounding  $\prob{\mathcal{C}}$}: 
Similar to steps as in \crefrange{form: split1}{wori}, we have, for any $t\geq A$, 
\begin{align}
\prob{\mathcal{C}}\le& \frac{c\bigparentheses{\ceiling{c}K+1}^{\frac{c}{10}}}{2}\log\bigparentheses{t}\bigparentheses{\frac{1}{t}}^{\frac{c}{10}}\nonumber\\
&\quad  +  \frac{\exp\bigparentheses{2\bigparentheses{\frac{1}{2} -s - F_{i^*}(x_0)}^2}}{2\bigparentheses{\frac{1}{2} -s - F_{i^*}(x_0)}^2}\left(\frac{\ceiling{c}K+1}{t}\right)^{\bigparentheses{\frac{1}{2} -s-F_{i^*}(x_0)}^2c} + \frac{\exp(2\epsilon_0^2)}{2\epsilon^2_0}\bigparentheses{\frac{\ceiling{c}K+1}{t}}^{\epsilon_0^2c}.\label{uppc}
\end{align}

Combining the above two upper bounds \eqref{wori} and \eqref{uppc} with $P(I_t =j)\le\frac{c}{t} +  \prob{\mathcal{B}}+ \prob{\mathcal{C}}$ yields , for any $t\geq A$ 
\begin{align}\label{form: medIt2}
\prob{I_t = j } &\le \frac{c}{t} +c\bigparentheses{\ceiling{c}K+1}^{\frac{c}{10}}\log\bigparentheses{t}\bigparentheses{\frac{1}{t}}^{\frac{c}{10}} + \frac{\exp\bigparentheses{2\bigparentheses{F_j(x_0)-\frac{1}{2} -s}^2}}{2\bigparentheses{F_j(x_0)-\frac{1}{2} -s}^2}\left(\frac{\ceiling{c}K+1}{t}\right)^{\bigparentheses{F_j(x_0)-\frac{1}{2} -s}^2c} \nonumber\\
&\qquad +\frac{\exp\bigparentheses{2\bigparentheses{\frac{1}{2} -s - F_{i^*}(x_0)}^2}}{2\bigparentheses{\frac{1}{2} -s - F_{i^*}(x_0)}^2}\left(\frac{\ceiling{c}K+1}{t}\right)^{\bigparentheses{\frac{1}{2} -s-F_{i^*}}^2c}  + \frac{\exp(2\epsilon_0^2)}{\epsilon^2_0}\bigparentheses{\frac{\ceiling{c}K+1}{t}}^{\epsilon_0^2c}.
\end{align}
Note that  $(\log t) (\frac{1}{t})^{\frac{c-10}{10}} < \frac{10}{e(c-10)}$ for all $t>0$, and  $\bigparentheses{F_j(x_0)-\frac{1}{2} -s}^2c>2$, $\bigparentheses{\frac{1}{2} -s-F_{i^*}(x_0)}^2c>2$, $\epsilon_0^2c>2$ and $\frac{c}{10}>2$ for  $c > \max \left\{20, \frac{2}{\bigparentheses{F_j(x_0)-\frac{1}{2}  -s}^2}, \frac{2}{\bigparentheses{\frac{1}{2} -s - F_{i^*}(x_0)}^2}, \frac{2}{\epsilon_0^2} : j =1, 2, ...K, j \neq i^*\right\}$. Thus, 
we obtain from \eqref{form: medIt2} that, for any  $t\geq B$
\begin{align}\label{form: Aupperbound}
\prob{I_t = j } &\le \frac{c}{t} + \left(\frac{10c(\ceiling{c}K+1)}{e(c-10)} +\frac{\exp\bigparentheses{2\bigparentheses{F_j(x_0)-\frac{1}{2} -s}^2}(\ceiling{c}K+1)}{2\bigparentheses{F_j(x_0)-\frac{1}{2} -s}^2}\right. \nonumber\\
&\qquad\left. +\frac{\exp\bigparentheses{2\bigparentheses{\frac{1}{2} -s - F_{i^*}(x_0)}^2}(\ceiling{c}K+1)}{2\bigparentheses{\frac{1}{2} -s - F_{i^*}(x_0)}^2}  + \frac{\exp(2\epsilon_0^2)(\ceiling{c}K+1)}{\epsilon_0^2} \right)\frac{1}{t^2}.
\end{align}
{\bf Upper-bounding $\bar{R}_T$:}
Based on the upper bound on $\prob{I_t = j } $ in~\eqref{form: Aupperbound}, we now upper-bound the pseudo-regret  $\bar{R}_T$. First note that, taking the expectation, 
\begin{align*}
\bar{R}_T &=\mu^*T- \mathbb{E}[\sum_{t=1}^{T}\mu_{I_t}] = \mathbb{E}[\sum_{\substack{j=1 \\ j\neq i^*}}^{K} \sum_{t =1}^{T} \Delta_j \indicator{I_t =j}] = \sum_{\substack{j=1 \\ j\neq i^*}}^{K} \sum_{t =1}^{T}(\mu_* - \mu_j)\prob{I_t =j},
\end{align*}
which, combined with \eqref{form: Aupperbound}, yields
\begin{align*}
\bar{R}_T &\le A\mu^* +\sum_{j=1, j\neq i^*}^{K}\Delta_jc\log \frac{T}{A} + \sum_{j=1, j\neq i^*}^{K}\Delta_j\left(\frac{10c(\ceiling{c}K+1)}{e(c-10)} +\frac{\exp\bigparentheses{2\bigparentheses{F_j(x_0)-\frac{1}{2} -s}^2}(\ceiling{c}K+1)}{2\bigparentheses{F_j(x_0)-\frac{1}{2} -s}^2}\right. \nonumber\\
&\quad\left. +\frac{\exp\bigparentheses{2\bigparentheses{\frac{1}{2} -s - F_{i^*}(x_0)}^2}(\ceiling{c}K+1)}{2\bigparentheses{\frac{1}{2} -s - F_{i^*}(x_0)}^2}+ \frac{\exp(2\epsilon_0^2)(\ceiling{c}K+1)}{\epsilon_0^2} \right)\frac{1}{A}\\
& \overset{(i)}\le A\mu^* +\sum_{j=1, j\neq i^*}^{K}\Delta_jc\log \frac{T}{A}\\
&\quad + \sum_{j=1, j\neq i^*}^{K}\left(\frac{10c(\ceiling{c}K+1)}{e(c-10)} +\frac{\sqrt{e}(\ceiling{c}K+1)}{2\bigparentheses{F_j(x_0)-\frac{1}{2} -s}^2} +\frac{\sqrt{e}(\ceiling{c}K+1)}{2\bigparentheses{\frac{1}{2} -s - F_{i^*}(x_0)}^2}+ \frac{\sqrt{e}(\ceiling{c}K+1)}{\epsilon_0^2} \right)\Delta_j\frac{1}{A}, 
\end{align*}
where $A= (\ceiling{c}K+1)e$, and ($i$) follows from the fact $\epsilon_0\le \frac{1}{2}$, $\frac{1}{2} -s - F_{i^*}(x_0)< \frac{1}{2}$ and $F_j(x_0)-\frac{1}{2} -s <\frac{1}{2}$.  By exploiting the simple bounds on the constants, we have  

\begin{align*}
\bar{R}_T &\le  c\sum_{j=1, j\neq i^*}^{K}\Delta_j\log T + 2cKe\mu^* + \sum_{j=1, j\neq i^*}^{K}(2+3c)\Delta_j,
\end{align*}
which completes the proof.

\section{Proof of Theorem \ref{th:highprob_meg}}
Our first step is to upper-bound $\indicator{I_t = j}$ for any $j\neq i^*$, and $t\ge\frac{6\ceiling{c}^2K^3}{\delta} := B$. Based on the med-$\epsilon$-greedy algorithm, we obtain

\begin{align}
\indicator{I_t = j} &= \indicator{\textit{explore in round $t$, draw arm j}} + \indicator{\textit{exploit in round $t$, draw arm j}} \nonumber \\
& =\indicator{\textit{explore in round $t$, draw arm j}} + \indicator{\textit{exploit in round $t$},\mathrm{med}_j (t -1) = \mathrm{max}_j \{\mathrm{med}_j (t-1)\}} \nonumber \\
&\le \indicator{\textit{explore in round $t$, draw arm j}} + \indicator{\textit{exploit in round $t$},\mathrm{med}_j (t -1)\ge \mathrm{med}_{i^*} (t-1)}. \nonumber
\end{align}	

Then, we using this fact to upper-bound $T_j(T)$, we obtain
\begin{align}
T_j(T) &= \sum_{t=1}^T \indicator{I_t =j}\nonumber\\
&\le B + \sum_{t=A+1}^T[\indicator{\textit{explore in round $t$, draw arm j}} + \indicator{\textit{exploit in round $t$},\mathrm{med}_j (t -1)\ge \mathrm{med}_{i^*} (t-1)}]. \nonumber\\
\end{align}

We then further obtain 
\begin{align}
R_T &= \sum_{j=1, j\neq i^*}^K \Delta_j T_j(T)\le B\mu^* + \sum_{j=1, j\neq i^*}^K\frac{\Delta_j}{K}\sum_{t=A+1}^T \indicator{\textit{explore in round t}} \nonumber\\
&\qquad + \sum_{j=1, j\neq i^*}^K\Delta_j\sum_{t=B+1}^T\indicator{\textit{exploit in round t}, \mathrm{med}_j(t-1)\ge \mathrm{med}_{i^*}(t-1)}.  \label{for:last}
\end{align}

Now define $T_j^R(t) = \sum_{k=1}^t \indicator{\textit{explore in round k, pull arm }j}$, and $u(t) = \sum_{k=\ceiling{c}K+1}^{t-1} \frac{c}{2k}$. Then we can re-write the third term of \eqref{for:last} in the following way,
\begin{align}
&\indicator{\textit{exploit in round t}, \mathrm{med}_j(t-1)\ge \mathrm{med}_{i^*}(t-1)} \nonumber \\
&\qquad \le \indicator{\mathrm{med}_j(t-1)\ge \mathrm{med}_{i^*}(t-1)}   \le \indicator{\mathrm{med}_j(t-1)\ge x_0}+ \indicator{\mathrm{med}_{i^*}(t-1)\le x_0} \nonumber \\
& \qquad \overset{(i)}\le \indicator{\mathrm{med}_j(t-1)\ge x_0, T_j^R(t-1)> u(t)}+ \indicator{T_j^R(t-1)\le u(t)} \nonumber\\
&\qquad \qquad + \indicator{\mathrm{med}_{i^*}(t-1)\le x_0, T_{i^*}^R(t-1)> u(t)}+ \indicator{T_{i^*}^R(t-1)\le u(t)} \nonumber\\
& \qquad \overset{(ii)}= \sum_{v= \flooring{u(t)}+1}^{t-1} \indicator{\mathrm{med}_j(t-1)\ge x_0, T_j^R(t-1)=v}+ \indicator{T_j^R(t-1)\le u(t)} \nonumber\\
&\qquad \qquad + \sum_{r= \flooring{u(t)}+1}^{t-1}\indicator{\mathrm{med}_{i^*}(t-1)\le x_0, T_{i^*}^R(t-1)=r}+ \indicator{T_{i^*}^R(t-1)\le u(t)} \nonumber\\
& \qquad \overset{(iii)}= \sum_{v= \flooring{u(t)}+1}^{t-1} \indicator{\mathrm{med}_j(t-1)\ge x_0| T_j^R(t-1)=v}+ \indicator{T_j^R(t-1)\le u(t)} \nonumber\\
&\qquad \qquad + \sum_{r= \flooring{u(t)}+1}^{t-1}\indicator{\mathrm{med}_{i^*}(t-1)\le x_0| T_{i^*}^R(t-1)=r}+ \indicator{T_{i^*}^R(t-1)\le u(t)}, \label{for:roundbound}
\end{align}
where $(i)$ follows from the fact $\indicator{\mathcal{U}} \le \indicator{\mathcal{U},\mathcal{T}} + \indicator{\mathcal{T}^c}$ for all events $\mathcal{U}$ and $\mathcal{T}$, $(ii)$ follows from the law of total probability, and $(iii)$ follows from the fact that $\indicator{\mathcal{U}, \mathcal{T}}= \indicator{\mathcal{U}|\mathcal{T}}\indicator{\mathcal{T}}\le \indicator{\mathcal{U}|\mathcal{T}}$ for all event $\mathcal{U}$ and $\mathcal{T}$.

Let $ \{ X_k^j \}_{k=1}^{m} := \{X_{j,k}\}_{k\in Q_j(t), T_j(t) :=m}$ be the first $m$ rewards collected from arm $j$. Then, based on~\eqref{for:roundbound}, we obtain 
\begin{align}
&\indicator{\textit{exploit in round t}, med_j(t-1)\ge med_{i^*}(t-1)} \nonumber\\
& \quad \le \sum_{v= \flooring{u(t)}+1}^{t-1} \indicator{\pquantile{\frac{1}{2}}{\{X_k^{j}\}_{k=1}^v} \ge x_0}+ \indicator{T_j^R(t-1)\le u(t)}\nonumber\\
& \qquad + \sum_{r= \flooring{u(t)}+1}^{t-1}\indicator{\pquantile{\frac{1}{2}}{\{X_k^{i^*}\}_{k=1}^r} \le x_0} + \indicator{T_{i^*}^R(t-1)\le u(t)} \label{for:easydata}.
\end{align}

Substituting \eqref{for:easydata} into \eqref{for:last} yields
\begin{align}
R_T &\le B\mu^* + \sum_{j=1, j\neq i^*}^K\frac{\Delta_j}{K}\sum_{t=B+1}^T \indicator{\textit{explore in round t}}\nonumber\\
& \quad + \sum_{j=1, j\neq i^*}^K\Delta_j\sum_{t=B+1}^T \bigparentheses{\sum_{v= \flooring{u(t)}+1}^{t-1} \indicator{\pquantile{\frac{1}{2}}{\{X_k^{j}\}_{k=1}^v} \ge x_0} + \indicator{T_j^R(t-1)\le u(t)} } \nonumber\\
& \quad +  \sum_{j=1, j\neq i^*}^K\Delta_j\sum_{t=B+1}^T \bigparentheses{\sum_{r= \flooring{u(t)}+1}^{t-1}\indicator{\pquantile{\frac{1}{2}}{\{X_k^{i^*}\}_{k=1}^r} \le x_0} + \indicator{T_{i^*}^R(t-1)\le u(t )}} \nonumber\\
& \overset{(i)}= B\mu^* + \sum_{j=1, j\neq i^*}^K\frac{\Delta_j}{K}\sum_{t=B+1}^T \indicator{\textit{explore in round t}}\nonumber\\
& \quad + \sum_{j=1, j\neq i^*}^K\Delta_j\sum_{t=B+1}^T \indicator{T_j^R(t-1)\le u(t)} + \indicator{T_{i^*}^R(t-1)\le u(t)} \nonumber\\
& \quad + \sum_{j=1, j\neq i^*}^K\Delta_j\sum_{t=B+1}^T \sum_{r= \flooring{u(t)}+1}^{t-1} \bigparentheses{\indicator{\pquantile{\frac{1}{2}}{\{X_k^{j}\}_{k=1}^r} \ge x_0} + \indicator{\pquantile{\frac{1}{2}}{\{X_k^{i^*}\}_{k=1}^r} \le x_0}},\label{for:keyform}
\end{align}
where $(i)$ follows by rearranging the terms.

Define event $\mathcal{K}$, 
\begin{align*}
\mathcal{K} &=\left\{ \sum_{t=B+1}^T\indicator{\textit{explore in round t}} \le 2cK\log(\frac{T}{B})\right\} \bigcap \bigparentheses{\bigcap_{j=1}^K\bigcap_{t=B+1}^T \left\{\indicator{T_j^R(t-1)\le u(t )}\le  \exp\bigparentheses{-\frac{u(t)}{10}}\right\}} \\
&\quad \bigcap \bigcap_{j=1, j\neq i^*}^K\bigcap_{v=u(B)}^T \left\{ \indicator{\pquantile{\frac{1}{2}}{\{X_k^{j}\}_{k=1}^v} \ge x_0} \le \exp\bigparentheses{-v[F_j(x_0) - \frac{1}{2} - s]^2} \right\} \\
&\quad \bigcap \bigcap_{r=u(A)}^T \left\{\indicator{\pquantile{\frac{1}{2}}{\{X_k^{i^*}\}_{k=1}^r} \ge x_0} \le \exp\bigparentheses{-v[\frac{1}{2} - s - F_{i^*}(x_0)]^2} \right\}.
\end{align*}

Under the event $\mathcal{K}$, we obtain 
\begin{align}
R_T &\le B\mu^* + \sum_{j=1, j\neq i^*}^K 2c\Delta_j \log \frac{T}{B}+\underbrace{\sum_{j=1, j\neq i^*}^K2\Delta_j\sum_{t=B+1}^T \exp \bigparentheses{-\frac{u(t)}{10}}}_{(a)} \nonumber\\
& \quad + \underbrace{\sum_{j=1, j\neq i^*}^K\Delta_j\sum_{t=B+1}^T \sum_{r= \flooring{u(t)}+1}^{t-1}  \exp\bigparentheses{-r[F_j(x_0) - \frac{1}{2} - s]^2} + \exp\bigparentheses{-r[\frac{1}{2} - s - F_{i^*}(x_0)]^2}}_{(b)}. \label{for:midRT}
\end{align}

For the term $(a)$ in \eqref{for:midRT}, we can upper-bound it by
\begin{align*}
\sum_{j=1, j\neq i^*}^K2\Delta_j\sum_{t=B+1}^T \exp \bigparentheses{-\frac{u(t)}{10}}&\le \sum_{j=1, j\neq i^*}^K2\Delta_j\sum_{t=B+1}^T\exp \bigparentheses{-\frac{c}{20}\log \frac{t}{\ceiling{c}K+1}}\\
& \overset{(i)}=\sum_{j=1, j\neq i^*}^K2\Delta_j\sum_{t=B+1}^T \bigparentheses{\frac{\ceiling{c}K+1}{t}}^\frac{c}{20}  \\
&\overset{(ii)}\le \sum_{j=1, j\neq i^*}^K2\Delta_j\sum_{t=B+1}^T \bigparentheses{\frac{\ceiling{c}K+1}{t}}^2\le \sum_{j=1, j\neq i^*}^K \frac{2\Delta_j(\ceiling{c}K+1)^2}{B},
\end{align*}
where $(i)$ follows from the fact that $u(t)\ge \frac{c}{2} \frac{t}{\ceiling{c}K+1}$ and $(ii)$ follows from that $c\ge 40$.

For the term $(b)$ in \eqref{for:midRT}, we can upper-bound it by 
\begin{align*}
&\sum_{j=1, j\neq i^*}^K\Delta_j\sum_{t=B+1}^T \sum_{r= \flooring{u(t)}+1}^{t-1}  \exp\bigparentheses{-r[F_j(x_0) - \frac{1}{2} - s]^2} + \exp\bigparentheses{-r[\frac{1}{2} - s - F_{i^*}(x_0)]^2} \\
&\quad \overset{(i)}\le \sum_{j=1, j\neq i^*}^K\Delta_j\sum_{t=B+1}^T \frac{1}{[F_j(x_0) - \frac{1}{2} - s]^2} \exp\bigparentheses{-[F_j(x_0) - \frac{1}{2} - s]^2(u(t)-1)} \\
& \qquad \sum_{j=1, j\neq i^*}^K\Delta_j\sum_{t=B+1}^T \frac{1}{[\frac{1}{2} - s - F_{i^*}(x_0)]^2} \exp\bigparentheses{-[\frac{1}{2} - s - F_{i^*}(x_0)]^2(u(t)-1)}\\
&\quad\overset{(ii)} \le \sum_{j=1, j\neq i^*}^K c\Delta_j \sum_{t=B+1}^T \exp \bigparentheses{-\frac{4u(t)}{c}} \overset{(iii)}\le \sum_{j=1, j\neq i^*}^K c\Delta_j \sum_{t=B+1}^T \left(\frac{\ceiling{c}K+1}{t}\right)^2\\
&\quad\le \sum_{j=1, j\neq i^*}^K c\Delta_j(\ceiling{c}K+1)^2\frac{1}{B},
\end{align*}
where $(i)$ follows from the fact that $\sum_{t=x+1}^T \exp(-Zt)\le \frac{\exp(-Zx)}{Z}$ holds for all $x\in \mathbb{N}$, $T\in\mathbb{N}$, and $Z\in \mathbb{R}^+$, $(ii)$ follows from the fact that $c\ge \frac{4}{[\frac{1}{2} - s - F_{i^*}(x_0)]^2}, \frac{4}{[F_j(x_0) - \frac{1}{2} - s]^2}$ and $(iii)$ comes from the fact that $u(t)\ge \frac{c}{2} \log\frac{t}{\ceiling{c}K+1}$ .

Hence, we derive the upper-bound of $R_T$ as follows.
\begin{align*}
R_T &\le B\mu^* + \sum_{j=1, j\neq i^*}^K 2c\Delta_j \log \frac{T}{B}+ \sum_{j=1, j\neq i^*}^K \frac{2\Delta_j(\ceiling{c}K+1)^2}{B} + \sum_{j=1, j\neq i^*}^K c\Delta_j(\ceiling{c}K+1)^2 \frac{1}{B} \\
&\overset{(i)}\le   \frac{6\ceiling{c}^2K^3}{\delta}\mu^* + \sum_{j=1, j\neq i^*}^K2c\Delta_j \log T + \sum_{j=1, j\neq i^*}^K 2c\Delta_j,
\end{align*}
where $(i)$ follows from our assumptions about parameters.

The final step is to derive an lower-bound of probability of $\mathcal{K}$,

In our paper, Note that lemma 1 shows that if $N = \ceiling{\frac{1}{2\epsilon_0^2} \log \frac{K}{\epsilon_0^2\delta}}+1$, event $\mathcal{E} = \{ s_j(m_j) \le \rho + \epsilon_0 : m_j \ge N, j = 1,... K \}$ occurs with the probability at least $1- \frac{\delta}{2}$. Then we have

\begin{align}
\prob{\mathcal{K}^c} &= \prob{\mathcal{K}^c, \mathcal{E}} + \prob{\mathcal{K}^c, \mathcal{E}^c}\le \prob{\mathcal{K}^c| \mathcal{E}} + \prob{\mathcal{E}^c}\le \prob{\mathcal{K}^c| \mathcal{E}} + \frac{\delta}{2}.\label{for:prob}
\end{align}
Due to the fact $\prob{\mathcal{U}\cup \mathcal{T}} \le \prob{\mathcal{U}}+ \prob{\mathcal{T}}$ for any events $\mathcal{U}$ and $\mathcal{T}$, we obtain
\begin{align*}
\prob{\mathcal{K}^c | \mathcal{E}}&\le \sum_{j=1}^K\sum_{t=B+1}^T \prob{\indicator{T_j^R(t-1)\le u(t )}\ge  \exp\bigparentheses{-\frac{u(t)}{10}}}  + \prob{\sum_{t=B+1}^T\indicator{\textit{explore in round t}} \ge 2cK\log(\frac{T}{B})} \\
&\quad + \sum_{j=1, j\neq i^*}^K\sum_{v=u(B)}^T \prob{\indicator{\pquantile{\frac{1}{2}}{\{X_k^{j}\}_{k=1}^v} \ge x_0} \ge \exp\bigparentheses{-v[F_j(x_0) - \frac{1}{2} - s]^2} |  \mathcal{E}} \\
&\quad + \sum_{r=u(B)}^T \prob{\indicator{\pquantile{\frac{1}{2}}{\{X_k^{i^*}\}_{k=1}^r} \le x_0} \ge \exp\bigparentheses{-v[\frac{1}{2} - s - F_{i^*}(x_0)]^2} |  \mathcal{E}},
\end{align*}
where, $T_j^R$ and $\indicator{\textit{explore in round t}}$ are independent with event $\mathcal{E}$. Thus, the event $\mathcal{E}$ can be removed from the conditioning.

Applying the variant of Bernstein inequality~\cite[Fact 2]{auer2002finite} to $T_j^R(t-1)$ yields,
\begin{align*}
\expectation{\indicator{T_j^R(t-1)\le u(t )}}= \prob{T_j^R(t-1)\le u(t )} \le \exp\bigparentheses{-\frac{u(t)}{5}}.
\end{align*}
Moreover, applying Markov inequality, we obtain
\begin{align*}
\prob{\indicator{T_j^R(t-1)\le u(t )}\ge  \exp\bigparentheses{-\frac{u(t)}{10}}} &\le\frac{\expectation{\indicator{T_j^R(t-1)\le u(t )}}}{\exp\bigparentheses{-\frac{u(t)}{10}}}\exp\bigparentheses {-\frac{c}{20} \log \frac{t}{\ceiling{c}K}}&\le \bigparentheses{\frac{\ceiling{c}K}{t}}^{\frac{c}{20}}.
\end{align*}

Using the fact that  $c\ge 40$, we obtain
\begin{align}
\sum_{j=1}^K\sum_{t=B+1}^T \prob{\indicator{T_j^R(t-1)\le u(t )}\ge  \exp\bigparentheses{-\frac{u(t)}{10}}} & \le \sum_{j=1}^K\sum_{t=B+1}^T \bigparentheses{\frac{\ceiling{c}K}{t}}^2\le K \frac{(\ceiling{c}K)^2}{B}\overset{(i)}\le \frac{\delta}{6}, \label{for:first}
\end{align}
where $(i)$ follows from the fact $B \ge \frac{6\ceiling{c}^2K^3}{\delta}$.

Applying the variant of Bernstein inequality~\cite[Fact 2]{auer2002finite} to $\sum_{t=A+1}^T\indicator{\textit{explore in round t}}$, we obtain
\begin{align}
\prob{\sum_{t=B+1}^T\indicator{\textit{explore in round t}}\ge 2cK\log(\frac{T}{B})}&\le \exp\bigparentheses{\frac{cK}{3} \log \frac{B}{T}} = \bigparentheses{\frac{B}{T}}^{\frac{cK}{3}} \overset{(i)}\le \frac{\delta}{6} \label{for:second}
\end{align}
where $(i)$ holds for $T\ge B (\frac{6}{\delta})^{\frac{cK}{3}}$.

Since $u(B) \ge \frac{c}{2}\log\frac{B}{\ceiling{c}K+1} \ge N$, for all $v\ge u(B)\ge N$, we have 
\begin{align*}
\expectation{\indicator{\pquantile{\frac{1}{2}}{\{X_k^{j}\}_{k=1}^v} \ge x_0}| \mathcal{E}} \le \exp\bigparentheses{-2v[F_j(x_0) - \frac{1}{2} - s]^2},
\end{align*}
and 
\begin{align*}
\expectation{\indicator{\pquantile{\frac{1}{2}}{\{X_k^{i^*}\}_{k=1}^v} \le x_0}| \mathcal{E}} \le \exp\bigparentheses{-2v[\frac{1}{2} - s- F_{i^*}(x_0)]^2},
\end{align*}
Thus using Markov inequality we obtain,
\begin{align}
&\sum_{j=1, j\neq i^*}^K\sum_{v=u(B)}^T \prob{\indicator{\pquantile{\frac{1}{2}}{\{X_k^{j}\}_{k=1}^v} \ge x_0} \ge \exp\bigparentheses{-v[F_j(x_0) - \frac{1}{2} - s]^2} |  \mathcal{E}} \nonumber \\	
&\quad + \sum_{r=u(B)}^T \prob{\indicator{\pquantile{\frac{1}{2}}{\{X_k^{i^*}\}_{k=1}^r} \le x_0} \ge \exp\bigparentheses{-v[\frac{1}{2} - s - F_{i^*}(x_0)]^2} |  \mathcal{E}} \nonumber \\
& \quad\le \sum_{j=1, j\neq i^*}^K\sum_{v=u(B)}^T \frac{\expectation{\indicator{\pquantile{\frac{1}{2}}{\{X_k^{j}\}_{k=1}^v} \ge x_0}| \mathcal{E}}}{\exp\bigparentheses{-v[F_j(x_0) - \frac{1}{2} - s]^2}}\nonumber\\
&\qquad + \sum_{v=u(B)}^T \frac{\expectation{\indicator{\pquantile{\frac{1}{2}}{\{X_k^{i^*}\}_{k=1}^v} \le x_0}| \mathcal{E}}}{\exp\bigparentheses{-v[\frac{1}{2} - s- F_{i^*}(x_0)]^2}}\nonumber\\
& \quad \le \sum_{j=1, j\neq i^*}^K\sum_{v=u(B)}^T  \exp\bigparentheses{-v[F_j(x_0) - \frac{1}{2} - s]^2}+  \sum_{v=u(B)}^T \exp\bigparentheses{-v[\frac{1}{2} - s- F_{i^*}(x_0)]^2} \nonumber \\
& \quad \overset{(i)}\le \sum_{j=1}^K\sum_{v=u(B)}^T \exp\bigparentheses{-\frac{2v}{c}}\le K \frac{c}{2} \exp\bigparentheses{-\frac{2u(B)}{c}} \le \frac{cK (\ceiling{c}K+1)}{2B} < \frac{c^2K^2}{B}\nonumber \\
&\quad\overset{(ii)}\le \frac{\delta}{6}, \label{for:third}
\end{align}
where $(i)$ follows from the fact that $c \ge \frac{2}{[\frac{1}{2} - s- F_{i^*}(x_0)]^2}, \frac{2}{[F_j(x_0) - \frac{1}{2} - s]^2}$, and $(ii)$ follows from the fact $B\ge \frac{6\ceiling{c}^2K^3}{\delta}$.

Substituting \eqref{for:first}, \eqref{for:second} and \eqref{for:third} into \eqref{for:prob}, we have
$$\prob{\mathcal{K}^c}\le \delta,$$

which completes the proof.

\section{Proof of \Cref{cor: gau_meg}}

First, we show that Gaussian distributions meet Assumption \ref{ass:meg}, and specify the constant $x_0$, and $s$. Then, we apply \Cref{th:highprob_meg} and \Cref{th: avg_meg} to complete our proof.

Let $s =\Phi(\frac{\Delta_{min}}{4\sigma}) - \frac{1}{2}$, and $x_0 = \mu^* - \frac{\Delta_{min}}{2}$. Then, 
$$F_{i*}(x_0) = \Phi\left(\frac{x_0- \mu^*}{\sigma}\right)  = \Phi\left(-\frac{\Delta_{min}}{2\sigma} \right)=1 - \Phi\left(\frac{\Delta_{min}}{2\sigma} \right) .$$

Clearly, $F_{i*}(x_0) <\frac{1}{2} -s$. Moreover, for any $j\neq i^*$,
$$F_j(x_0) = \Phi\left(\frac{x_0- \mu_j}{\sigma}\right) = \Phi\left(\frac{\Delta_j -\frac{\Delta_{min}}{2}}{\sigma}\right)\ge \Phi\left(\frac{\Delta_{min}}{2\sigma}\right).$$ 

It is also clear that $F_j(x_0) > s + \frac{1}{2}$. Hence, the given Gaussian distributions meet Assumption\ref{ass:meg}.

For all $j \neq i^*$, $$\frac{1}{(F_j(x_0) - \frac{1}{2} - s)^2} \le \frac{1}{\Phi(\frac{\Delta_{min}}{2\sigma}) -\Phi(\frac{\Delta_{min}}{4\sigma})}.$$

Therefore, if $\rho<\Phi(\frac{\Delta_{min}}{4\sigma}) - \frac{1}{2}$,
$$c > \max\left\{10, \frac{1}{\left(\Phi(\frac{\Delta_{min}}{2\sigma})- \Phi(\frac{\Delta_{min}}{4\sigma})\right)^2}, \frac{1}{(\Phi(\frac{\Delta_{min}}{4\sigma}) - \frac{1}{2}- \rho)^2}\right\}.$$
Applying \Cref{th:highprob_meg} and \Cref{th: avg_meg} completes the proof.

\end{document}